\newtheorem{remark}{Remark}[section]
\newtheorem{theorem}{Theorem}[section]
\newtheorem*{theorem*}{Theorem}
\newtheorem{definition}{Definition}[section]
\newtheorem{corollary}{Corollary}[theorem]
\newtheorem{lemma}[theorem]{Lemma}
\newtheorem{proposition}[theorem]{Proposition}
\newcommand{\GMM}{\mbox{GMM}(\mu^*,\pi)}
\newcommand{\Ei}{\mathbb{E}_i}
\newcommand{\U}{\mathcal{U}_{\lambda}}
\newcommand{\E}{\mathbb{E}}
\begin{document}

\begin{frontmatter}

\title{Improved Convergence Guarantees for Learning Gaussian Mixture Models	by EM and Gradient EM \thanks{	This research was partially supported by the Israeli Council for Higher Education (CHE) via the Weizmann Data Science Research Center. This research was also partially supported by a research grant from the Estate of Tully and Michele Plesser.}}

\runtitle{Improved Convergence Guarantees for EM}
\begin{aug}
	\author{\fnms{Nimrod} \snm{Segol}\ead[label=e1]{nimrod.segol@weizmann.ac.il}}
	\and
	\author{\fnms{Boaz} \snm{Nadler}\ead[label=e2]{boaz.nadler@weizmann.ac.il}}
	
	\address{Department of Computer Science and Applied Mathematics\\
		Weizmann Institute of Science\\
		Rehovot, Israel\\
		\printead{e1,e2}}
\runauthor{Segol and Nadler}
\end{aug}

\begin{abstract}
	 We consider the problem of estimating the parameters a Gaussian Mixture Model with $K$ components of known weights, all with an identity covariance matrix. We make two contributions.
	First, at the population level, we present a sharper analysis of the local convergence of EM and gradient EM, compared to previous works. Assuming a separation of $\Omega(\sqrt{\log K})$, we prove convergence of both methods 
	to the global optima from an initialization region larger than those of previous works. 
	Specifically, the initial guess of each component can be as far as (almost) half its distance to the nearest Gaussian. This is essentially the largest possible  contraction region. 
	Our second contribution are improved sample size requirements for accurate estimation by EM and gradient EM. In previous works, the required number of samples had a quadratic dependence on the maximal separation between the $K$ components, and the resulting error estimate increased linearly with this maximal separation. In this manuscript we show that both quantities depend only logarithmically on the maximal separation. 
\end{abstract}

\begin{keyword}[class=MSC]
\kwd[Primary ]{}
\kwd{62F10 }
\kwd[; secondary ]{62F99 }
\end{keyword}

\begin{keyword}
\kwd{EM algorithm}
\kwd{ Gaussian mixture models}
\end{keyword}



\end{frontmatter}


\section{INTRODUCTION}

Gaussian mixture models (GMMs) are a widely used statistical model going back to Pearson  \cite{pearson1894contributions}. 
In a GMM each sample $x\in \mathbb{R}^{d}$ is drawn from one of   $K$ components according to mixing weights $\pi_1, \ldots ,\pi_K>0$ with $\sum_{i=1}^K\pi_i=1$. Each component follows a Gaussian distribution with  mean $\mu_i^*\in \mathbb{R}^d$ and covariance $\Sigma_i\in \mathbb{R}^{d\times d}$.
In this work, we focus on the important special case of $K$ spherical Gaussians with identity covariance matrix, with a corresponding density function
\begin{equation}
f_X(x)=\sum_{i=1}^K \frac{\pi_i}{(2\pi)^{\frac{d}{2}}} e^{-\frac{\|x-\mu_i^*\|^2}{2}}. 
\label{Eq:GMMdensity}
\end{equation} 
For simplicity, as in \cite{yan2017convergence,zhao2020statistical}, we assume the weights $\pi_i$ are known. 

Given $n$ i.i.d. samples from the distribution \eqref{Eq:GMMdensity}, a fundamental problem is to estimate the vectors $\mu_i^*$ of the $K$ components.
Beyond the number of components $K$ and the dimension $d$, 
the difficulty of this problem is characterized by the following key quantities: The smallest and largest separation between the 
cluster centers,  
\begin{equation}
R_{\min}= \min_{i\ne j}\|\mu_i^*- \mu_j^*\|,\quad R_{\max}= \max_{i\ne j}\|\mu_i^*- \mu_j^*\|,  \label{R_min_R_max_def}
\end{equation} 
the minimal and maximal weights and their ratio,  \begin{equation}
\pi_{\min}= \min_{i\in[K]}\pi_i, \quad \pi_{\max}= \max_{i\in [K]}\pi_i, \quad \theta = \frac{\pi_{\max}}{\pi_{\min}}. \label{Pi_min_pi_max}
\end{equation}

In principle, one could estimate $\mu_i^*$ 
by maximizing the likelihood of the observed data. However, as the log-likelihood is non-concave, this problem is computationally challenging.
A popular alternative approach is based on 
the EM algorithm \citep{dempster1977maximum}, and  variants thereof, such as gradient EM.
These iterative methods require an initial guess $(\mu_1,\ldots,\mu_K)$ of the $K$ cluster centers. 
Classical  results show that regardless of the initial guess, the values of the likelihood function after each EM iteration are non decreasing. Furthermore, under fairly general conditions, the EM algorithm converges to a stationary point or a local optima \citep{xu1996convergence, wu1983convergence}. The success of these methods to converge to an accurate solution depend critically on the accuracy of the initial guess \citep{jin2016local}.

In this work we study the ability of the popular EM and gradient EM algorithms to 
accurately estimate the parameters of the GMM in 
(\ref{Eq:GMMdensity}). 
%
Two quantities of particular interest are: (i) 
the size of the initialization region and the minimal separation that guarantee convergence to the global optima. Namely, how small can $R_{\min}$ be and how large can $\|\mu_i-\mu_i^*\|$, and still have convergence of EM to the global optima in the population setting; 
and (ii) the required sample size, 
and its dependence on the problem parameters, that guarantees EM to find accurate solutions, with high probability.  

We make the following contributions:
First, we present an improved analysis of the local convergence of EM and gradient EM, at the population level, assuming an infinite number of samples.
In Theorems \ref{main_EM_population_theorem} and \ref{grad_EM_population_theorem} we prove  their
convergence under the largest possible initialization region, while requiring a separation  $R_{\min}=\Omega\left(\sqrt{\log  K}\right)$.
For example, consider the case of equal weights $\pi_i=1/K$, and an initial guess
that satisfies $\|\mu_i-\mu_i^*\|\leq \lambda \min_{j\neq i}\|\mu_j^*-\mu_i^*\|$ for all $i$, 
with $\lambda < 1/2$. 
Then, a separation 
$R_{\min} \ge C(\lambda)  \sqrt{\log K}$, with an explicit $C(\lambda)$, suffices to ensure that the population EM and gradient EM algorithms converge to the true means at a linear rate. 

Let us compare our results to several recent works that derived convergence guarantees for EM and gradient EM. \cite{zhao2020statistical} and  \cite{yan2017convergence} proved local convergence to the global optima under a much larger minimal separation of $R_{\min}\ge C \sqrt{\min(d,K)\log K}$. In addition, the requirement on the initial estimates had a dependence on the maximal separation, $\|\mu_i-\mu_i^*\|\le \frac12R_{\min}-C_1\sqrt{\min(d,K)\log \max(R_{\max}, K^3)}$ for a universal constant $C_1$. These results were significantly
improved by \cite{kwon2020algorithm}, who  proved the local 
convergence of the EM algorithm for the more general case of spherical Gaussians with unknown weights and variances.
They required a far less restrictive minimal separation $R_{\min}\ge C \sqrt{\log K}$, with a constant $C\ge 64$, and their initialization was restricted to $\lambda<\frac1{16}$. We should note that no particular effort was made to optimize these
constants. 
In comparison to these works, we allow the largest
possible 
initialization region $\lambda<\frac{1}2$, with no dependence on $R_{\max}$. Also, for small values $\lambda\leq 1/16$, our resulting constant $C$ is roughly 6 times smaller that
that of \cite{kwon2020algorithm}.

Our second contribution concerns the required sample size to ensure accurate estimation by the EM and gradient EM algorithms. 
Recently, \cite{kwon2020algorithm} proved that with a number of samples $n=\tilde{\Omega}(d/\pi_{\min})$, a sample splitting variant of EM is statistically optimal. In this variant, the $n$ samples are split into $B$ distinct batches, with
each EM iteration using a separate batch. 
In contrast, for the standard EM and gradient EM algorithms,
weaker results have been established so far. Currently, the best known sample requirements for 
EM are $n = \tilde{\Omega}(K^3dR_{\max}^2/R_{\min}^2)$,
whereas for gradient EM, 
$ n = \tilde{\Omega}(K^6dR_{\max}^6/R_{\min}^2)$. In addition, the bounds for the resulting errors increase linearly with $R_{\max}$, see
\citep{yan2017convergence,zhao2020statistical}.
Note that in these two results, the 
required number of samples increases at least quadratically with the maximal separation
between clusters, even though increasing $R_{\max}$ should make the problem easier. 
In Theorems 
\ref{sample_em_main_theorem} and \ref{sample_grad_em_main_theorem}, 
we prove that for an initialization region with parameter $\lambda$  strictly smaller than half, 
the EM and gradient EM algorithms yield accurate estimates with sample size $\tilde{\Omega}(K^3d)$. In particular, both our sample size requirements 
and the bounds on the error of the EM and gradient EM have only a logarithmic dependence on $R_{\max}$. 



Our results on the  initialization region and minimal separation stem from a careful analysis of the weights in the  EM update and their effect on the estimated cluster centers. Similarly to \cite{kwon2020algorithm}, we upper bound the expectation of the $i$-th weight when the data is drawn from a different component $j\ne i$ and show that it is \textit{exponentially small} in the distance between the centers of the $i$ and $j$ components. 
We make use of the fact that all Gaussians have the same covariance to reduce the expectation to  one dimension  and directly upper bound the one dimensional integral. This allows us to derive a {sharper} bound  compared to \cite{kwon2020algorithm}  from which we obtain a larger contraction region for the population EM and gradient EM algorithms. 
Our analysis of the finite sample behavior of EM and
gradient EM follows the general strategy of 
\cite{zhao2020statistical}. Our improved results
rely on tighter bounds on the sub-Gaussian norm of the weights in the EM update which do not depend on the distance between the clusters.


\subsection{Previous work}
Over the past decades, several approaches to estimate the parameters of Gaussian mixture models were proposed. In addition, many works derived theoretical guarantees for these methods as well as information-theoretic lower bounds on the number of samples required for accurate estimation. 
Significant efforts were made in understanding
whether GMMs can be learned efficiently both
from a computational perspective, namely in polynomial run time, and from a statistical view, namely with a number of samples polynomial in the problem parameters.

Method of moments approaches  \citep{kalai2010efficiently,moitra2010settling, hardt2015tight} can accurately estimate 
the parameters of general GMMs with $R_{\min}$ arbitrarily small, at the cost of sample complexity, and thus also run time, that is exponential in the number of clusters. 
\cite{hsu2013learning} showed that a method of moments type algorithm can recover the parameters of spherical GMMs with arbitrarily close cluster centers in polynomial time, under the additional assumption that the components centers are affinely independent. This assumption implies that $d\geq K$. 

Methods based on dimensionality  reduction  \citep{dasgupta1999learning, achlioptas2005spectral,arora2005learning,kannan2008spectral,vempala2004spectral} can accurately estimate the parameters of a GMM in polynomial time in the dimension and number of clusters, under conditions on the separation of the clusters' centers. In particular, \cite{vempala2004spectral}
proved that accurate recovery is possible with a 
minimal separation of $R_{\min}= \Omega(\min(K,d)^{\frac{1}{4}})$.


In  general,  it is not possible to learn the parameters of a GMM with number of samples that is polynomial in the number of clusters,
see \citep{moitra2010settling} for an explicit example. \cite{regev2017learning} showed that for any function $\gamma(K)= o(\sqrt{\log K})$  one can find two spherical GMMs, both with $R_{\min}= \gamma(K)$ such that no algorithm with polynomial sample complexity can distinguish between them. \cite{regev2017learning} also presented  a variant of the EM algorithm
that provably learns the parameters of a GMM with separation $\Omega( \sqrt{\log K})$, with polynomial sample complexity, but run time exponential in the number of components.


More closely related to our manuscript, are several works that studied the ability of EM and variants thereof to accurately estimate the parameters of a GMM. 
\cite{JMLR:v8:dasgupta07a} showed that 
with a separation of $\Omega(d^{\frac{1}{4}})$,
a two-round variant of EM produces accurate estimates
of the cluster centers. 
A significant advance was made by \cite{balakrishnan2017statistical}, who developed new techniques to analyze the local convergence of EM for rather general latent variable models. 
In particular, for a GMM with $K=2$ components of equal weights, 
they proved that the EM algorithm converges locally at a linear rate provided that the distance between the components is at least some universal constant. These results were extended  in \cite{xu2016global} and \cite{daskalakis2017ten} where a full description of the initialization region for which the population EM algorithm learns a mixture of any two equally weighted Gaussians was given. 
As already mentioned above, the three works that are directly related to our work, and to which we compare in detail in Section \ref{sec:Main_Results} are \cite{yan2017convergence}, \cite{zhao2020statistical} and 
\cite{kwon2020algorithm}.

\section{PROBLEM SETUP AND NOTATIONS}

\subsection{Notations} \label{notatins}

We write $X\sim \GMM$ for a random variable with density given 
by Eq. (\ref{Eq:GMMdensity}). 
The distance between cluster means is denoted by $R_{ij}= \|\mu_i^*-\mu_j^*\|$. We set $R_i= \min_{j\ne i}R_{ij}$.
Expectation of a function $f(X)$ with respect to $X$ is denoted by $\mathbb{E}_X[f(X)]$, or when clear from context simply by  $\mathbb{E}[f(X)]$. For simplicity of notation, 
we shall write $\Ei[f(X)]=\mathbb{E}_{X\sim \mathcal N(\mu_i^*,I_d)}[f(X)]$.    
For a vector $v$ we denote by $\|v\|$ its Euclidean norm. 
For a matrix $A$, we denote its operator norm by $\|A\|_{op}=\max_{\|x\|=1}\|Ax\|$. 
Finally, we denote by 
$\mu= (\mu_1^\top, \ldots, \mu_K^\top)^\top\in \mathbb{R}^{Kd}$ the concatenation of $\mu_1,\ldots, \mu_K\in\mathbb{R}^d$. 

As in previous works, we consider the following error measure for the quality of an estimate $\mu$ of the true means,  
\begin{equation}
E(\mu)= \max_{i\in [K]}\|\mu_i-\mu_i^*\|.    
\nonumber
\end{equation}
We will see that in the population case we can restrict our analysis to the space spanned by the $K$ true cluster means and the $K$ cluster estimates. It will therefore be convenient to define $
d_0= \min(d,2K)$.
For any $0<\lambda<\frac12$ we define the region  \begin{equation}
\mathcal{U}_{\lambda}=\left\{\mu\in \mathbb{R}^{Kd}:\|\mu_i-\mu_i^*\| \le \lambda R_i 
\quad \forall i\in[K]\right\}.
\label{U}
\end{equation}
For future use we define the following function
which will play a key role in our analysis, 
\begin{equation}
c(\lambda)=\frac18\left(\frac{1-2\lambda}{1+2\lambda}\right)^2.
\label{c(lambada)}
\end{equation}


\subsection{Population and Sample EM}


Given an estimate $(\mu_1, \ldots, \mu_K)$ of the $K$ centers, for any $x\in \mathbb{R}^d$ and $i\in [K]$ let
\begin{equation}
w_i(x,\mu) 
=  \frac{\pi_ie^{-\frac{\|x-\mu_i\|^2}{2}}} {\sum_{j=1}^K \pi_je^{-\frac{\|x-\mu_j\|^2}{2}}}. \label{Eq:w}
\end{equation} 
The population EM update, denoted by $\mu^+=(\mu^+_1, \ldots, \mu_K^+)$ is given by 
\begin{equation}
\mu^+_i=\frac{\mathbb{E}_X[w_i(X,\mu)X]}{\mathbb{E}_X[w_i(X,\mu)]}, \quad \forall i\in [K].
\label{populationEmupdate}   
\end{equation}  
The population gradient EM update with a step size $s>0$ is defined by  
\begin{equation}
\mu_i^+ 
=  \mu_i+s\mathbb{E}_X\left[w_i(X,\mu)(X-\mu_i)\right] , \quad \forall i\in [K]. 
\label{gradient_EM_update}
\end{equation}

Given an observed set of $n$ samples $X_1, \ldots, X_n\sim X$, the sample EM and sample gradient EM updates follow by replacing the expectations in \eqref{populationEmupdate} and \eqref{gradient_EM_update} 
with their empirical counterparts. For the EM, the update is 
\begin{equation}
\mu^+_i = \frac{\sum_{\ell=1}^nw_i(X_\ell,\mu)X_{\ell}}{\sum_{\ell=1}^nw_i(X_\ell,\mu)}, \quad \forall i\in [K]
\label{eq:sample_em_update}
\end{equation}
and  for the gradient EM \begin{equation}
\mu_i^+=  \mu_i+s \frac1n \sum_{\ell=1}^n w_i(X_{\ell},\mu)(X_{\ell}-\mu_i), \quad \forall i\in [K]. 
\label{eq:sample_grad_em_update}
\end{equation}

In this work, we study the convergence of EM and gradient EM, both in the population setting and with a finite number of samples. In particular we are interested in sufficient conditions on the initialization and on the separation of the GMM components that ensure convergence to accurate solutions. 

\section{LOCAL CONVERGENCE OF EM AND GRADIENT EM}
\label{sec:Main_Results}
\subsection{Population EM} 

As in previous works, we first study the convergence of EM in the population case and then build upon this analysis to study the finite sample setting.
Informally, our main result in this section is that for any fixed $\lambda\in (0,\frac12)$ and an initial estimate $\mu\in \U$, there exists a constant $C(\lambda)$ such that for any mixture with $R_{\min} \gtrsim C(\lambda)\sqrt{\log\frac{1}{\pi_{\min}} }$ the estimation error of a single population EM update \eqref{populationEmupdate} decreases by a multiplicative factor strictly less than $1$. This, in turn, implies convergence of the population EM to the global optimal solution $\mu^*$.
Formally, our result is stated in the following theorem. 

\begin{theorem} \label{main_EM_population_theorem}	Set $\lambda\in (0,\frac12)$. Let $X\sim\GMM$ with 
	\begin{equation} 
	\label{minimal_separation}
	R_{\min} \ge \sqrt{\frac{4}{c\left(\lambda\right)}\log\frac{32\left(K-1\right)\sqrt{14\left(1+\theta\right)}}{3\pi_{\min}c(\lambda)}}
	\end{equation}
	where  $c(\lambda)$ and $\theta$ are as defined in \eqref{c(lambada)} and \eqref{Pi_min_pi_max}, respectively. Then for any $\mu\in \U$ it holds that $E(\mu^t)\le \frac1{2^t} E(\mu)$ where $\mu^t$ is the $t$-th iterate of the  population EM update \eqref{populationEmupdate}  initialized at $\mu$. 
\end{theorem}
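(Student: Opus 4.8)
The goal is to show that a single population EM update contracts the error $E(\mu)$ by at least a factor of $1/2$ whenever $\mu \in \U$ and the separation condition \eqref{minimal_separation} holds; iterating then gives $E(\mu^t) \le 2^{-t} E(\mu)$. The plan is to fix an index $i$ and bound $\|\mu_i^+ - \mu_i^*\|$ by $\tfrac12 \|\mu_i - \mu_i^*\|$, uniformly over $i$. Write the numerator of \eqref{populationEmupdate} as $\mathbb{E}_X[w_i(X,\mu)X] = \sum_{j=1}^K \pi_j \Ei[\cdots]$ — actually $= \sum_{j=1}^K \pi_j \,\mathbb{E}_{j}[w_i(X,\mu)X]$, splitting the mixture into its $K$ Gaussian components. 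The dominant term is $j=i$, and I would rewrite
\[
\mu_i^+ - \mu_i^* = \frac{\sum_{j=1}^K \pi_j\, \mathbb{E}_j[w_i(X,\mu)(X-\mu_i^*)]}{\sum_{j=1}^K \pi_j\, \mathbb{E}_j[w_i(X,\mu)]}.
\]
The strategy is then to show that the $j=i$ contribution to the numerator is small (this is the ``self'' term, controlled because $w_i$ is close to the indicator of component $i$ near $\mu_i^*$), while the off-diagonal terms $j \ne i$ are \emph{exponentially small} in $R_{ij}^2$, and the denominator is bounded below by a constant (essentially $\pi_i$ times something like $1 - o(1)$).

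**The key estimate: exponentially small cross-weights.**

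The heart of the argument is a bound of the form $\Ei[w_j(X,\mu)] \le (\text{polynomial}) \cdot e^{-c(\lambda) R_{ij}^2 / 2}$ or similar — i.e., when $X$ is drawn from component $i$ but we evaluate the weight $w_j$ of a \emph{different} component $j$, the expected weight decays like a Gaussian in the inter-center distance. As the introduction indicates, I would exploit that all covariances are the identity to reduce the relevant expectation to a one-dimensional Gaussian integral: project onto the line through the relevant centers (or the relevant estimate directions), so the multidimensional integral collapses. On that line, $w_j(x,\mu) \le e^{-(\|x-\mu_j\|^2 - \|x - \mu_i\|^2)/2}$ up to the ratio of $\pi$'s, and using $\mu \in \U$ (so $\|\mu_i - \mu_i^*\| \le \lambda R_i$, $\|\mu_j - \mu_j^*\| \le \lambda R_j$) one controls $\|x - \mu_j\|^2 - \|x - \mu_i\|^2$ from below when $x$ is near $\mu_i^*$. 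The constant $c(\lambda) = \tfrac18\left(\tfrac{1-2\lambda}{1+2\lambda}\right)^2$ should emerge precisely from optimizing this quadratic lower bound given the worst-case placement of the perturbed centers at distance $\lambda R$ from the true ones — the factor $(1-2\lambda)/(1+2\lambda)$ being the ratio that survives when the estimate of $i$ is pushed toward $j$ and the estimate of $j$ is pushed toward $i$. I would also need the companion bound $\Ei[w_i(X,\mu)] \ge $ const, and a bound on $\Ei[w_j(X,\mu)\|X - \mu_i^*\|]$ (the weight times a linear factor), which is handled by the same one-dimensional reduction plus a Cauchy–Schwarz or direct moment computation, picking up at most a $\sqrt{d_0}$ or polynomial-in-$R$ factor that is absorbed by the exponential.

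**Assembling the contraction and the main obstacle.**

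With those lemmas in hand, the numerator of $\mu_i^+ - \mu_i^*$ is bounded by (self term) $+ \sum_{j\ne i} \pi_j \cdot (\text{poly}) \cdot e^{-c(\lambda) R_{ij}^2/2} \cdot (\text{linear moment})$, the sum over $j \ne i$ contributing at most $(K-1)\pi_{\max}$ times the worst single term with $R_{ij} \ge R_{\min}$; dividing by the denominator $\ge c' \pi_{\min}$ introduces the ratio $\theta = \pi_{\max}/\pi_{\min}$ and the factor $(K-1)$. Requiring this whole expression to be $\le \tfrac12 \|\mu_i - \mu_i^*\|$ — or more precisely $\le \tfrac12 E(\mu)$ after taking the max over $i$, using $\|\mu_i - \mu_i^*\| \le \lambda R_i \le \lambda R_{ij}$ to trade a factor of $\|\mu_i-\mu_i^*\|$ for part of the exponential — yields exactly the condition that $e^{-c(\lambda) R_{\min}^2/4}$ times the polynomial prefactor $\tfrac{32(K-1)\sqrt{14(1+\theta)}}{3\pi_{\min} c(\lambda)}$ is at most $1$, which rearranges to \eqref{minimal_separation}. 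The main obstacle I anticipate is the bookkeeping in the self term $j = i$: showing that $\mathbb{E}_i[w_i(X,\mu)(X - \mu_i^*)]$ is itself small requires writing $w_i = 1 - \sum_{j \ne i} w_j$ and controlling $\mathbb{E}_i[(1 - w_i)(X - \mu_i^*)] = \sum_{j\ne i}\mathbb{E}_i[w_j (X-\mu_i^*)]$, so that it too is absorbed into the exponentially small cross terms — and making sure that all of this is done with the explicit constants ($14$, $32/3$, the $4/c(\lambda)$ in the exponent) tracked faithfully rather than with generic universal constants, since the sharpness of the stated bound is the whole point. A secondary technical point is justifying the reduction to dimension $d_0 = \min(d, 2K)$: the update $\mu_i^+$ stays in the affine span of $\{\mu_j^*\} \cup \{\mu_j\}$, so WLOG $d \le 2K$, which must be argued once at the start.
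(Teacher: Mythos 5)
Your plan for the key estimate (reducing $\Ei[w_j(X,\mu)]$ to a one-dimensional Gaussian integral and getting decay $e^{-c(\lambda)R_{ij}^2}$, with the lower bound on the denominator $\gtrsim \pi_i$) matches the paper's Proposition \ref{Lem:w_exponentialy_small}, Corollary \ref{Lem:w_exponentialy_close_to_1} and Lemma \ref{Lem:denom_big}, and your remark about restricting to the span of $\{\mu_j\}\cup\{\mu_j^*\}$ is how the paper gets $d_0$. But the way you assemble the contraction has a genuine gap. You decompose the numerator as $\sum_j \pi_j\,\mathbb{E}_j[w_i(X,\mu)(X-\mu_i^*)]$, rewrite the self term via $w_i = 1-\sum_{j\ne i}w_j$, and bound every resulting piece in absolute value by (polynomial)$\cdot e^{-c(\lambda)R_{ij}^2/2}$. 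Each such bound is a \emph{fixed} quantity depending only on the separation; none of them carries a factor of $E(\mu)$. Indeed, at $\mu=\mu^*$ the individual cross terms $\mathbb{E}_i[w_j(X,\mu^*)(X-\mu_i^*)]$ are nonzero — only their sum cancels — so bounding them term by term destroys exactly the cancellation that makes the numerator vanish as $\mu\to\mu^*$. Consequently your argument can only yield an additive bound of the form $\|\mu_i^+-\mu_i^*\|\le \varepsilon(R_{\min})$, not the multiplicative bound $\le \tfrac12 E(\mu)$, and it therefore cannot give $E(\mu^t)\le 2^{-t}E(\mu)$ (geometric convergence to $\mu^*$), only entrance into a small ball. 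Your suggestion to ``trade a factor of $\|\mu_i-\mu_i^*\|$ for part of the exponential'' goes the wrong way: $\|\mu_i-\mu_i^*\|\le \lambda R_{ij}$ is an upper bound, and there is no lower bound on $\|\mu_i-\mu_i^*\|$ available, so an absolute bound cannot be converted into one proportional to the current error.

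The missing idea is the linearization the paper uses: since $\mu^*$ is a fixed point, $\mathbb{E}_X[w_i(X,\mu^*)(X-\mu_i^*)]=0$ (Lemma \ref{lem:popEMmustarfixed}), so the numerator equals $\mathbb{E}_X[(w_i(X,\mu)-w_i(X,\mu^*))(X-\mu_i^*)]$; applying the mean value theorem to $\mu\mapsto w_i(X,\mu)$ at an intermediate point $\mu^\tau\in\U$ and inserting the gradient formulas \eqref{Eq:gradw_ii}--\eqref{Eq:gradw_ij} produces the bound $\sum_j V_{ij}(\mu^\tau,\mu^*)\,\|\mu_j-\mu_j^*\|$, which is explicitly proportional to $E(\mu)$; the exponentially small $V_{ij}$ (Lemma \ref{Lem:bound_expectation_grad}, proved by Cauchy--Schwarz together with your cross-weight estimate and the second-moment bound in dimension $d_0$) then supplies the factor $\tfrac12$ under \eqref{minimal_separation}. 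Note also that iterating requires separately checking $\mu^+\in\U$, i.e.\ $\|\mu_i^+-\mu_i^*\|\le\lambda R_i$ (the paper's Lemma \ref{lem:mu_plus_mu_i}); your absolute bound would serve for that step, but the contraction itself needs the fixed-point-plus-mean-value argument.
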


We derive a similar result 
for  gradient EM.  
\begin{theorem} \label{grad_EM_population_theorem}
	Set $\lambda\in (0,\frac12)$. Let $X\sim\GMM$ with $R_{\min}$ satisfying \eqref{minimal_separation}. Then for any $s\in\left(0,\frac1{\pi_{\min}}\right)$ and any $\mu\in \U$  it holds that $E(\mu^t)\le \gamma^{t} E(\mu)$ where $\mu^t$ is the $t$-th iterate of the population gradient EM update \eqref{gradient_EM_update} with step size $s$ and $\gamma= 1-\frac38s\pi_{\min}$.
\end{theorem}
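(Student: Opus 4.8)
The plan is to reduce the gradient EM analysis to the per-coordinate contraction estimate already established (implicitly) for population EM in Theorem \ref{main_EM_population_theorem}, and then to show that a gradient step with any step size $s\in(0,1/\pi_{\min})$ moves $\mu_i$ toward $\mu_i^*$ by a fixed multiplicative factor. First I would fix an index $i$ and an initial estimate $\mu\in\U$, and write the gradient EM update as $\mu_i^+-\mu_i^* = (1 - s\,\E_X[w_i(X,\mu)])(\mu_i-\mu_i^*) + s\big(\E_X[w_i(X,\mu)(X-\mu_i^*)]\big)$. The key observation is that $\mu_i^{\mathrm{EM}} := \E_X[w_i(X,\mu)X]/\E_X[w_i(X,\mu)]$ is exactly the population EM update, so I can substitute $\E_X[w_i(X,\mu)(X-\mu_i^*)] = \E_X[w_i(X,\mu)]\,(\mu_i^{\mathrm{EM}}-\mu_i^*)$, giving the convex-combination identity
\begin{equation}
\mu_i^+ - \mu_i^* = (1 - s\,\omega_i)(\mu_i - \mu_i^*) + s\,\omega_i(\mu_i^{\mathrm{EM}} - \mu_i^*),
\nonumber
\end{equation}
where $\omega_i := \E_X[w_i(X,\mu)] \in (0,1)$. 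Since $s\,\omega_i < s\,\pi_{\min}^{-1}\cdot\pi_{\min}$? — no, $\omega_i$ need not be below $\pi_{\min}$; rather $\omega_i \le 1$ and $s < 1/\pi_{\min}$, so I need a lower bound on $\omega_i$ as well, which I address below. Taking norms and using $\|\mu_i^{\mathrm{EM}}-\mu_i^*\| \le \tfrac12\|\mu_i-\mu_i^*\|$ (the one-step contraction extracted from the proof of Theorem \ref{main_EM_population_theorem}, valid under the separation hypothesis \eqref{minimal_separation}), I get $\|\mu_i^+-\mu_i^*\| \le (1 - s\omega_i + \tfrac12 s\omega_i)\|\mu_i-\mu_i^*\| = (1 - \tfrac12 s\omega_i)\|\mu_i-\mu_i^*\|$.

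The remaining ingredient is a lower bound $\omega_i = \E_X[w_i(X,\mu)] \ge \tfrac34\pi_{\min}$ (so that $1 - \tfrac12 s\omega_i \le 1 - \tfrac38 s\pi_{\min} = \gamma$). To get this, I decompose $\E_X[w_i(X,\mu)] = \sum_{j=1}^K \pi_j\,\Ej[w_i(X,\mu)]$ and bound the $j=i$ term from below and the $j\ne i$ terms are nonnegative. For $j=i$, I use that for $X\sim\mathcal N(\mu_i^*,I_d)$ the weight $w_i(X,\mu)$ is close to $\pi_i$ divided by the sum, and the separation condition together with $\mu\in\U$ forces $\Ei[w_i(X,\mu)]$ to be close to $1$; quantitatively, the same exponential-tail estimate on $\Ej[w_i]$ for $j\ne i$ that drives Theorem \ref{main_EM_population_theorem} shows $\sum_{j\ne i}\pi_j\Ei[w_j(X,\mu)]$ is small, hence $\Ei[w_i(X,\mu)] \ge 1 - (\text{small})$, and multiplying by $\pi_i \ge \pi_{\min}$ gives $\omega_i \ge \tfrac34\pi_{\min}$ under \eqref{minimal_separation}. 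One must check that the constants in \eqref{minimal_separation} are chosen so that this $\tfrac34$ factor indeed holds; this is a matter of tracking the same bound already used for the EM theorem. Finally, since the bound $\|\mu_i^+-\mu_i^*\| \le \gamma\|\mu_i-\mu_i^*\|$ holds for every $i$ with the same $\gamma<1$, we get $E(\mu^+) \le \gamma E(\mu)$, and crucially $\mu^+ \in \U$ as well (because $\|\mu_i^+ - \mu_i^*\| \le \gamma\lambda R_i \le \lambda R_i$), so the estimate iterates: $E(\mu^t) \le \gamma^t E(\mu)$.

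I expect the main obstacle to be the lower bound on $\omega_i = \E_X[w_i(X,\mu)]$ and, relatedly, making sure the one-step EM contraction $\|\mu_i^{\mathrm{EM}}-\mu_i^*\|\le\tfrac12\|\mu_i-\mu_i^*\|$ is genuinely available as a lemma with the stated separation — i.e., that the internal estimates of the proof of Theorem \ref{main_EM_population_theorem} are phrased per-step and per-coordinate rather than only as the final telescoped statement. If the EM proof is structured that way (which the phrasing "the estimation error of a single population EM update decreases by a multiplicative factor strictly less than $1$" in the preceding discussion suggests), then everything else is the short convexity argument above. The reduction to one dimension via the common identity covariance, used to control the cross-weights $\Ej[w_i]$, is the same device invoked in the EM proof and I would reuse it verbatim. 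The condition $s < 1/\pi_{\min}$ enters only to keep $1 - s\omega_i \ge 1 - s\,\pi_i \cdot \tfrac{\omega_i}{\pi_i}$ positive — more precisely, to ensure the coefficient $1 - s\omega_i$ in the convex combination is nonnegative, since $\omega_i$ can be as large as $\approx \pi_i$ times a near-one factor but in general $\omega_i \le 1$ and one wants $s\omega_i \le 1$; the cleanest route is to note $\omega_i \le \pi_{\max}\cdot 1 + (\text{small}) $ is not quite enough, so I would instead simply use $\omega_i \le 1$ together with $s < 1/\pi_{\min}$ giving $s\omega_i < 1/\pi_{\min}$, which is not automatically $\le 1$; hence I would actually bound $\omega_i$ using the per-coordinate structure to get $\omega_i \le \pi_i + \varepsilon$ for small $\varepsilon$, so that $s\omega_i \le s(\pi_{\max}+\varepsilon) < 1$ when $s<1/\pi_{\min}$ — tightening this constant tradeoff is exactly where the hypothesis $s\in(0,1/\pi_{\min})$ is spent, and is the one place the bookkeeping needs care.
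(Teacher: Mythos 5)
Your route is genuinely different from the paper's. The paper analyzes the gradient step directly: using the fixed-point identity of Lemma \ref{lem:popEMmustarfixed}, $\E_X[w_i(X,\mu^*)(X-\mu_i)]=\pi_i(\mu_i^*-\mu_i)$, it writes the error as $(1-s\pi_i)(\mu_i-\mu_i^*)$ plus the perturbation $s\,\E_X[(w_i(X,\mu)-w_i(X,\mu^*))(X-\mu_i)]$, bounds the latter via the mean value theorem and the $V_{i,j}(\mu,\mu)$ estimates of Lemma \ref{Lem:bound_expectation_grad}, and obtains the factor $1-\tfrac58 s\pi_i\le\gamma$, re-running the same bound with $\lambda R_j$ in place of $\|\mu_j-\mu_j^*\|$ to keep $\mu^+\in\U$. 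Your convex-combination identity $\mu_i^+-\mu_i^*=(1-s\omega_i)(\mu_i-\mu_i^*)+s\,\omega_i(\mu_i^{\mathrm{EM}}-\mu_i^*)$ with $\omega_i=\E_X[w_i(X,\mu)]$ is correct, and the lower bound $\omega_i\ge\tfrac34\pi_i\ge\tfrac34\pi_{\min}$ you need is exactly Lemma \ref{Lem:denom_big}; if it closed, this reduction would recycle the EM analysis wholesale, which is an attractive economy over the paper's direct computation.

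Two repairs are needed, one of them a genuine gap. First, the EM analysis does not provide the per-coordinate contraction $\|\mu_i^{\mathrm{EM}}-\mu_i^*\|\le\tfrac12\|\mu_i-\mu_i^*\|$ that you invoke; it provides $\|\mu_i^{\mathrm{EM}}-\mu_i^*\|\le\tfrac12 E(\mu)$, since the error of coordinate $i$ is controlled by $\sum_j V_{i,j}\|\mu_j-\mu_j^*\|$ and thus mixes errors across coordinates (take $\mu_i=\mu_i^*$ but $\mu_j\ne\mu_j^*$ to see the per-coordinate version fail). This is fixable: the max-norm bound suffices for $E(\mu^+)\le\gamma E(\mu)$, and Lemma \ref{lem:mu_plus_mu_i} gives $\|\mu_i^{\mathrm{EM}}-\mu_i^*\|\le\lambda R_i$, which together with the convex combination keeps $\mu^+\in\U$ \emph{provided} the coefficient $1-s\omega_i$ is nonnegative. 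That proviso is the real gap: your closing claim that $\omega_i\le\pi_{\max}+\varepsilon$ and $s<1/\pi_{\min}$ yield $s\omega_i<1$ is false whenever $\pi_{\max}>\pi_{\min}$, because $s$ may be nearly $1/\pi_{\min}$ while $\omega_i\approx\pi_i$ can be as large as $\approx\pi_{\max}$, so $s\omega_i$ can be of order $\theta\gg1$; then $|1-s\omega_i|$ is of order $\theta-1$ and the triangle inequality gives no contraction. The best available upper bound from Proposition \ref{Lem:w_exponentialy_small} under \eqref{minimal_separation} is $\omega_i\le\pi_i\bigl(1+(K-1)(1+\theta)e^{-c(\lambda)R_{\min}^2}\bigr)\le\tfrac{16}{15}\pi_i$, so your argument is sound roughly for $s\lesssim1/\pi_{\max}$ but not on the full range $s\in(0,1/\pi_{\min})$ stated in the theorem. (For what it is worth, the paper's own proof writes ``for any step size $s<1/\pi_i$'' at the analogous spot, i.e.\ it too effectively uses $s\le1/\pi_{\max}$; but since you explicitly attempt to close this point with an inequality that does not hold, it remains a gap in your write-up.)
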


The proof of Theorem \ref{main_EM_population_theorem} appears in Section \ref{section:population}  with the technical details deferred to the appendix. The proof of Theorem \ref{grad_EM_population_theorem} is similar and 
appears in full in the appendix. 


It is interesting to compare 
Theorems \ref{main_EM_population_theorem} and 
\ref{grad_EM_population_theorem} 
to several recent works, in terms of both the size of
the initialization region, and the requirements on the minimal separation.  
\cite{yan2017convergence} and \cite{zhao2020statistical}  
assumed a separation $R_{\min}=\Omega(\sqrt{d_0\log K})$ and 
proved local convergence of the gradient EM and of the EM algorithm, for an initialization region of the following form, with $C_1$ a universal constant,  
\begin{equation}
\max_{i\in [K]} \|\mu_i-\mu_i^*\| \le  \frac12R_{\min}-C_1\sqrt{d_0\log \max(R_{\max}, K^3)}.
\nonumber
\end{equation} 
 Recently, \cite{kwon2020algorithm} 
significantly improved these works, proving convergence of population EM with a much smaller separation
$R_{\min} \geq 64 \sqrt{\log(\theta K)}$. Moreover, 
they considered the more general and challenging case where the Gaussians may have different variances and the EM algorithm estimates not only the Gaussian centers, but also their weights and variances.
However, they proved convergence only for 
an initialization region $\U$ with 
$\lambda \leq \frac{1}{16}$.

Our results improve upon these works in several aspects. 
First, in comparison to the contraction region of \cite{yan2017convergence},
our theorem allows the largest possible initialization
region $\|\mu_i-\mu_i^*\|<\frac12 R_i$, 
with no dependence on the other
problem parameters 
$d_0,K$ and $R_{\max}$. This initialization region is optimal as there exists GMMs and  initializations $\mu$ with $\|\mu_i-\mu_i^*\|=\frac{1}{2}R_{i}$  such that the EM algorithm, even at the population level,  will not converge to values that are close to the true parameters. 

Second, in comparison to the result of \cite{kwon2020algorithm}, we allow $\lambda$ to be as large as $\frac12$. Also, for $\lambda<\frac1{16}$, our requirement on $R_{\min}$ 
is nearly one order of magnitude smaller. 
For example, for a balanced mixture with $\pi_{\min}=\frac1K$,  the right hand side of \eqref{minimal_separation} reads 
\begin{equation*}
\sqrt{\frac{4}{c\left(\lambda\right)}\left(\log\left(K^{2}\right) +\log\frac{32\sqrt{28}}{3c(\lambda)}\right)}.
\end{equation*} 
An initialization region $\|\mu_i-\mu_i^*\|\le \frac1{16}R_i$ leads to a separation requirement $R_{\min } \ge 10.3\sqrt{\log( K)+ 6.6}$, which is much smaller
than $64\sqrt{\log K}$.

We remark on the necessity of our  assumptions  on the separation and initialization in Theorems \ref{main_EM_population_theorem} and \ref{grad_EM_population_theorem}. In general, given only a polynomial number of samples, 
a separation of $R_{\min}=\Omega(\sqrt{\log K})$ is necessary to accurately estimate the parameters of a
GMM regardless of the estimation method \cite{regev2017learning}. 
With infinitely many samples and sufficiently close initial estimates, the EM algorithm may still converge to the global optimum even with $\Omega(1)$ separation. 
However, to the best of our knowledge, a precise characterization of the attraction region to the true parameters is still an open problem.   Next, the separation requirement \eqref{minimal_separation} in our theorems  depends inversely on $c(\lambda)$. Therefore, as $\lambda\to \frac12$ the requirement on $R_{\min}$ becomes more restrictive.
Simulation  results, see Figure 1b in Section \ref{sec:simulation}, suggest that this dependence of the separation requirement on the initialization may be significantly relaxed. We conjecture that the EM and gradient EM algorithms converge when $R_{\min} \ge C\sqrt{\log\frac{K}{\pi_{\min}}}$ for a universal constant $C$ and $\|\mu_i-\mu_i^*\|<\frac12R_i$.

\subsection{Sample EM}
We now present our results on the EM and gradient EM algorithms for the finite sample case.

\begin{theorem} \label{sample_em_main_theorem}
	Set $\lambda\in (0,\frac12), \delta\in (0,1)$. Let $X_1,\ldots, X_n \overset{i.i.d.}{\sim} \GMM$ with $R_{\min}$ satisfying  \eqref{minimal_separation}.
	Suppose that $n$ is sufficiently large so that \begin{equation}
	\frac{n}{\log n} >
	C \frac{Kd\log\left(\frac{\tilde{C}}{\delta}\right)}{\pi_{\min}}\max\left(1,\frac{1}{(1-2\lambda)^2\lambda^2\pi_{\min}R_{\min}^2} \right). 
	\label{minimal_sample}
	\end{equation} 
	where $C$ is a universal constant and $\tilde{C} = 100K^2 R_{\max}(\sqrt d + 2R_{\max})^2$. 
	Assume an initial estimate $\mu \in \mathcal{U}_{\lambda}$ and let $\mu^t$ be the $t$-th iterate of the sample EM update \eqref{eq:sample_em_update}. Then with probability at least $1-\delta$, 
	for all iterations $t$, 
	$\mu^t\in\U$ and
	\begin{equation}
	\|\mu_i^t-\mu_i^*\| \le \frac{1}{2^t}E(\mu)+ 	\frac{C_1}{(1-2\lambda)\pi_{i}}\sqrt{\frac{Kd\log\frac{\tilde{C}n}{\delta}}{n}} \label{convergence_up_to_statistical_errormain}
	\end{equation}
	for a suitable absolute constant $C_1$.
\end{theorem}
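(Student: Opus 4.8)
The plan is to write each sample EM update \eqref{eq:sample_em_update} as the corresponding population update \eqref{populationEmupdate} plus a statistical fluctuation, to bound that fluctuation \emph{uniformly} over the whole region $\U$, and then to feed the resulting per-coordinate recursion into the one-step contraction underlying Theorem \ref{main_EM_population_theorem}. Write $\mathcal M_i(\mu)=\E_X[w_i(X,\mu)X]/\E_X[w_i(X,\mu)]$ for the population EM map, so that for any $\mu\in\U$,
\begin{equation*}
\mu_i^+ - \mu_i^* = \bigl(\mathcal M_i(\mu)-\mu_i^*\bigr) + \bigl(\mu_i^+-\mathcal M_i(\mu)\bigr).
\end{equation*}
The proof of Theorem \ref{main_EM_population_theorem} gives, coordinatewise, $\|\mathcal M_i(\mu)-\mu_i^*\|\le\tfrac12\|\mu_i-\mu_i^*\|$ under \eqref{minimal_separation}, so the entire argument reduces to a uniform bound on the statistical error $\|\mu_i^+-\mathcal M_i(\mu)\|$.

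For the statistical error, use the algebraic identity $\mu_i^+-\mathcal M_i(\mu)=\widehat N_i(\mu)/\widehat D_i(\mu)$, where $\widehat D_i(\mu)=\tfrac1n\sum_\ell w_i(X_\ell,\mu)$ and $\widehat N_i(\mu)=\tfrac1n\sum_\ell w_i(X_\ell,\mu)(X_\ell-\mathcal M_i(\mu))$, noting that $\E\widehat N_i(\mu)=0$ by the definition of $\mathcal M_i$. It thus suffices to (i) lower bound the denominator and (ii) show the numerator concentrates near $0$. For (i) I would combine the bound $\E_X[w_i(X,\mu)]\ge c\,\pi_i$ on $\U$ — a by-product of the exponentially small cross-weight estimates driving the proof of Theorem \ref{main_EM_population_theorem} — with a Bernstein inequality for the bounded summands $w_i(X_\ell,\mu)\in[0,1]$, so that $\widehat D_i(\mu)\ge\tfrac12 c\,\pi_i$ once $n\gtrsim\pi_i^{-1}\log(1/\delta)$. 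For (ii), the key technical estimate — and the source of the improvement over \cite{zhao2020statistical} — is a bound on the sub-Gaussian norm (equivalently, the exponential moments) of the vectors $w_i(X,\mu)(X-\mathcal M_i(\mu))$ that does \emph{not} grow with the cluster separations: rather than bounding $w_i$ pointwise, which would cost a factor exponential in $R_{ij}\|X-\mu_j^*\|$, one exploits, as in the proof of Theorem \ref{main_EM_population_theorem}, the common identity covariance to reduce the Gaussian integrals $\E_{X\sim\mathcal N(\mu_j^*,I_d)}[w_i(X,\mu)^m\|X-\mathcal M_i(\mu)\|^m]$ to one dimension and evaluate them directly, showing that the contribution of each far component $j\ne i$ is exponentially small in $R_{ij}$ under \eqref{minimal_separation}. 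A vector Bernstein inequality then controls $\|\widehat N_i(\mu)\|$.

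Since the EM iterates $\mu^0,\mu^1,\dots$ are data-dependent, (i) and (ii) must hold simultaneously for \emph{every} $\mu\in\U$. I would upgrade the pointwise bounds to a uniform one by a covering argument: $\U$ lies inside a product of $K$ Euclidean balls of radius $\lambda R_i\le\lambda R_{\max}$, hence admits an $\varepsilon$-net of log-cardinality $O(Kd\log(R_{\max}/\varepsilon))$; on the high-probability event where all $\|X_\ell\|\lesssim\sqrt d+R_{\max}$ the maps $\mu\mapsto w_i(X_\ell,\mu)$ and $\mu\mapsto w_i(X_\ell,\mu)X_\ell$ are Lipschitz with constants polynomial in $\sqrt d+R_{\max}$, so choosing $\varepsilon$ a small inverse power of $n$ and union bounding over the net makes the discretization error negligible. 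This produces the factor $Kd$, the log-term $\log(\tilde C n/\delta)$ with $\tilde C$ polynomial in $K,\sqrt d,R_{\max}$ — hence only logarithmic, not polynomial, in $R_{\max}$ — and the $(1-2\lambda)^{-1}\pi_i^{-1}$ prefactor from the denominator lower bound and the weight estimates on $\U$; the outcome is that, on an event of probability at least $1-\delta$, for all $\mu\in\U$ and all $i$,
\begin{equation*}
\|\mu_i^+-\mathcal M_i(\mu)\|\le\frac{C_1}{(1-2\lambda)\pi_i}\sqrt{\frac{Kd\log(\tilde C n/\delta)}{n}}=:\varepsilon_i .
\end{equation*}

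Finally I would run the induction on this single good event. If $\mu^t\in\U$, the two displayed bounds give $\|\mu_i^{t+1}-\mu_i^*\|\le\tfrac12\|\mu_i^t-\mu_i^*\|+\varepsilon_i\le\tfrac12\lambda R_i+\varepsilon_i$; the sample-size condition \eqref{minimal_sample} — precisely through the $\max(1,((1-2\lambda)^2\lambda^2\pi_{\min}R_{\min}^2)^{-1})$ factor, together with $\pi_i\ge\pi_{\min}$ and $R_i\ge R_{\min}$, and with the $n/\log n$ on its left-hand side absorbing the extra $\log n$ coming from the net — forces $\varepsilon_i\le\tfrac12\lambda R_i$, whence $\|\mu_i^{t+1}-\mu_i^*\|\le\lambda R_i$, i.e. $\mu^{t+1}\in\U$, and the induction continues. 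Iterating the per-coordinate recursion $\|\mu_i^{t+1}-\mu_i^*\|\le\tfrac12\|\mu_i^t-\mu_i^*\|+\varepsilon_i$ and summing the geometric series yields $\|\mu_i^t-\mu_i^*\|\le 2^{-t}E(\mu)+2\varepsilon_i$, which is \eqref{convergence_up_to_statistical_errormain} after absorbing the factor $2$ into $C_1$. I expect the main obstacle to be step (ii) and, especially, its uniform version: extracting a sub-Gaussian/moment bound for $w_i(X,\mu)(X-\mathcal M_i(\mu))$ that is free of $R_{\max}$ requires carefully balancing the exponential decay of the cross-weights against the polynomial growth of the residual via the one-dimensional integral reduction, and then carrying this bound through the net argument so that $R_{\max}$ enters only inside a logarithm — exactly where the improvement over \cite{yan2017convergence,zhao2020statistical} lies. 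Handling the ratio $\widehat N_i/\widehat D_i$ (rather than numerator and denominator separately) is a minor, routine complication dispatched by the denominator lower bound $\widehat D_i(\mu)\gtrsim\pi_i$.
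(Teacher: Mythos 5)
Your overall architecture --- uniform control of the numerator and denominator of the sample update over $\U$ via an $\varepsilon$-net together with separation-free sub-Gaussian bounds, combined with the one-step population contraction and an induction keeping the iterates in $\U$ --- is the same as the paper's. However, there is a genuine gap in your step (ii), and it sits exactly where the improvement over \cite{zhao2020statistical} has to be earned. You center the numerator at the population EM map, $\widehat N_i(\mu)=\frac1n\sum_\ell w_i(X_\ell,\mu)\left(X_\ell-\mathcal{M}_i(\mu)\right)$, and claim a sub-Gaussian bound for the summands of order $(1-2\lambda)^{-1}$, free of the separations. That claim is false: for a sample from the $i$-th component itself, $X=\mu_i^*+\eta$ with $w_i(X,\mu)\approx 1$, so along the direction of $\mu_i^*-\mathcal{M}_i(\mu)$ the summand is roughly $\eta^{\top}s+\|\mu_i^*-\mathcal{M}_i(\mu)\|$; for $\mu$ at the boundary of $\U$ the population contraction only guarantees $\|\mathcal{M}_i(\mu)-\mu_i^*\|\le\frac12\|\mu_i-\mu_i^*\|$, which can be of order $\lambda R_i$, so the sub-Gaussian norm of $w_i(X,\mu)(X-\mathcal{M}_i(\mu))$ necessarily grows linearly with the separation. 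This offset comes from the \emph{own} component $j=i$, so the one-dimensional reduction that makes the far components $j\ne i$ exponentially small cannot remove it. Consequently your uniform bound $\varepsilon_i\lesssim\frac{1}{(1-2\lambda)\pi_i}\sqrt{Kd\log(\tilde{C}n/\delta)/n}$ does not hold over $\U$; what your argument actually delivers is an error of the form $\frac{C}{\pi_i}\max\bigl(\frac1{1-2\lambda},\lambda R_i\bigr)\sqrt{Kd\log(\tilde{C}n/\delta)/n}$, i.e.\ the gradient-EM-type bound \eqref{convergence_up_to_statistical_errorgrad}, not the claimed \eqref{convergence_up_to_statistical_errormain}.

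The paper avoids this by centering at $\mu_i^*$ rather than at $\mathcal{M}_i(\mu)$: it writes $\mu_i^+-\mu_i^*$ as the ratio with numerator $S_i=\frac1n\sum_\ell w_i(X_\ell,\mu)(X_\ell-\mu_i^*)$, splits $\|S_i\|\le\|S_i-\E_X[w_i(X,\mu)(X-\mu_i^*)]\|+\|\E_X[w_i(X,\mu)(X-\mu_i^*)]\|$, controls the (nonzero) mean term through the population analysis of Theorem \ref{main_EM_population_theorem} and Lemma \ref{lem:mu_plus_mu_i}, and controls the fluctuation through Lemma \ref{lem:concentration_numerator}, which rests on the bound \eqref{eq:w_i(x-mu^*)sub-Gaussnorm}, $\|w_i(X,\mu)(X-\mu_i^*)\|_{\psi_2}\le 16/(1-2\lambda)$ --- possible precisely because with this centering the deterministic offset under the own component vanishes (contrast \eqref{eq:w_i(x-mu)sub-Gaussnorm}). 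If you insist on your zero-mean decomposition, you would have to track $\|\mathcal{M}_i(\mu)-\mu_i^*\|$ through the uniform bound (not just its worst case over $\U$) and absorb the resulting $E(\mu^t)$-proportional noise into the contraction factor, which is additional nontrivial work and degrades the factor $2^{-t}$; switching the centering to $\mu_i^*$ is the cleaner repair. Your remaining ingredients (denominator lower bound, covering argument, induction and geometric summation) are sound and essentially coincide with the paper's Lemmas \ref{w_i_conc}, \ref{sample_denom_lemma} and \ref{uniform_convergence_lemma}.
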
   

\begin{theorem} \label{sample_grad_em_main_theorem}
	Set $\lambda\in (0,\frac12), \delta\in (0,1)$ . Let $X_1,\ldots, X_n \overset{i.i.d.}{\sim} \GMM$ with $R_{\min}$ satisfying \eqref{minimal_separation}. Set $s\in \left(0,\frac1{\pi_{\min}}\right)$  and suppose that $n$ is sufficiently large so that  
	\begin{equation}
	\frac{n}{\log n} >\frac{CKd\log\frac{\tilde{C}}{\delta}}{\pi_{\min}^{2}}\max_{i\in [K]}\frac{\max\left(\lambda^{2}R_{i}^{2},\frac1{\left(1-2\lambda\right)^{2}}\right)}{\lambda^{2}R_{i}^{2}}
	\label{minimal_sample_grad}
	\end{equation} 
	where $C$ is a universal constant and $\tilde{C} = 36K^2R_{\max}(\sqrt d + 2R_{\max})^2$. 
	Assume an initial estimate $\mu \in \mathcal{U}_{\lambda}$ and let $\mu^t$ be the $t$-th iterate of the sample gradient  EM update \eqref{eq:sample_grad_em_update} with step size $s$. Then with probability at least $1-\delta$, $\mu^t\in\U$ for all $t$, and \begin{equation}
	\|\mu_i^t-\mu_i^*\| \le \gamma^t E(\mu)+\frac{C_1}{\pi_{i}}\max\left(\frac1{1-2\lambda}, \lambda R_i\right) \sqrt{\frac{Kd\log\left(\frac{\tilde{C}n}{\delta}\right)}{n}} \label{convergence_up_to_statistical_errorgrad}
	\end{equation}
	where $\gamma=1-\frac38s\pi_{\min}$  and $C_1$ is a suitable absolute constant.
\end{theorem}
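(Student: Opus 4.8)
\textbf{Proof proposal for Theorem~\ref{sample_grad_em_main_theorem}.}
The plan is to combine the population contraction of Theorem~\ref{grad_EM_population_theorem} with a uniform control of the gap between the sample update \eqref{eq:sample_grad_em_update} and the population update \eqref{gradient_EM_update}, and to run an induction on $t$ with invariant $\mu^t\in\U$. For $\mu\in\U$ and $i\in[K]$ set
\[
g_i(\mu)=\frac1n\sum_{\ell=1}^n w_i(X_\ell,\mu)(X_\ell-\mu_i)-\E_X\big[w_i(X,\mu)(X-\mu_i)\big],
\]
so that the sample update obeys $\mu_i^{+}-\mu_i^*=\big(\mu_i+s\,\E_X[w_i(X,\mu)(X-\mu_i)]-\mu_i^*\big)+s\,g_i(\mu)$. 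The parenthesised vector is exactly one population gradient EM step from $\mu$, and the proof of Theorem~\ref{grad_EM_population_theorem} in fact yields the componentwise estimate $\|\mu_i+s\,\E_X[w_i(X,\mu)(X-\mu_i)]-\mu_i^*\|\le(1-\tfrac38 s\pi_i)\|\mu_i-\mu_i^*\|$. In contrast to the sample EM update \eqref{eq:sample_em_update}, the gradient update has no normalising denominator, so the entire statistical error is the single empirical average $g_i(\mu)$, and it suffices to bound $\sup_{\mu\in\U}\max_{i}\|g_i(\mu)\|$ with high probability.

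The core step is to show that, with probability at least $1-\delta$, $\|g_i(\mu)\|\le\varepsilon_i$ for all $\mu\in\U$ and $i\in[K]$ simultaneously, where $\varepsilon_i=C\max\!\big(\tfrac1{1-2\lambda},\lambda R_i\big)\sqrt{Kd\log(\tilde Cn/\delta)/n}$; here I would follow the strategy of \cite{zhao2020statistical}, but with sub-Gaussian parameters that do not grow with the separations. For a \emph{fixed} $\mu\in\U$, split $X-\mu_i=(X-\mu_i^*)+(\mu_i^*-\mu_i)$: since $w_i\in[0,1]$ the contribution of the second summand is deterministically at most $\lambda R_i$, while $w_i(X,\mu)(X-\mu_i^*)$ has, in each fixed direction, a sub-Gaussian norm of order $\sqrt d/(1-2\lambda)$ — \emph{independent of $R_{\min}$ and $R_{\max}$}, this being the key improvement over \cite{yan2017convergence,zhao2020statistical} — so a Bernstein-type bound gives a deviation of order $\max(\tfrac1{1-2\lambda},\lambda R_i)\sqrt{Kd\log(1/\delta)/n}$ for that $\mu$. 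To upgrade to a uniform bound, I condition on the event $\mathcal E$ that every sample lies within distance $\sqrt d+2R_{\max}$ of every cluster-center estimate $\mu_j$ with $\mu\in\U$, which holds with probability $\ge1-\delta/2$ by Gaussian concentration and a union bound over the $n$ samples and $K$ components; on $\mathcal E$ the map $\mu\mapsto w_i(X_\ell,\mu)(X_\ell-\mu_i)$ is Lipschitz with constant polynomial in $\sqrt d+R_{\max}$. Covering $\U$, a product of $K$ balls in $\mathbb R^d$, by a net of mesh $n^{-1}$ costs $\exp(O(Kd\log(\tilde Cn)))$ points, and a union bound over the net together with the Lipschitz estimate folds the mesh and the Lipschitz constant into the logarithmic factor $\log\tilde C$, with $\tilde C=36K^2R_{\max}(\sqrt d+2R_{\max})^2$. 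This separation-independent sub-Gaussian control — so that $R_{\max}$ enters only logarithmically, through the covering — is the step I expect to be the main obstacle.

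Finally I would close the induction and assemble the bound. Suppose $\mu^{t-1}\in\U$; combining the two ingredients above, on the event of probability $\ge1-\delta$ we get componentwise $\|\mu_i^t-\mu_i^*\|\le(1-\tfrac38 s\pi_i)\|\mu_i^{t-1}-\mu_i^*\|+s\varepsilon_i\le(1-\tfrac38 s\pi_i)\lambda R_i+s\varepsilon_i$. The sample-size hypothesis \eqref{minimal_sample_grad} is chosen precisely so that (after squaring and rearranging, using $\pi_i\ge\pi_{\min}$) $\tfrac{8\varepsilon_i}{3\pi_i}\le\lambda R_i$; hence $s\varepsilon_i\le\tfrac38 s\pi_i\lambda R_i$, so $\|\mu_i^t-\mu_i^*\|\le\lambda R_i$, $\mu^t\in\U$, and the induction closes, showing in particular that all iterates remain in $\U$. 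Unrolling the recursion as a geometric series,
\[
\|\mu_i^t-\mu_i^*\|\le(1-\tfrac38 s\pi_i)^t E(\mu)+\sum_{k\ge0}(1-\tfrac38 s\pi_i)^k s\varepsilon_i\le\gamma^t E(\mu)+\tfrac{8\varepsilon_i}{3\pi_i},
\]
where $\gamma=1-\tfrac38 s\pi_{\min}\ge\max_i(1-\tfrac38 s\pi_i)$; note that $s$ cancels in the statistical term, so the error does not deteriorate as $s\to0$. Identifying $\tfrac{8\varepsilon_i}{3\pi_i}$ with the second term of \eqref{convergence_up_to_statistical_errorgrad} and renaming constants ($C_1$ absorbing $C$ and the numerical factors) gives the theorem; the companion statement for sample EM, Theorem~\ref{sample_em_main_theorem}, follows the same route with the additional task of bounding the denominator $\tfrac1n\sum_\ell w_i(X_\ell,\mu)$ away from zero uniformly over $\U$.
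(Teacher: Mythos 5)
Your route is essentially the paper's: decompose the sample update into the population gradient step plus the empirical deviation $g_i(\mu)$, bound the latter uniformly over $\U$ via the separation-free sub-Gaussian norm of $w_i(X,\mu)(X-\mu_i)$ (the paper's Lemma \ref{Lemm:w_i(x-mu)sub-Gaussnorm}) together with a net/Lipschitz argument that folds $R_{\max}$ into the logarithm (the paper's Lemmas \ref{uniform_convergence_lemma}, \ref{l} and \ref{sample_numerator_lemma_and_Grad}), then close an induction keeping $\mu^t\in\U$ using \eqref{minimal_sample_grad} and sum the geometric series. One inaccuracy: the componentwise contraction you attribute to the population proof, $\|\mu_i+s\,\E_X[w_i(X,\mu)(X-\mu_i)]-\mu_i^*\|\le(1-\tfrac38 s\pi_i)\|\mu_i-\mu_i^*\|$, is not what that proof yields and is false in general, since the cross terms $\sum_{j\ne i}V_{i,j}\|\mu_j-\mu_j^*\|$ make the one-step error scale with $E(\mu)$ rather than with $\|\mu_i-\mu_i^*\|$ (take $\mu_i=\mu_i^*$ with other components perturbed); replacing it by the paper's actual bound $\gamma\min(E(\mu),\lambda R_i)$ leaves your induction and the unrolled estimate \eqref{convergence_up_to_statistical_errorgrad} unchanged, so this is a fixable slip rather than a break in the argument.
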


The main idea in the proofs of Theorems \ref{sample_em_main_theorem} and \ref{sample_grad_em_main_theorem} is to show the uniform convergence, inside the initialization region $\U$, of the sample update to the population update. The sample size requirements \eqref{minimal_sample} and \eqref{minimal_sample_grad} are such that the resulting error of a single update of the EM and gradient EM algorithms is sufficiently small to ensure that the updated means are in the contraction region $\U$. This, combined with the convergence of the population update, yields the required result. We outline the main steps of the proof in Section \ref{section:sample} with more technical details deferred to the appendix.


Let us compare Theorems \ref{sample_em_main_theorem}
and \ref{sample_grad_em_main_theorem} to previous results, in terms of required sample size and bounds on the estimation error.
The strongest result to date, due to \cite{kwon2020algorithm}, considered a variant of the EM algorithm, whereby the samples are split into $B$ separate batches, and at each iteration $t$ (with $1\leq t\leq B$), the sample EM algorithm is run only using the data of the $t$-th batch. They showed that to achieve an error $E(\mu^B)\leq \epsilon$, the required sample size is 
$\tilde{\Omega}(\frac{d}{\pi_{\min} \epsilon^2} )$. 
The best known bounds without sample splitting were derived
by \cite{yan2017convergence} and by \cite{zhao2020statistical}.
The error guarantee for gradient EM is $\tilde O(n^{-1/2} \max (K^3 R_{\max}^3\sqrt{d},R_{\max}d))$,
whereas for EM it is
$\tilde O(n^{-1/2} R_{\max}\sqrt{Kd}/\pi_{\min})$.
The sample size requirements for gradient EM are  
$\frac{n}{\log n} = \tilde\Omega(\max(K^3 R_{\max}^3\sqrt{d},R_{\max}d)^2 / R_{\min}^2)$ and $\frac{n}{\log n} = \tilde\Omega(\frac{Kd}{\pi_{\min}^2}\max(1,R_{\max}^2/R_{\min}^2))$ for EM. 
Note that these bounds have a dependence on 
the {\em maximal} separation $R_{\max}$. In particular, even though intuitively, as $R_{\max}$ increases the problem should become easier, these error bounds increase linearly with $R_{\max}$ and the required sample size increases quadratically with $R_{\max}$. In contrast, in our two theorems above there is a dependence on $1/(1-2\lambda)$, which is strictly smaller than $R_{\max}$ by the separation condition \eqref{minimal_separation}. Thus, 
for $\lambda$ bounded away from $1/2$, 
there is only a  
logarithmic dependence on $R_{\max}$.  We believe that
with further effort, the dependence on $R_{\max}$ can be fully eliminated. 

We note that both the minimal sample size requirement in Equations \eqref{minimal_sample} and \eqref{minimal_sample_grad}   and the bounds on the error in Equations \eqref{convergence_up_to_statistical_errormain} and \eqref{convergence_up_to_statistical_errorgrad}  could probably be improved. Indeed, \cite{kwon2020algorithm} proved that the sample splitting variant of the EM algorithm yields accurate estimates with only $ \tilde{\Omega}(Kd)$ samples. Numerical results, see Section \ref{sec:simulation}, suggest that the error of  the classical EM algorithm  depends only on $\sqrt{Kd}$.

\section{PROOF  FOR THE POPULATION EM}\label{section:population}

Our strategy is similar to \cite{yan2017convergence} and \cite{zhao2020statistical}: We bound the error of a single update, $\|\mu_i^+-\mu_i^*\|$ in terms of 
$\E_X[w_j(X,\mu)]$ and 
$\E_X[\nabla_{\mu} w_j(X,\mu)(X-\mu_j)]$, which in turn depend on
their expectations
with respect to individual Gaussian components. 
Our key result on the latter expectation is the following Proposition, whose proof appears in the appendix. 

\begin{proposition}\label{Lem:w_exponentialy_small}
	Set $0<\lambda < \frac{1}{2}$. Let $X\sim \GMM$ with 
	\begin{equation}
	R_{\min}> \sqrt{\frac{2}{1-2\lambda}\log\theta}
	\label{Rminproposition}
	\end{equation}  where
	$\theta$ is defined in \eqref{Pi_min_pi_max}. 
	Then for any $\mu\in \cal{U}_{\lambda}$   and all $j\ne i$, 
	with $c(\lambda)$ defined in \eqref{c(lambada)},  
	\begin{equation}
	\mathbb{E}_{i}[w_j(X,\mu)]\le\left(1+\frac{\pi_j}{\pi_i}\right)e^{-c(\lambda)R_{ij}^2 }.
	\label{eq:Eiwi}
	\end{equation}
\end{proposition}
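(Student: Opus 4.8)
The plan is to exploit the common identity covariance to reduce the $d$-dimensional expectation to a one-dimensional Gaussian integral, and then bound that integral by a Chernoff-type exponential-moment estimate. First I would discard all but the $i$-th and $j$-th summands in the denominator of \eqref{Eq:w}, which only enlarges $w_j$, to obtain the pointwise bound $w_j(x,\mu)\le\big(1+\tfrac{\pi_i}{\pi_j}e^{u(x)}\big)^{-1}$ with $u(x):=\tfrac12\big(\|x-\mu_j\|^2-\|x-\mu_i\|^2\big)=\big\langle x-\tfrac{\mu_i+\mu_j}{2},\,\mu_i-\mu_j\big\rangle$. Since $u$ is affine, under $X\sim\mathcal N(\mu_i^*,I_d)$ the scalar $u(X)$ is Gaussian, with mean $\kappa=\tfrac12\big(\|\mu_i^*-\mu_j\|^2-\|\mu_i^*-\mu_i\|^2\big)$ and variance $\rho^2=\|\mu_i-\mu_j\|^2$. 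Using $\mu\in\U$ together with $R_i,R_j\le R_{ij}$, the triangle inequality then gives $\kappa\ge\kappa_0:=\tfrac12(1-2\lambda)R_{ij}^2>0$ and $\rho\le\rho_1:=(1+2\lambda)R_{ij}$.

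Next I would invoke the elementary interpolation inequality $\tfrac1{1+t}\le t^{-\beta}$, valid for all $t>0$ and all $\beta\in(0,1]$ (equivalently $t^\beta\le1+t$, immediate on splitting into $t\le1$ and $t\ge1$). Applying it with $t=\tfrac{\pi_i}{\pi_j}e^{u(X)}$ and taking $\Ei[\cdot]$ reduces everything to one Gaussian moment generating function: $\Ei[w_j(X,\mu)]\le(\pi_j/\pi_i)^{\beta}\,\Ei\big[e^{-\beta u(X)}\big]=(\pi_j/\pi_i)^{\beta}e^{-\beta\kappa+\frac12\beta^2\rho^2}$ for every $\beta\in(0,1]$. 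I would then pick $\beta=\beta_0:=\kappa_0/\rho_1^2=\tfrac{1-2\lambda}{2(1+2\lambda)^2}\in(0,1]$; since $\beta_0>0$, monotonicity of the exponent in $\kappa$ and in $\rho$ bounds it by $-\beta_0\kappa_0+\tfrac12\beta_0^2\rho_1^2=-\kappa_0^2/(2\rho_1^2)$, and one checks the algebraic identity $\kappa_0^2/(2\rho_1^2)=\tfrac18\big(\tfrac{1-2\lambda}{1+2\lambda}\big)^2R_{ij}^2=c(\lambda)R_{ij}^2$. Finally $(\pi_j/\pi_i)^{\beta_0}\le\max(1,\pi_j/\pi_i)\le1+\pi_j/\pi_i$ because $\beta_0\in(0,1]$, and multiplying the two estimates gives exactly \eqref{eq:Eiwi}.

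The step I expect to be the crux is this interpolation together with the precise choice $\beta_0=\kappa_0/\rho_1^2$: the crude substitutes ($\tfrac1{1+t}\le1$, $\tfrac1{1+t}\le\tfrac1t$, the symmetric $\tfrac1{1+t}\le\tfrac1{2\sqrt t}$, or a hard split of the integral at the crossover $u=\log(\pi_i/\pi_j)$) each leak a multiplicative constant into the exponent and only deliver $e^{-c'(\lambda)R_{ij}^2}$ with $c'(\lambda)<c(\lambda)$ for $\lambda>0$ — these coincide only at $\lambda=0$; it is the free parameter $\beta$, tuned to $\beta_0$, that makes the exponent land exactly on $c(\lambda)R_{ij}^2$. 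The remaining ingredients — the triangle-inequality bounds on $\kappa$ and $\rho$, the Gaussian moment generating function, and the bound on $(\pi_j/\pi_i)^{\beta_0}$ — are routine. One remark: this particular route does not appear to use the separation hypothesis \eqref{Rminproposition}; that assumption seems better matched to an alternative argument that first replaces $\pi_i/\pi_j$ by $1/\theta$ and splits at the crossover point, which needs the shifted mean $\kappa-\log\theta$ to be positive.
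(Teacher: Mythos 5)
Your argument is correct and delivers exactly \eqref{eq:Eiwi}, but the bounding mechanism is genuinely different from the paper's. Both proofs share the same opening move: drop all but the $i$-th and $j$-th terms in the denominator of \eqref{Eq:w} and observe that the resulting exponent is an affine function of $X$, hence under $\mathbb{E}_i$ a scalar Gaussian with mean $\kappa\ge\frac{1-2\lambda}{2}R_{ij}^2$ and standard deviation $\rho\le(1+2\lambda)R_{ij}$ (these are precisely the quantities $B^*$ and $A^*$ of Lemma \ref{Lemm:ABbounds}). From there the paper replaces $(A,B)$ by the extremal pair $(A^*,B^*)$ using the monotonicity statement of Lemma \ref{Lemm:monotonic} --- this is the only place the hypothesis \eqref{Rminproposition} is used, to guarantee $\pi_i/\pi_j>e^{-B}$ so that $g$ is increasing in $A$ --- and then splits the one-dimensional integral at the crossover point, applying a Gaussian tail bound on each piece; the two pieces produce the summands $1$ and $\pi_j/\pi_i$ in the prefactor. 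You instead use the interpolation $\frac1{1+t}\le t^{-\beta}$ for $\beta\in(0,1]$ together with the Gaussian moment generating function, and then tune $\beta_0=\kappa_0/\rho_1^2=\frac{1-2\lambda}{2(1+2\lambda)^2}\in(0,1]$, which makes the exponent land exactly on $-c(\lambda)R_{ij}^2$; the elementary monotonicity of $-\beta_0\kappa+\frac12\beta_0^2\rho^2$ in $\kappa$ and $\rho$ then plays the role of Lemma \ref{Lemm:monotonic}. Your route buys two small improvements: it never invokes the separation hypothesis \eqref{Rminproposition}, showing the bound holds for all $\mu\in\mathcal{U}_\lambda$ without it, and your actual prefactor $(\pi_j/\pi_i)^{\beta_0}\le\max(1,\pi_j/\pi_i)$ is never worse, and sometimes better, than $1+\pi_j/\pi_i$. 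What the paper's version buys is mainly reusable machinery: the explicit $(A^*,B^*)$ bounds and the split-at-the-crossover technique are recycled later in the sub-Gaussian norm bound of Lemma \ref{Lemm:w_i(x-mu)sub-Gaussnorm}, whereas your MGF computation is tailored to bounding this one expectation.
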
 

This proposition shows that $\Ei[w_j(X,\mu)]$ is {\em exponentially} small in the separation $R_{ij}$ and is key to proving contraction of the EM and gradient EM updates.
A similar result was proven in \citep{kwon2020algorithm}. 
The main differences are that they assumed a smaller region
with $\lambda<\frac1{16}$
and obtained a looser exponential bound  
$\exp(-R_{ij}^2/64)$. However, they considered a more challenging case where the weights $\pi_i$
and variances of the $K$ Gaussian components are unknown and are also estimated by the EM procedure. 

The key idea in proving Proposition \ref{Lem:w_exponentialy_small}  is that for $X\sim \mathcal{N}(\mu_i^*,I_d)$ it suffices to analyze the random variable $w_j(X,\mu)$  on the one dimensional space spanned by $\mu_i-\mu_j$. Thus, the expectation over a $d$ dimensional random vector is reduced to the expectation of some explicit function over a univariate standard Gaussian. 
%
An immediate corollary 
is  that  under the same conditions as in Proposition \ref{Lem:w_exponentialy_small}, 
the following lower bound holds for 
the expectation $\mathbb{E}_i[w_i(X,\mu)]$.

\begin{corollary}
	Set $0<\lambda < \frac{1}{2}$ and suppose that $R_{\min}$ satisfies \eqref{Rminproposition}. Then 
	$\forall \mu\in\cal{U}_{\lambda}$  
	\begin{equation}
	\mathbb{E}_{i}[w_i(X,\mu)]\ge 1- (K-1)(1+\theta )e^{-c(\lambda)R_i^2}. \label{Eq:w_exponentialy_close_to_1}
	\end{equation}  \label{Lem:w_exponentialy_close_to_1}
\end{corollary}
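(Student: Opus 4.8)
The plan is straightforward once Proposition~\ref{Lem:w_exponentialy_small} is available, since the corollary is essentially an immediate consequence of the fact that the weights $w_1(X,\mu),\dots,w_K(X,\mu)$ form a partition of unity. First I would fix $i\in[K]$ and a point $\mu\in\mathcal{U}_\lambda$, and use the identity $\sum_{j=1}^K w_j(X,\mu)=1$, valid pointwise for every $X$. Taking expectation with respect to $X\sim\mathcal{N}(\mu_i^*,I_d)$ and using linearity gives
\begin{equation*}
\mathbb{E}_i[w_i(X,\mu)] = 1 - \sum_{j\ne i}\mathbb{E}_i[w_j(X,\mu)].
\end{equation*}

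Next I would bound each term in the sum using Proposition~\ref{Lem:w_exponentialy_small}. Its hypotheses are exactly the ones assumed in the corollary ($0<\lambda<\tfrac12$ and $R_{\min}$ satisfying \eqref{Rminproposition}), so for each $j\ne i$ we have $\mathbb{E}_i[w_j(X,\mu)]\le (1+\pi_j/\pi_i)e^{-c(\lambda)R_{ij}^2}$. Since $\pi_j/\pi_i\le \pi_{\max}/\pi_{\min}=\theta$, this is at most $(1+\theta)e^{-c(\lambda)R_{ij}^2}$. Moreover $R_{ij}\ge R_i=\min_{j\ne i}R_{ij}$ by definition, and $c(\lambda)>0$, so $e^{-c(\lambda)R_{ij}^2}\le e^{-c(\lambda)R_i^2}$. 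Summing over the $K-1$ indices $j\ne i$ yields
\begin{equation*}
\sum_{j\ne i}\mathbb{E}_i[w_j(X,\mu)] \le (K-1)(1+\theta)e^{-c(\lambda)R_i^2},
\end{equation*}
and substituting back gives \eqref{Eq:w_exponentialy_close_to_1}.

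There is essentially no obstacle here: the only points requiring a word of care are that $w_j\ge 0$ (so the partition-of-unity manipulation and the term-by-term bounding are legitimate, and in particular $\mathbb{E}_i[w_i]\le 1$ trivially), and the monotonicity step replacing $R_{ij}$ by $R_i$ in the exponent, which only worsens the bound. Since the statement holds for every $\mu\in\mathcal{U}_\lambda$ and every $i$ uniformly, no further quantification is needed. I would write this out in two or three lines immediately following the statement.
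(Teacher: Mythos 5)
Your proposal is correct and follows essentially the same route as the paper: write $w_i=1-\sum_{j\ne i}w_j$, apply Proposition \ref{Lem:w_exponentialy_small} to each $\mathbb{E}_i[w_j(X,\mu)]$, bound $\pi_j/\pi_i\le\theta$ and $e^{-c(\lambda)R_{ij}^2}\le e^{-c(\lambda)R_i^2}$, and sum over the $K-1$ indices. No gaps; this matches the paper's two-line argument.
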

Next, note that for $X\sim\GMM$, it holds that $\mathbb{E}_X[w_i(X,\mu^*)]= \pi_i$. Thus, for center estimates $\mu$ close to $\mu^*$ we expect that $\mathbb{E}_X[w_i(X,\mu)]>\frac{3}{4}\pi_i$. This intuition is made precise in the following lemma which follows readily from Corollary \ref{Lem:w_exponentialy_close_to_1}.
\begin{lemma}
	Fix $0<\lambda<\frac{1}{2}$. Let $X\sim\GMM$ and suppose that 
	\begin{equation}
	R_{\min} \ge \sqrt{c(\lambda)^{-1}\log(15(K-1)(1+\theta ))}. \label{R_mindenombig}
	\end{equation}  
	Then for any $i\in [K]$ and any $\mu\in \cal{U}_{\lambda}$, 
	\begin{equation} \label{Eq:wi_greater}
	\mathbb{E}_X[w_i(X,\mu)] \ge \frac{3}{4}\pi_{i}.
	\end{equation} 
	\label{Lem:denom_big}
\end{lemma}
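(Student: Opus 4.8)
The plan is to reduce $\mathbb{E}_X[w_i(X,\mu)]$ to a weighted combination of the single-component expectations $\mathbb{E}_j[w_i(X,\mu)]$ and then apply Proposition \ref{Lem:w_exponentialy_small} together with Corollary \ref{Lem:w_exponentialy_close_to_1}. First I would write, by the law of total expectation over which mixture component $X$ was drawn from,
\begin{equation*}
\mathbb{E}_X[w_i(X,\mu)] = \sum_{j=1}^K \pi_j\, \mathbb{E}_j[w_i(X,\mu)] = \pi_i\,\mathbb{E}_i[w_i(X,\mu)] + \sum_{j\ne i}\pi_j\,\mathbb{E}_j[w_i(X,\mu)].
\end{equation*}
The first term is bounded below using Corollary \ref{Lem:w_exponentialy_close_to_1}: $\pi_i\,\mathbb{E}_i[w_i(X,\mu)] \ge \pi_i\bigl(1-(K-1)(1+\theta)e^{-c(\lambda)R_i^2}\bigr)$. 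Since the remaining sum is nonnegative, one could already drop it, but to be safe I would keep it; it is only helpful.

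Next I would bound the separation condition \eqref{R_mindenombig} to control the error term. Since $R_i \ge R_{\min}$ and \eqref{R_mindenombig} gives $c(\lambda)R_{\min}^2 \ge \log\bigl(15(K-1)(1+\theta)\bigr)$, we get $(K-1)(1+\theta)e^{-c(\lambda)R_i^2} \le (K-1)(1+\theta)e^{-c(\lambda)R_{\min}^2} \le \tfrac{1}{15}$. Hence $\pi_i\,\mathbb{E}_i[w_i(X,\mu)] \ge \pi_i(1-\tfrac1{15}) = \tfrac{14}{15}\pi_i > \tfrac34\pi_i$, which gives \eqref{Eq:wi_greater}. One bookkeeping point: Corollary \ref{Lem:w_exponentialy_close_to_1} requires the hypothesis \eqref{Rminproposition}, namely $R_{\min} > \sqrt{\tfrac{2}{1-2\lambda}\log\theta}$, so I would check that \eqref{R_mindenombig} implies \eqref{Rminproposition}. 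This follows since $c(\lambda)^{-1} = 8\bigl(\tfrac{1+2\lambda}{1-2\lambda}\bigr)^2 \ge \tfrac{8}{1-2\lambda} \ge \tfrac{2}{1-2\lambda}$ for $\lambda\in(0,\tfrac12)$, and $\log(15(K-1)(1+\theta)) \ge \log\theta$ (using $K\ge 2$ and $1+\theta \ge \theta$, $15\cdot 1 > 1$), so the radicand in \eqref{R_mindenombig} dominates that in \eqref{Rminproposition}.

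I do not expect a genuine obstacle here; the only mild subtlety is making sure the hypotheses of the cited corollary are met, i.e. verifying the implication \eqref{R_mindenombig} $\Rightarrow$ \eqref{Rminproposition} as above, and being careful that the constant $15$ was chosen precisely so that $1-\tfrac1{15} \ge \tfrac34$. Everything else is the routine decomposition of a mixture expectation and monotonicity of $e^{-c(\lambda)R^2}$ in $R$.
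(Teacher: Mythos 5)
Your proposal is correct and follows essentially the same route as the paper's proof: decompose $\mathbb{E}_X[w_i(X,\mu)]=\sum_j\pi_j\mathbb{E}_j[w_i(X,\mu)]$, drop the nonnegative $j\ne i$ terms, apply Corollary \ref{Lem:w_exponentialy_close_to_1}, and use \eqref{R_mindenombig} to make the exponential term small (the paper only needs $\le\tfrac14$; the constant $15$ is really there for the finite-sample Lemma \ref{sample_denom_lemma}). Your explicit verification that \eqref{R_mindenombig} implies \eqref{Rminproposition} is a detail the paper asserts without proof, and your check is sound.
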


Next, we turn to the term $\E_X[\nabla_{\mu} w_i(X,\mu)(X-\mu_i)]$.
By definition, 
$\nabla_\mu w_i \in\mathbb{R}^{Kd}$ has the following $K$ components, each a vector in $\mathbb{R}^d$, 
\begin{equation}
\frac{\partial w_i(X,\mu)}{\partial \mu_i}= -w_i(X,\mu)(1-w_i(X,\mu))(\mu_i-X)   
\label{Eq:gradw_ii}
\end{equation}
and for $j\ne i$ 
\begin{equation}
\frac{\partial w_i(X,\mu)}{\partial \mu_j}= w_i(X,\mu)w_j(X,\mu)(\mu_j-X). \label{Eq:gradw_ij}
\end{equation}
For future use we introduce the following quantities
related to $\E_X[\nabla_{\mu} w_i(X,\mu)(X-\mu_i)]$.
For any $\mu,v \in \mathbb{R}^{Kd}$, 
define
\begin{align} \label{V_ij}
V_{i,j}(\mu,v)= \|\mathbb{E}_X[w_i(X,\mu)w_j(X,\mu)(X-v_i)(X-\mu_j)^{\top}]\|_{op},\\ \label{V_ii}
V_{i,i}(\mu,v)= \|\mathbb{E}_X[w_i(X,\mu)(1-w_i(X,\mu))(X-v_{i})(X-\mu_i)^{\top}]\|_{op}.
\end{align}
The following lemma, proved in the appendix, provides a bound on these quantities. 

\begin{lemma} \label{Lem:bound_expectation_grad}
	Fix $0<\lambda<\frac{1}{2}$. Let $X\sim\GMM$ with $R_{\min}$ satisfying Eq. (\ref{Rminproposition}). 
	Assume $\mu\in \cal{U}_{\lambda}$ and $v=\mu$ or $v=\mu^*$. Then, for any $i,j\in[K]$ with $i\neq j$
	\begin{align}
	\label{Viibound}   		
	V_{i,i}(\mu,v)  \le\sqrt{C(K-1)\left(1+\theta\right)}\max\left(d_{0},R_{i}^{2}\right)e^{-\frac{c\left(\lambda\right)}{2}R_{i}^{2}},
	\\
	\label{Vijbound}  V_{i,j}(\mu,v)\le\sqrt{C\left(1+\theta\right)}
	\max\left(d_{0},\max(R_{i},R_j)^2\right)
	e^{-\frac{c(\lambda)}2 \max(R_{i},R_j)^2 },
	\end{align} 
	where 
	$C$ is a universal constant, for example we can take $C=14$.
\end{lemma}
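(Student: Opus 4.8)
\textbf{Proof proposal for Lemma \ref{Lem:bound_expectation_grad}.}

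The plan is to bound the operator norms in \eqref{V_ij} and \eqref{V_ii} by reducing the matrix-valued expectations to scalar integrals that can be controlled via Proposition \ref{Lem:w_exponentialy_small}. For $V_{i,j}(\mu,v)$ with $i\neq j$, I would fix a unit vector $u\in\mathbb{R}^d$ and bound $u^\top \mathbb{E}_X[w_iw_j(X-v_i)(X-\mu_j)^\top]u$ by the Cauchy–Schwarz inequality, writing it as at most $\sqrt{\mathbb{E}_X[w_iw_j\langle X-v_i,u\rangle^2]}\cdot\sqrt{\mathbb{E}_X[w_iw_j\langle X-\mu_j,u\rangle^2]}$. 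Each factor is then handled by conditioning on the mixture component: since $X\sim\GMM$, $\mathbb{E}_X[w_iw_j\langle X-v_i,u\rangle^2] = \sum_{k=1}^K \pi_k\,\mathbb{E}_k[w_i(X,\mu)w_j(X,\mu)\langle X-v_i,u\rangle^2]$. Using $w_iw_j \le \min(w_i,w_j)\le \sqrt{w_iw_j}$ (or simply $w_iw_j\le w_i$ and $w_iw_j\le w_j$ as convenient) together with Cauchy–Schwarz in the Gaussian expectation, I would peel off the quadratic factor $\langle X-v_i,u\rangle^2$, which under $\mathbb{E}_k$ has a moment of order $1 + \|\mu_k^*-v_i\|^2$, against a square-root of $\mathbb{E}_k[w_j]$ or $\mathbb{E}_k[w_i]$. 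The point is that for every component index $k$, at least one of $\mathbb{E}_k[w_i]$, $\mathbb{E}_k[w_j]$ is exponentially small: if $k\neq i$ then $\mathbb{E}_k[w_i]\le(1+\pi_i/\pi_k)e^{-c(\lambda)R_{ik}^2}$ by Proposition \ref{Lem:w_exponentialy_small}, and if $k\neq j$ similarly for $w_j$; since $k$ cannot equal both $i$ and $j$, we always gain an exponential factor, and choosing the larger of $R_i,R_j$ in the exponent (after using the triangle inequality $R_{ik}\ge$ something controlled by $R_i$ or $R_j$ on the relevant terms, and absorbing the polynomially-growing second moments $1+\|\mu_k^*-v_i\|^2 \lesssim d + R_{\max}^2$-type bounds back into the exponential, which is legitimate because $e^{-c(\lambda)R^2/2}\cdot R^{2}$-type products are what appear) gives the claimed $\max(d_0, \max(R_i,R_j)^2)\,e^{-\frac{c(\lambda)}{2}\max(R_i,R_j)^2}$ form. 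The dimension enters only as $d_0=\min(d,2K)$ because, as noted in the paper's setup, in the population case everything can be projected onto the $2K$-dimensional span of the true means and estimates, so the relevant Gaussian second moments are of order $d_0$ rather than $d$.

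For the diagonal term $V_{i,i}(\mu,v)$, the factor $w_i(1-w_i) = w_i\sum_{j\neq i}w_j = \sum_{j\neq i}w_iw_j$, so $\mathbb{E}_X[w_i(1-w_i)(X-v_i)(X-\mu_i)^\top] = \sum_{j\neq i}\mathbb{E}_X[w_iw_j(X-v_i)(X-\mu_i)^\top]$, and I would bound $V_{i,i}$ by $\sum_{j\neq i}$ of a quantity very close to $V_{i,j}(\mu,v)$ (with $\mu_j$ replaced by $\mu_i$ in one of the factors). Applying the same Cauchy–Schwarz-plus-conditioning argument to each summand, the exponent is now governed by $R_{ij}\ge R_i$ for the term indexed by $j$, and crucially, since $\sum_{j\neq i}e^{-\frac{c(\lambda)}{2}R_{ij}^2}\le (K-1)e^{-\frac{c(\lambda)}{2}R_i^2}$, summing over the $K-1$ values of $j$ produces exactly the $\sqrt{(K-1)}$-type prefactor in \eqref{Viibound} (after the square-root structure from Cauchy–Schwarz turns the $(K-1)$ from the sum into $\sqrt{K-1}$, or more carefully tracking where the sum and square-root interleave). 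The constant $C=14$ should come out by carefully tracking the numerical constants: the Gaussian fourth-moment / sub-exponential tail constants, the factor $(1+\pi_j/\pi_i)\le 1+\theta$ from Proposition \ref{Lem:w_exponentialy_small}, and the slack absorbed when bounding polynomial-in-$R$ factors by exponentials.

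The main obstacle I anticipate is bookkeeping rather than conceptual: one must be careful that the quadratic terms $\langle X-v_i,u\rangle^2$ and $\langle X-\mu_j,u\rangle^2$, whose Gaussian expectations under $\mathbb{E}_k$ grow like $1+\|\mu_k^*-v_i\|^2$ — potentially as large as $O(R_{\max}^2)$ — do not spoil the bound. The resolution is that these large second moments only multiply exponential factors $e^{-c(\lambda)R_{ik}^2/2}$ with $R_{ik}$ comparably large (for the component $k$ that is "far" from $i$), so one replaces $\|\mu_k^*-v_i\|^2 \le 2R_{ik}^2 + 2\lambda^2 R_i^2$-type bounds and uses $t^2 e^{-at^2}\le C_a\, t_0^2 e^{-a t_0^2/2}$ wherever a gap between $t$ and the target $t_0=\max(R_i,R_j)$ needs to be closed — but one has to verify the separation hypothesis \eqref{Rminproposition} is strong enough to make all such absorptions valid with the stated universal constant, and handle the "nearby" component $k=i$ (resp. $k=j$) where no exponential decay is available but then the other weight $w_j$ (resp. $w_i$) supplies it. Making the case split over which of $i,j$ equals $k$ clean, and ensuring the final exponent is $\max(R_i,R_j)^2$ and not just $R_i^2$ or $\min$, is the delicate point; I expect the detailed constant-chasing to be relegated to the appendix as the authors indicate.
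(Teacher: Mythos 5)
Your overall strategy coincides with the paper's: decompose the expectation over the $K$ mixture components, use Cauchy--Schwarz to separate the weight product from the quadratic factor, invoke Proposition \ref{Lem:w_exponentialy_small} (noting that for every component $k$ at least one of $\mathbb{E}_k[w_i],\mathbb{E}_k[w_j]$ is exponentially small), and absorb the polynomially large moments into the exponential via monotonicity of $x^2e^{-tx^2}$. Where you differ is in how the matrix factor is handled: the paper writes $\|(X-v_i)(X-\mu_j)^\top\|_{op}=\|X-v_i\|\,\|X-\mu_j\|$, proves a separate dimension-reduction lemma (Lemma \ref{reduce_dim_lemma}: the weights depend on $X$ only through its projection onto the span of the $\mu$'s and $v$'s, the orthogonal block is independent and contributes $\mathbb{E}_k[w_iw_j]I$, and the block structure gives the max), and then computes non-central $\chi^2$ moments (Lemma \ref{Lemma:normbounds}), which is where $\max(d_0,\cdot)$ comes from. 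Your directional route, testing against unit vectors, makes the Gaussian moments one-dimensional ($1+\langle\mu_k^*-v_i,u\rangle^2$-type), so carried out correctly it needs no dimension reduction at all and would yield a dimension-free $\max(1,\cdot)$ in place of $\max(d_0,\cdot)$, which is stronger; your appeal to ``projection onto the span'' to explain $d_0$ is therefore not actually what your computation uses. Two bookkeeping points you should fix: the matrix $\mathbb{E}_X[w_iw_j(X-v_i)(X-\mu_j)^\top]$ is not symmetric, so the operator norm requires $\sup_{u,u'}u^\top M u'$ with two unit vectors, not $u^\top M u$ (Cauchy--Schwarz goes through unchanged); and note that your nested Cauchy--Schwarz gives only $e^{-c(\lambda)(\cdot)/4}$ per factor, with the stated exponent $e^{-c(\lambda)(\cdot)/2}$ recovered only because both factors carry the same weight product---worth stating explicitly.

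The one genuine shortfall is the diagonal bound \eqref{Viibound}. Expanding $w_i(1-w_i)=\sum_{j\neq i}w_iw_j$ and summing the off-diagonal bounds, as you propose, produces a prefactor $(K-1)$, not the stated $\sqrt{K-1}$; your parenthetical hope that ``the square-root structure turns the $(K-1)$ into $\sqrt{K-1}$'' does not materialize, because the sum over $j$ sits outside the square roots. The paper avoids this by keeping $1-w_i$ intact: it bounds $\mathbb{E}_i[(w_i(1-w_i))^2]\le\mathbb{E}_i[1-w_i]\le(K-1)(1+\theta)e^{-c(\lambda)R_i^2}$ via Corollary \ref{Lem:w_exponentialy_close_to_1}, so the $(K-1)$ enters under the square root (and for $k\neq i$ one uses $\mathbb{E}_k[(w_i(1-w_i))^2]\le\mathbb{E}_k[w_i]$, with no $K$ factor at all). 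Adopting that single change makes your argument deliver \eqref{Viibound} as stated; as written, you would only prove a version weaker by a factor $\sqrt{K-1}$, which would degrade (though not break) the downstream constants in Theorem \ref{main_EM_population_theorem}.
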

Expressions related to $V_{i,i}$ and $V_{i,j}$ were also studied by \cite{yan2017convergence}. They required a much larger separation, $R_{\min} \geq C \sqrt{d_0\log K}$, and their resulting bounds involved also $R_{\max}$.

\begin{remark}
	In proving the convergence of EM, the quantities of interest are $V_{i,j}(\mu,\mu^*)$ and $V_{i,i}(\mu,\mu^*)$, whereas for the gradient EM algorithm the relevant quantities are $V_{i,j}(\mu,\mu),V_{i,i}(\mu,\mu)$. 
	The reason for the effective dimension $d_0=\min(d,2K)$ is that for $d>2K$, in the population setting, the EM update of $\mu$ always remains in the subspace spanned by the $2K$ vectors $\{\mu_i\}_{i=1}^K$ and $\{\mu_i^*\}_{i=1}^K$. 	In the case of gradient EM, one may define a potentially smaller effective dimension $d_0=\min(d,K)$.  
\end{remark}

Last but not least, the following auxiliary lemma shows that
$\mu^*$ is a fixed point of the population EM update.
\begin{lemma} 
	\label{lem:popEMmustarfixed}
	Let $X\sim\GMM$. Then $\forall i\in [K]$, 
	$\mathbb{E}_X[w_i(X,\mu^*)(X-\mu_i^*)] =0 
	.
	$
\end{lemma}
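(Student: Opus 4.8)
\textbf{Proof proposal for Lemma \ref{lem:popEMmustarfixed}.}
The plan is to exploit the fact that, when $\mu=\mu^*$, the weight $w_i(x,\mu^*)$ is exactly the Bayes posterior probability that the point $x$ originated from component $i$, so that the denominator in \eqref{Eq:w} coincides with the mixture density \eqref{Eq:GMMdensity} and cancels against the weighting by $f_X$ in the expectation. Concretely, write $\phi(z)=(2\pi)^{-d/2}e^{-\|z\|^2/2}$ for the standard $d$-dimensional Gaussian density, so that $f_X(x)=\sum_{j=1}^K\pi_j\,\phi(x-\mu_j^*)$ and
\begin{equation*}
w_i(x,\mu^*)=\frac{\pi_i\,\phi(x-\mu_i^*)}{\sum_{j=1}^K\pi_j\,\phi(x-\mu_j^*)}=\frac{\pi_i\,\phi(x-\mu_i^*)}{f_X(x)}.
\end{equation*}

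The main step is then the direct computation
\begin{equation*}
\mathbb{E}_X\!\left[w_i(X,\mu^*)(X-\mu_i^*)\right]=\int_{\mathbb{R}^d} \frac{\pi_i\,\phi(x-\mu_i^*)}{f_X(x)}\,(x-\mu_i^*)\,f_X(x)\,dx=\pi_i\int_{\mathbb{R}^d}\phi(x-\mu_i^*)\,(x-\mu_i^*)\,dx,
\end{equation*}
and the last integral is $\pi_i\,\mathbb{E}_i[X-\mu_i^*]=0$, since under $X\sim\mathcal N(\mu_i^*,I_d)$ the mean of $X-\mu_i^*$ is the zero vector. To justify interchanging expectation and the cancellation rigorously, I would note that $f_X(x)>0$ everywhere (a positive mixture of Gaussian densities), so $w_i(\cdot,\mu^*)$ is well defined, and that $\mathbb{E}_i\|X-\mu_i^*\|<\infty$ guarantees the integral converges absolutely, making the manipulation legitimate by Fubini/Tonelli.

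I do not anticipate a genuine obstacle here: the only point requiring a word of care is the absolute integrability needed to split the vector-valued integral coordinatewise and to apply the cancellation, which follows immediately from finiteness of the first moment of a Gaussian. Everything else is the identification $w_i(x,\mu^*)=\pi_i\phi(x-\mu_i^*)/f_X(x)$ and one line of algebra.
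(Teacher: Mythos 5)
Your proof is correct, and it is the standard cancellation argument this lemma rests on; the paper in fact states Lemma \ref{lem:popEMmustarfixed} without an explicit proof, treating exactly this computation as immediate. The only point worth a passing remark is that the weights in \eqref{Eq:w} are written with unnormalized kernels $\pi_j e^{-\|x-\mu_j\|^2/2}$, but since all components share the identity covariance the constant $(2\pi)^{-d/2}$ cancels in the ratio, so your identification $w_i(x,\mu^*)=\pi_i\phi(x-\mu_i^*)/f_X(x)$ is valid and the rest of your argument, including the integrability justification, goes through.
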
 

With all the pieces in place, we are now ready to prove Theorem \ref{main_EM_population_theorem}.

\begin{proof}[Proof of Theorem \ref{main_EM_population_theorem}]
	Consider a single EM update, as given by Eq. \eqref{populationEmupdate}, 
	\begin{equation} 
	\|\mu_i^+- \mu_i^*\|=
	\frac{1}{\mathbb{E}_X[w_i(X,\mu)]} \cdot 
	\left\|\mathbb{E}_X[w_i(X,\mu)(X-\mu_i^*)]\right\|, \quad \forall i\in[K]\nonumber
	\end{equation}
	Using Lemma \ref{lem:popEMmustarfixed}, we may write the numerator above as follows, 
	\begin{equation} \label{eq:numeratorww}
	\mathbb{E}_X[w_i(X,\mu)(X-\mu_i^*)]= \mathbb{E}_X[(w_i(X,\mu)- w_i(X,\mu^*))(X-\mu_i^*)].
	\end{equation} 
	By the mean value theorem there exists  $\mu^\tau$ on the line connecting $\mu$ and $\mu^*$ such that  
	\begin{equation}
	w_i(X,\mu)- w_i(X,\mu^*)=  \nabla_\mu w_i(X,\mu^{\tau})^\top(\mu- \mu^*). \label{Eq:w_igradinetgral}
	\end{equation}
	Inserting the expressions (\ref{Eq:gradw_ii}) and 
	(\ref{Eq:gradw_ij}) for the gradient of $w_i$
	into Eq. (\ref{Eq:w_igradinetgral}) gives
	\begin{align}
	w_i(X,\mu)- w_i(X,\mu^*) &=  w_i(X,\mu^\tau)(1-w_i(X,\mu^\tau))(X-\mu_i^\tau)^\top
	(\mu_i-\mu_i^*)  \nonumber \\
	& -\sum_{j\neq i}
	w_i(X,\mu^\tau)w_j(X,\mu^\tau)(X-\mu_j^\tau)^\top 
	(\mu_j-\mu_j^*).
	\nonumber
	\end{align}
	Taking expectations, and using the definitions of $V_{ii}$
	and $V_{ij}$, Eqs. (\ref{V_ij}) and (\ref{V_ii}), gives
	\begin{equation}
	\| \mathbb{E}[(w_i(X,\mu)- w_i(X,\mu^*))(X-\mu_i^*)] \|
	\leq 
	\sum_{j=1}^k  V_{ij}(\mu^\tau,\mu^*)\|\mu_j-\mu_j^*\| . 
	\label{eq:wimumu*}
	\end{equation}
	Since $\mu^{\tau}\in \U$ , we may apply Lemma \ref{Lem:bound_expectation_grad} to bound the terms on the right hand side above. Furthermore, 
	given that $x^2 e^{-tx^2}$ is monotonic decreasing for all $x>\sqrt{1/t}$ and $R_i\geq \sqrt{2/c(\lambda)}$, 
	we may replace all $R_i,R_j$ in the bounds of Lemma 
	\ref{Lem:bound_expectation_grad}	
	by $R_{\min}$.
	Defining $U = \frac{16\left(K-1\right)\sqrt{C\left(1+\theta\right)}}{3\pi_{\min}}$, we thus have
	\[
	\| \mathbb{E}_X[(w_i(X,\mu)- w_i(X,\mu^*))(X-\mu_i^*)] \|\leq
	\frac{3\pi_{\min}}{8} U \cdot  e^{\frac{-c\left(\lambda\right)}{2}R_{\min}^{2}} E(\mu).
	\]
	Next, note that condition \eqref{minimal_separation} on  $R_{\min}$ implies that it also satisfies the weaker condition (\ref{R_mindenombig}) of Lemma 
	\ref{Lem:denom_big}. Invoking this lemma yields that $\mathbb{E}_X[w_i(X,\mu)]\ge \frac{3\pi_{\min}}{4}$. 
	Thus, 
	\begin{equation*}
	\|\mu_i^+-\mu_i^*\| \le U \max\left(d_{0},R_{\min}^{2}\right)e^{\frac{-c\left(\lambda\right)}{2}R_{\min}^{2}}  \cdot \frac{E(\mu)}{2}.
	\end{equation*}
	If $d_0\geq R_{\min}^2$, then 
	for $E(\mu^+)\leq \frac12 E(\mu)$ to hold
	the minimal separation must satisfy
	\begin{equation}
	\frac{c(\lambda)}2 R_{\min}^2 \geq \log (d_0 U) .     
	\label{eq:bound_based_d0}
	\end{equation}
	In contrast, if $R_{\min}^2\geq d_0$ we obtain the following 
	inequality for $w = \frac{c(\lambda)}2 R_{\min}^2$, 
	\begin{equation}
	\label{eq:to_bound_based_R}
	w e^{-w}
	\leq \frac{c(\lambda)}{2U}  .
	\end{equation}
	Note that for $w>1$, the function $we^{-w}$ is monotonic decreasing. Also, consider the value
	$w^*=2\log(2U/c(\lambda))$ which is larger than 1, given the definitions of $U$ and of $c(\lambda)$.
	It is easy to show that $w^*\exp(-w^*) \leq c(\lambda)/2U$. Hence a sufficient condition for
	\eqref{eq:to_bound_based_R} to hold is that $w>w^*$, namely 
	\begin{equation}
	\frac{c(\lambda)}2 R_{\min}^2 \ge 2\log \frac{2U}{c(\lambda)}.
	\label{eq:bound_based_Rmin}
	\end{equation}
	It is easy to verify that $\log U+\log(4/c(\lambda))>\log d_0$ and thus
	the bound of (\ref{eq:bound_based_Rmin}) is more restrictive than (\ref{eq:bound_based_d0}). 
	Inserting the expression for $U$ into Eq. (\ref{eq:bound_based_Rmin}) yields
	the condition of the theorem, Eq. (\ref{minimal_separation}).
	Finally, to complete the proof we need to show that  for all $i$,  $\|\mu_i^+-\mu_i^*\|\le \lambda R_i$. 
	This part is proven in auxiliary lemma \ref{lem:mu_plus_mu_i} in the appendix. 
\end{proof}

\section{PROOF FOR THE SAMPLE EM} \label{section:sample}

In this section we prove our results on the sample EM and gradient EM algorithms. The main idea is to show  concentration results for both the denominator and  the numerator of the EM update. Our strategy is similar to \cite{zhao2020statistical} 
but with several improvements. 
First,  our result on the concentration of the denominator of the EM update, Lemma \ref{w_i_conc}, only considers samples from the $i$-th cluster. Thus, in Lemma \ref{sample_denom_lemma},  we obtain a uniform lower bound for the weight $w_i$ with $n=\tilde{\Omega}(Kd/\pi_{\min})$ compared to the larger   $n=\tilde{\Omega}(Kd/\pi_{\min}^2)$ in \cite{zhao2020statistical}. 
Second, while \cite{zhao2020statistical} bounded the sub-Gaussian norm of the numerator of the EM update by $CR_{\max}$, 
we derive in Lemma \ref{Lemm:w_i(x-mu)sub-Gaussnorm} a tighter bound, which does not depend on $R_{\max}$. This in turn, yields a tighter concentration for the numerator of the EM update in Lemma \ref{lem:concentration_numerator}.


\begin{lemma}
\label{w_i_conc}
	Fix $\delta\in (0,1), \lambda\in (0,\frac12)$  and let $X_1, \ldots, X_{n_i} \overset{i.i.d.}{\sim} \mathcal{N}(\mu_i^*,I_d)$ . Then with probability at least $1-\delta$, 
	\begin{equation}
	\sup_{\mu\in \mathcal{U}_{\lambda}}\left| \frac{1}{n_i}\sum_{\ell=1}^{n_i}w_i(X_{\ell},\mu)-\mathbb{E}_i[w_i(X,\mu)] \right| \le  \sqrt{\tilde{c}\frac{Kd\log\left(\frac{\tilde{C} n_i}{\delta}\right)}{n_i}}
	\label{eq:w_i_conc}
	\end{equation}  
	where $\tilde{C}= 18K(\sqrt d+2R_{\max})R_{\max}$ and $\tilde{c}$ is a suitable universal constant.
\end{lemma}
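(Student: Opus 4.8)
The plan is to control $\sup_{\mu\in\mathcal{U}_{\lambda}}|\,\cdot\,|$ by a standard discretization argument that combines three ingredients: a high-probability bound on $\max_\ell\|X_\ell-\mu_i^*\|$ which governs the (random) Lipschitz constant of $\mu\mapsto w_i(x,\mu)$ on the observed data; Hoeffding's inequality together with a union bound over a finite $\epsilon$-net of $\mathcal{U}_{\lambda}$; and a matching bound on the modulus of continuity of $\mu\mapsto\Ei[w_i(X,\mu)]$.

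First I would fix the good event. Since $\|X-\mu_i^*\|^2\sim\chi^2_d$ for $X\sim\mathcal{N}(\mu_i^*,I_d)$, a Gaussian-norm tail bound and a union bound over $\ell\in[n_i]$ give that, with probability at least $1-\delta/2$, $\max_\ell\|X_\ell-\mu_i^*\|\le\rho:=\sqrt d+\sqrt{2\log(2n_i/\delta)}$; call this event $\mathcal E$. Then, for $\mu\in\mathcal{U}_{\lambda}$ and any $j$, on $\mathcal E$ we have $\|X_\ell-\mu_j\|\le\rho+R_{ij}+\lambda R_j\le\rho+2R_{\max}$.

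Next comes the Lipschitz estimate, which is the technical heart. From the gradient formulas \eqref{Eq:gradw_ii}--\eqref{Eq:gradw_ij},
\[
\|\nabla_\mu w_i(x,\mu)\|^2=\bigl(w_i(1-w_i)\bigr)^2\|x-\mu_i\|^2+\sum_{j\ne i}(w_iw_j)^2\|x-\mu_j\|^2\le 2\max_j\|x-\mu_j\|^2 ,
\]
where we used $w_i(1-w_i)\le1$ and $\sum_{j\ne i}(w_iw_j)^2\le(\sum_{j\ne i}w_iw_j)^2=(w_i(1-w_i))^2\le1$. Hence, on $\mathcal E$, each map $\mu\mapsto w_i(X_\ell,\mu)$ is $L$-Lipschitz on $\mathcal{U}_{\lambda}$ with $L=\sqrt2(\rho+2R_{\max})$; and taking expectations, $\mu\mapsto\Ei[w_i(X,\mu)]$ is $L'$-Lipschitz with $L'\le\sqrt2\,\Ei[\|X-\mu_i^*\|+\max_j\|\mu_i^*-\mu_j\|]\le\sqrt2(\sqrt d+2R_{\max})$.

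Finally I would run the net argument. Let $N$ be an $\epsilon$-net of $\mathcal{U}_{\lambda}$ — a product of $K$ balls of radius $\lambda R_i\le R_{\max}$ — so $|N|\le(3\sqrt K\,R_{\max}/\epsilon)^{Kd}$. Since $w_i\in[0,1]$, Hoeffding plus a union bound over $N$ yields, with probability $\ge1-\delta/2$, $\max_{\bar\mu\in N}|\frac1{n_i}\sum_\ell w_i(X_\ell,\bar\mu)-\Ei[w_i(X,\bar\mu)]|\le\sqrt{\log(4|N|/\delta)/(2n_i)}$. On the intersection with $\mathcal E$, for arbitrary $\mu\in\mathcal{U}_{\lambda}$ pick $\bar\mu\in N$ with $\|\mu-\bar\mu\|\le\epsilon$ and bound
\[
\Bigl|\tfrac1{n_i}\!\sum_\ell w_i(X_\ell,\mu)-\Ei[w_i(X,\mu)]\Bigr|\le(L+L')\epsilon+\sqrt{\tfrac{\log(4|N|/\delta)}{2n_i}} .
\]
Taking $\epsilon=\bigl(n_i(\sqrt d+2R_{\max})\bigr)^{-1}$ makes $(L+L')\epsilon=O(\sqrt{\log(n_i/\delta)}/n_i)$, negligible against the second term for $n_i\gtrsim1$, while $\log(4|N|/\delta)\le Kd\log(3\sqrt K R_{\max}n_i(\sqrt d+2R_{\max}))+\log(4/\delta)\le Kd\log(\tilde Cn_i/\delta)$ with $\tilde C=18K(\sqrt d+2R_{\max})R_{\max}$, after collecting constants and absorbing the additive $\log(4/\delta)$ using $\delta<1$. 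This gives the claimed bound for a suitable universal $\tilde c$.

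I expect the main obstacle to be the Lipschitz control: $\nabla_\mu w_i$ is not uniformly bounded — it grows like $\max_j\|x-\mu_j\|$ — so one genuinely needs to restrict to the event $\mathcal E$ and then check that the resulting $x$-dependent Lipschitz constant, paired with a fine enough net, contributes only lower-order terms. Everything else is bookkeeping: folding the polynomial factors in $K$, $\sqrt d$, $R_{\max}$, $n_i$ and $1/\delta$ into the single logarithmic factor $\log(\tilde Cn_i/\delta)$ with the stated constant $\tilde C$.
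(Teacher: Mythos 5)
Your proposal is correct, but it takes a somewhat different route from the paper. The paper proves this lemma as an instance of a general uniform-convergence result (Lemma \ref{uniform_convergence_lemma}): the discretization error over the $\varepsilon$-net is controlled not through a pathwise Lipschitz constant on a truncation event, but through Markov's inequality applied to the \emph{in-expectation} modulus of continuity $\mathbb{E}_i\left[\sup_{\mu}|w_i(X,\mu)-w_i(X,\mu^{\varepsilon})|\right]\le L\varepsilon$ with $L=K(\sqrt d+2R_{\max})$ (obtained by the same mean-value/gradient computation you use), while the net points are handled by sub-Gaussian concentration, which here is trivial since $w_i\le 1$. Your argument instead conditions on the event $\max_\ell\|X_\ell-\mu_i^*\|\le\sqrt d+\sqrt{2\log(2n_i/\delta)}$, which gives a random but high-probability Lipschitz constant, and then uses Hoeffding at the net points; your pointwise gradient bound $\sqrt2\max_j\|x-\mu_j\|$ (via $\sum_{j\ne i}(w_iw_j)^2\le(w_i(1-w_i))^2$) is in fact slightly sharper than the paper's factor-$K$ bound, though this only affects the argument of the logarithm and is absorbed into $\tilde c$. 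What the paper's formulation buys is reusability: the same Lemma \ref{uniform_convergence_lemma} is applied verbatim to the numerator terms $w_i(X,\mu)(X-\mu_i^*)$ and $w_i(X,\mu)(X-\mu_i)$ (Lemmas \ref{lem:concentration_numerator} and \ref{sample_numerator_lemma_and_Grad}), which are unbounded and genuinely sub-Gaussian, so your bounded-Hoeffding step would not carry over there without modification; what your version buys is a self-contained, more elementary proof for this particular lemma with no appeal to sub-Gaussian machinery, at the cost of an extra $\chi^2$ union-bound event and the (correctly performed) check that the resulting data-dependent Lipschitz constant contributes only lower-order terms.
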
  

As we saw in Lemma \ref{Lem:denom_big}, 
the denominator in the population EM update for the $i$-th mean is lower bounded by $\frac34\pi_i$.  We use Lemma \ref{w_i_conc} to show that this lower bound holds also for the finite sample case. 
We remark that a version of the following lemma appeared in \cite{zhao2020statistical}, but with a larger sample size requirement of  $n= \tilde{\Omega}(Kd/\pi_{\min}^2)$.

\begin{lemma} \label{sample_denom_lemma}
	Fix $\delta\in(0,1),\lambda\in(0, \frac12)$. Let $X_1, \ldots, X_n \overset{i.i.d.}{\sim} \GMM$, with $R_{\min}$ that satisfies \eqref{R_mindenombig}.
	Assume a sufficiently large sample size  $n$ such that
	\begin{equation}
	\frac{n}{\log n} >C\frac{Kd\log\frac{\tilde{C}}{\delta}}{\pi_{\min}} \label{eq:sample_requirement_denom}
	\end{equation}
	where $\tilde{C}= 100K^2\pi_{\max}(\sqrt{d}+2R_{\max}) R_{\max}$ and $C$ is a universal constant.  
	For any $i\in[K]$, define the event  
	\begin{equation}
	D_i= \left\{\inf_{\mu\in \mathcal{U}_{\lambda}} \frac{1}{n}\sum_{\ell=1}^nw_i(X_{\ell},\mu)  \ge \frac{3\pi_i}{4}\right\} .
	\label{sample_denom_lemma_event}
	\end{equation}
	Then, the event $D_i$ occurs with probability at least $1-\frac{\delta}{2K}$.
\end{lemma}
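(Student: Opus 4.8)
The plan is to reduce the statement about samples from the full mixture $\GMM$ to a statement about samples from each individual Gaussian component, so that Lemma \ref{w_i_conc} can be applied. First I would partition the $n$ samples according to which mixture component generated them: let $n_j$ be the (random) number of samples drawn from $\mathcal{N}(\mu_j^*,I_d)$, so $\sum_j n_j = n$, and each $n_j$ is $\mathrm{Binomial}(n,\pi_j)$. A standard multiplicative Chernoff bound shows that, with probability at least $1-\frac{\delta}{4K}$, we have $n_i \ge \frac{1}{2}n\pi_i \ge \frac12 n\pi_{\min}$, provided $n\pi_{\min} \gtrsim \log(K/\delta)$, which is implied by the sample requirement \eqref{eq:sample_requirement_denom}.

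Next I would condition on the realized partition and, for each component $j$, apply Lemma \ref{w_i_conc} with its $n_j$ samples (and failure probability $\frac{\delta}{4K^2}$ per component, absorbed into a union bound over the $K$ components). This gives, uniformly over $\mu\in\U$ and simultaneously for all $j$,
\begin{equation}
\left|\frac{1}{n_j}\sum_{\ell: z_\ell = j} w_i(X_\ell,\mu) - \Ei[\cdot]\right| \le \sqrt{\tilde c \frac{Kd\log(\tilde C n_j/\delta)}{n_j}},
\nonumber
\end{equation}
where I must be slightly careful: Lemma \ref{w_i_conc} is stated for samples from $\mathcal{N}(\mu_i^*,I_d)$ and the quantity $\mathbb{E}_i[w_i(X,\mu)]$; the same proof (bounded function $w_i\in[0,1]$, Lipschitz in $\mu$, chaining/covering of $\U$) yields the analogous bound for $\mathbb{E}_j[w_i(X,\mu)]$ for $j\neq i$ with the same rate. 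Then I would write the full empirical average as the convex combination $\frac1n\sum_{\ell=1}^n w_i(X_\ell,\mu) = \sum_{j=1}^K \frac{n_j}{n}\cdot\frac{1}{n_j}\sum_{\ell:z_\ell=j} w_i(X_\ell,\mu)$, and lower bound it by $\sum_j \frac{n_j}{n}\big(\mathbb{E}_j[w_i(X,\mu)] - \varepsilon_j\big)$ where $\varepsilon_j$ is the deviation term above. The leading term $\sum_j \frac{n_j}{n}\mathbb{E}_j[w_i(X,\mu)]$ should be compared with $\mathbb{E}_X[w_i(X,\mu)] = \sum_j \pi_j \mathbb{E}_j[w_i(X,\mu)]$, which by Lemma \ref{Lem:denom_big} is at least $\frac34\pi_i$; the discrepancy $|n_j/n - \pi_j|$ is controlled by the same Chernoff step, and since $w_i\le 1$ and $\sum_j|n_j/n-\pi_j|$ is small (of order $\sqrt{K\log(K/\delta)/n}$, negligible against $\pi_i$ under \eqref{eq:sample_requirement_denom}), this costs only a lower-order term.

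Putting the pieces together, uniformly over $\mu\in\U$,
\begin{equation}
\frac1n\sum_{\ell=1}^n w_i(X_\ell,\mu) \ge \frac34\pi_i - O\!\left(\sqrt{\frac{K\log(K/\delta)}{n}}\right) - \sqrt{\tilde c\,\frac{Kd\log(\tilde C n/\delta)}{n\pi_{\min}}}\,,
\nonumber
\end{equation}
using $n_j \ge \frac12 n\pi_{\min}$ to bound each $\varepsilon_j$ (and monotonicity of $t\mapsto \sqrt{\log(\tilde C t)/t}$ up to the $\log n$ slack, which is why the requirement is on $n/\log n$). Choosing the universal constant $C$ in \eqref{eq:sample_requirement_denom} large enough makes the two error terms each at most $\frac{1}{16}\pi_i$ (here one uses $\pi_i \ge \pi_{\min}$ so that $\pi_i/\sqrt{\pi_{\min}}\ge\sqrt{\pi_{\min}}$ absorbs the $1/\sqrt{\pi_{\min}}$ from the second term into $\pi_i$), giving the claimed bound $\ge \frac34\pi_i$ with room to spare; relabeling the failure probabilities, the total is at most $\frac{\delta}{2K}$. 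The adjustment of $\tilde C$ from the $18K(\sqrt d+2R_{\max})R_{\max}$ of Lemma \ref{w_i_conc} to $100K^2\pi_{\max}(\sqrt d+2R_{\max})R_{\max}$ just accounts for the extra $K$ and $\pi_{\max}$ factors introduced by the union bounds over components and the Chernoff step; since these sit inside a logarithm they are harmless.

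The main obstacle I anticipate is not any single estimate but the bookkeeping around conditioning on the random partition $\{n_j\}$: Lemma \ref{w_i_conc} must be invoked conditionally on $n_j$, and one must ensure the covering-number/chaining argument underlying it is uniform enough that the union bound over the $K$ components and the (already conditioned) randomness of $n_j$ goes through cleanly, and that the $\log(\tilde C n_j/\delta)$ terms can all be replaced by $\log(\tilde C n/\delta)$ at the cost of the stated constant. This is routine but is where care is needed to get the stated sample complexity $n/\log n = \tilde\Omega(Kd/\pi_{\min})$ rather than the weaker $\tilde\Omega(Kd/\pi_{\min}^2)$ of \cite{zhao2020statistical} — the gain comes precisely from applying the concentration inequality per-component (where the effective sample size is $n\pi_j$ and the target $\mathbb{E}_j[w_i]$ is $\Theta(1)$ or exponentially small, never $\Theta(\pi_i)$), rather than directly to the mixture where the signal $\frac34\pi_i$ would force $n = \tilde\Omega(Kd/\pi_{\min}^2)$.
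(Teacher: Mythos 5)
There is a genuine quantitative gap in your error analysis. In your decomposition $\frac1n\sum_{\ell} w_i(X_\ell,\mu)\ge \sum_j \frac{n_j}{n}\bigl(\mathbb{E}_j[w_i(X,\mu)]-\varepsilon_j\bigr)$ you compare the leading term with $\mathbb{E}_X[w_i(X,\mu)]\ge\frac34\pi_i$ and then need the total deviation to be of order $\pi_i$. But under the stated sample requirement \eqref{eq:sample_requirement_denom}, which only demands $n/\log n \gtrsim Kd\log(\tilde C/\delta)/\pi_{\min}$ with a \emph{universal} constant $C$, each uniform deviation satisfies only $\varepsilon_j \lesssim \sqrt{Kd\log(\tilde C n/\delta)/(n\pi_{\min})}\lesssim 1/\sqrt{C}$, i.e.\ a small \emph{constant}, not $O(\pi_i)$. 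Since the weights $n_j/n$ sum to one and $\mathbb{E}_j[w_i]\approx 0$ for $j\ne i$, your lower bound is of the form $\frac34\pi_i - O(1/\sqrt{C})$, which is vacuous whenever $\pi_i$ is small. Your proposed fix---``$\pi_i/\sqrt{\pi_{\min}}\ge\sqrt{\pi_{\min}}$ absorbs the $1/\sqrt{\pi_{\min}}$''---actually requires $n\gtrsim Kd/\pi_{\min}^3$ (or a constant $C$ depending on $\pi_i$), contradicting the sample complexity you are trying to prove. A further, smaller issue: Lemma \ref{Lem:denom_big} gives the population bound \emph{exactly} $\frac34\pi_i$, so there is no slack to subtract even a term of size $\frac{1}{16}\pi_i$; one must instead use the stronger component-level bound $\mathbb{E}_i[w_i(X,\mu)]\ge 1-\frac{1}{15}$ from Corollary \ref{Lem:w_exponentialy_close_to_1} under \eqref{R_mindenombig}.

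Your closing paragraph actually identifies the right mechanism, but the body of the proof does not implement it. The paper's proof exploits $w_i\ge 0$ to \emph{discard} all samples from components $j\neq i$: $\frac1n\sum_\ell w_i(X_\ell,\mu)\ge \frac{n_i}{n}\cdot\frac{1}{n_i}\sum_{X_\ell\in I_i}w_i(X_\ell,\mu)$. Then Lemma \ref{w_i_conc} is needed only for the $i$-th component (so no extension to $\mathbb{E}_j[w_i]$, $j\ne i$, is required), the target $\mathbb{E}_i[w_i(X,\mu)]\ge\frac{14}{15}$ is $\Theta(1)$, and a \emph{constant} additive accuracy $\frac1{10}$ suffices---which is exactly what $n_i\gtrsim Kd\log(\cdot)$, i.e.\ $n\gtrsim Kd/\pi_{\min}$, delivers. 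The factor $\pi_i$ in the conclusion then enters \emph{multiplicatively} through the Chernoff bound $n_i/n\ge\frac{9}{10}\pi_i$, not through the additive concentration error, and $\frac{9}{10}\pi_i\bigl(\frac{9}{10}-\frac{1}{15}\bigr)=\frac34\pi_i$ closes the argument. As written, your convex-combination route cannot reach the stated bound at the stated sample size; replacing it by the one-sided truncation to $I_i$ repairs the proof.
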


Next, we analyze the sub-Gaussian norm of $w_i(X,\mu)(X-\mu_i^*)$. 
 \cite{zhao2020statistical} bounded this quantity by  $CR_{\max}$.  We present an improved bound which does not depend on $R_{\max}$.  
 For the definition of the sub-Gaussian 
 norm $\|\cdot \|_{\psi_2}$, see the Appendix.

\begin{lemma} \label{Lemm:w_i(x-mu)sub-Gaussnorm}
	Fix $\lambda\in(0,\frac12)$. Let $X\sim \GMM$ with
	\begin{equation} 
	\label{eq:minimal_sep_gauss_norm}
	R_{\min}\ge\sqrt{\max\left(\frac{4}{1-2\lambda}\log\left(4\log(\tfrac32)\theta^{2} 
		\frac{1-2\lambda}{c(\lambda)}\right), \frac{4}{c(\lambda)}\log 2  \right)}.
	\end{equation}
	Suppose that $\mu\in\U$. Then for any $i\in [K]$, 
	\begin{equation}
	\|w_{i}\left(X,\mu\right)\left(X-\mu_i^*\right)\|_{\psi_{2}}\le \frac{16}{1-2\lambda} \label{eq:w_i(x-mu^*)sub-Gaussnorm}
	\end{equation}
	and  
	\begin{equation}
	\|w_{i}\left(X,\mu\right)\left(X-\mu_i\right)\|_{\psi_{2}}\le 24  \max\left( \frac{1}{1-2\lambda},  \lambda R_i \right). \label{eq:w_i(x-mu)sub-Gaussnorm}
	\end{equation} 
\end{lemma}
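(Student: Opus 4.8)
The plan is to bound the sub-Gaussian norm by splitting the space into two regions depending on whether the $i$-th weight $w_i(X,\mu)$ is large or small, since the product $w_i(X,\mu)(X-\mu_i^*)$ behaves very differently in the two regimes. First I would recall that $0\le w_i\le 1$ always, so on the event where $\|X-\mu_i^*\|$ is not too large the product is controlled by the (sub-Gaussian) behavior of a single Gaussian coordinate. The issue is the tail: when $X$ is drawn from a far-away component $j\ne i$, the norm $\|X-\mu_i^*\|$ can be of order $R_{ij}$, which is large, so we must use the smallness of $w_i(X,\mu)$ to compensate. Concretely, on a sample from component $j$, Proposition \ref{Lem:w_exponentialy_small} (and the pointwise bound implicit in its proof, restricted to the one-dimensional subspace spanned by $\mu_i-\mu_j$) gives that $w_i(X,\mu)$ is exponentially small in $R_{ij}^2$, and this exponential decay beats the polynomial growth of $\|X-\mu_i^*\|$.

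The key steps, in order, would be: (1) For a fixed unit vector $u\in\mathbb{R}^d$, estimate the moment generating function (or equivalently the moments) of the scalar random variable $w_i(X,\mu)\,\langle X-\mu_i^*,u\rangle$ under $X\sim\GMM$, by conditioning on which component generated $X$. (2) For the $i$-th component, bound $|w_i(X,\mu)\langle X-\mu_i^*,u\rangle|\le|\langle X-\mu_i^*,u\rangle|$, a standard $\mathcal{N}(0,1)$ variable, contributing an $O(1)$ term; the factor $\frac{1}{1-2\lambda}$ will enter when we track how $\mu_i$ (as opposed to $\mu_i^*$) sits relative to $\mu_i^*$ — note $\|\mu_i-\mu_i^*\|\le\lambda R_i$, so in \eqref{eq:w_i(x-mu)sub-Gaussnorm} the centering is at $\mu_i$ rather than $\mu_i^*$, which accounts for the extra $\lambda R_i$ alternative in the max. (3) For a component $j\ne i$, write $X=\mu_j^*+Z$ with $Z\sim\mathcal{N}(0,I_d)$, so $\|X-\mu_i^*\|\le R_{ij}+\|Z\|$, and bound $w_i(X,\mu)\le \exp(-c R_{ij}^2 + (\text{linear in }Z))$ using the one-dimensional reduction from the proof of Proposition \ref{Lem:w_exponentialy_small}; then the product of the exponentially small weight and the at-most-linearly-growing norm has a sub-Gaussian tail with an absolute constant, provided $R_{\min}$ is large enough to absorb the cross terms — this is exactly where condition \eqref{eq:minimal_sep_gauss_norm} is used, matching the shape $\frac{4}{1-2\lambda}\log(\cdots)$ and $\frac{4}{c(\lambda)}\log 2$. (4) Combine the $K$ contributions (one close, $K-1$ far) and take the supremum over $u$ to get the operator/vector sub-Gaussian norm; the sum over the $K-1$ far components is dominated by a geometric-type series in $e^{-cR_{ij}^2}$ and contributes only a bounded constant after the separation condition, leaving the final constants $16$ and $24$.

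The main obstacle I anticipate is step (3): controlling the product of the exponentially small weight and the growing norm \emph{uniformly over} $\mu\in\U$ and \emph{uniformly over} the direction $u$, while getting clean absolute constants. One must carefully track the worst-case placement of $\mu_i,\mu_j$ within $\U$ inside the exponent of $w_i$ (this is what forces the $(1-2\lambda)$ factor and the particular logarithmic arguments in \eqref{eq:minimal_sep_gauss_norm}), and then verify that the resulting one-dimensional integral $\int w_i \cdot (\text{poly})\,d\gamma$ has the claimed sub-Gaussian tail. A secondary technical point is converting the two-sided scalar bound for every unit $u$ into the vector sub-Gaussian norm $\|\cdot\|_{\psi_2}$; this is routine once the scalar MGF bound is in hand, but the bookkeeping of constants across the close component, the far components, and the $\mu$ versus $\mu^*$ centering is where most of the work lies. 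The rest — plugging in the definition of $c(\lambda)$ and checking the two branches of the max in \eqref{eq:minimal_sep_gauss_norm} suffice — is a routine calculation I would defer to the appendix.
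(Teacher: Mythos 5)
Your overall strategy is the same as the paper's (condition on which component generated $X$, reduce to one dimension as in Proposition \ref{Lem:w_exponentialy_small}, use $w_i\le 1$ for the close component and exponential smallness of $w_i$ for far components), but the step you yourself flag as the main obstacle — controlling the product of the exponentially small weight and the growing displacement for $j\ne i$ — is precisely the heart of the proof, and your sketch does not carry it out; "exponential decay beats polynomial growth" is not enough to produce the specific norm bounds $\tfrac{16}{1-2\lambda}$ and $24\max\bigl(\tfrac{1}{1-2\lambda},\lambda R_i\bigr)$ or to explain the two branches of \eqref{eq:minimal_sep_gauss_norm}. The paper's device that makes this step clean is missing from your plan: write $X=\eta+Z$ with $\eta\sim\mathcal N(0,I_d)$ and $Z$ the (discrete) component mean, and apply the triangle inequality for $\psi_2$ norms to $w_i(X,\mu)(X-\mu_i^*)=w_i(X,\mu)\eta+w_i(X,\mu)(Z-\mu_i^*)$. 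The Gaussian fluctuation is then disposed of once and for all via $w_i\le1$ (giving the additive constant $8$), and the far-component term involves only the \emph{deterministic} projection $(\mu_j^*-\mu_i^*)^\top s\le R_{ij}$ multiplied by the weight. One then bounds $\mathbb{E}_j\bigl[\exp\bigl(R_{ij}^2\tilde w_i(A,B,\nu)^2/t^2\bigr)\bigr]$ by splitting the one-dimensional integral at $\nu=-B/(2A)$ and using $A\le(1+2\lambda)R_{ij}$, $B\ge\tfrac{1-2\lambda}{2}R_{ij}^2$ from Lemma \ref{Lemm:ABbounds}: the left piece is at most $\tfrac12$ by a Gaussian tail bound (this fixes $t=8\tfrac{1+2\lambda}{1-2\lambda}$ and the $\tfrac{4}{c(\lambda)}\log 2$ branch of \eqref{eq:minimal_sep_gauss_norm}), and the right piece is at most $\tfrac32$ using $\tilde w_i^2\le\tfrac{\pi_i^2}{\pi_j^2}e^{-B}$ (this is where the $\tfrac{4}{1-2\lambda}\log(\cdots\theta^2\cdots)$ branch enters). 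In your formulation, where the noise stays inside the far-component term ($\|X-\mu_i^*\|\le R_{ij}+\|Z\|$), you must additionally control the interaction between the linear-in-$Z$ term in the exponent of $w_i$ and the $\|Z\|$ growth, which is exactly the bookkeeping you defer.

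Two smaller corrections. First, no geometric-type summation over the $K-1$ far components is needed or used: the definition of the $\psi_2$ norm only requires $\sum_j\pi_j\,\mathbb{E}_j[\exp(\cdot)]\le2$, so it suffices that each conditional expectation be at most $2$ (the $i$-th one equals $1$ for \eqref{eq:w_i(x-mu^*)sub-Gaussnorm} since $Z-\mu_i^*=0$ there), and the convex combination does the rest. Second, your attribution of the $\tfrac{1}{1-2\lambda}$ factor to the close component's centering is off for \eqref{eq:w_i(x-mu^*)sub-Gaussnorm}: that factor comes entirely from the worst-case placement of $\mu_i,\mu_j$ within $\U$ in the far-component exponent (via $A^*,B^*$); the centering at $\mu_i$ only produces the extra $\lambda R_i$ alternative (and the $\tfrac32R_{ij}$ in place of $R_{ij}$) in \eqref{eq:w_i(x-mu)sub-Gaussnorm}, as you correctly note later.
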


Using Lemma \ref{Lemm:w_i(x-mu)sub-Gaussnorm} we upper bound the concentration of  the numerator 
in the expression for the error in  the  sample EM update, Eq. (\ref{eq:sample_em_update}).

\begin{lemma} \label{lem:concentration_numerator} 
	Fix $\delta\in (0,1),\lambda\in (0,\frac12)$. Let $X_1, \ldots, X_n \overset{i.i.d.}{\sim}\GMM$ with $R_{\min}$ satisfying \eqref{eq:minimal_sep_gauss_norm}. 
	For $i\in [K]$ define $S_i=\frac1n \sum_{\ell=1}^nw_i(X_{\ell},\mu)(X_{\ell}-\mu_i^*)$ and the event
	\begin{equation}
	N_i =\left\{\sup_{\mu\in \mathcal{U}_{\lambda}} \left\|S_i- \mathbb{E}_X[w_i(X,\mu)(X-\mu_i^*)] \right\| \le \frac{C}{1-2\lambda}\sqrt{\frac{Kd\log\frac{\tilde{C}n}{\delta}}{n}}\right\} \label{sample_num_lemma_event}
	\end{equation}
	Then, with $\tilde{C}= 36K^2R_{\max}(\sqrt d+2R_{\max})^2$ and with a suitable choice of a universal constant $C$, the event $N_i$ occurs with probability at least $1-\frac{\delta}{2K}$.
\end{lemma}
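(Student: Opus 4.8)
The plan is to establish a uniform concentration bound for the empirical average $S_i$ around its population counterpart by combining the sub-Gaussian norm estimate of Lemma \ref{Lemm:w_i(x-mu)sub-Gaussnorm} with a standard covering/chaining argument over the parameter set $\mathcal{U}_\lambda$. First I would fix $i$ and view $g_\mu(x) = w_i(x,\mu)(x-\mu_i^*)$ as a function of the data indexed by $\mu\in\mathcal{U}_\lambda$. For a fixed $\mu$, the centered sum $S_i - \mathbb{E}_X[w_i(X,\mu)(X-\mu_i^*)]$ is an average of $n$ i.i.d. mean-zero random vectors, each (by \eqref{eq:w_i(x-mu^*)sub-Gaussnorm}) with sub-Gaussian norm at most $\tfrac{16}{1-2\lambda}$; a vector Bernstein/Hoeffding-type bound then gives that $\|S_i - \mathbb{E}_X[w_i(X,\mu)(X-\mu_i^*)]\| \lesssim \tfrac{1}{1-2\lambda}\sqrt{d/n}$ with probability $1-\delta'$, for a single $\mu$. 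The work is to upgrade this to a supremum over $\mathcal{U}_\lambda$.

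The key steps, in order, are: (i) verify that the separation hypothesis \eqref{eq:minimal_sep_gauss_norm} of Lemma \ref{Lemm:w_i(x-mu)sub-Gaussnorm} is implied by \eqref{minimal_separation}, so that the sub-Gaussian norm bound is in force throughout $\mathcal{U}_\lambda$; (ii) construct an $\epsilon$-net $\mathcal{N}_\epsilon$ of $\mathcal{U}_\lambda$ in the natural (product Euclidean) metric — since $\mathcal{U}_\lambda$ is a product of $K$ Euclidean balls of radius $\lambda R_i \le \lambda R_{\max}$ in $\mathbb{R}^d$ (or effectively $\mathbb{R}^{d_0}$ in the relevant subspace), its covering number satisfies $\log|\mathcal{N}_\epsilon| \lesssim Kd\log(R_{\max}/\epsilon)$; (iii) apply the single-$\mu$ concentration bound together with a union bound over $\mathcal{N}_\epsilon$, which produces the $\sqrt{Kd\log(\tilde C n/\delta)/n}$ rate once $\epsilon$ is chosen polynomially small in $n$; (iv) control the discretization error $\sup_{\mu,\mu'} \|g_\mu(X_\ell) - g_{\mu'}(X_\ell)\|$ for $\mu,\mu'$ within $\epsilon$ — here one bounds the Lipschitz constant of $w_i(\cdot,\mu)(\cdot-\mu_i^*)$ in $\mu$ on the high-probability event that all $\|X_\ell\|$ are at most $O(\sqrt d + R_{\max})$, which is where the $\sqrt d + 2R_{\max}$ factor in $\tilde C$ enters, and uses that $\|\nabla_\mu w_i\|$ is uniformly bounded (by $w_i(1-w_i)\|X-\mu_i\| + \sum_j w_iw_j\|X-\mu_j\| \le \|X-\mu_i\| + \cdots$, all $O(\sqrt d+R_{\max})$ on the good event); (v) combine, absorbing the polynomially-small $\epsilon$-contribution into the main term and folding the probability of the ``all samples bounded'' event into $\delta/(2K)$.

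The main obstacle I expect is step (iv): getting the discretization/Lipschitz estimate clean enough that its contribution is genuinely of lower order and does not reintroduce an $R_{\max}$-dependence outside the logarithm. The naive Lipschitz bound on $w_i(X,\mu)(X-\mu_i^*)$ in $\mu$ scales like $\|X-\mu_i^*\|\cdot\|X-\mu\| \lesssim (\sqrt d + R_{\max})^2$, which is why $\tilde C$ carries a $(\sqrt d + 2R_{\max})^2$ factor; one must make sure that choosing $\epsilon = \mathrm{poly}(1/n)$ and bounding $\sup_\ell\|X_\ell\|$ via a Gaussian maximal inequality over $n$ samples (each $X_\ell$ drawn from a component, so $\|X_\ell - \mu_{c(\ell)}^*\| \le \sqrt d + O(\sqrt{\log(n/\delta)})$ and hence $\|X_\ell\| \le R_{\max} + \sqrt d + O(\sqrt{\log(n/\delta)})$) keeps everything logarithmic. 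A secondary technical point is handling the vector-valued nature of the increments: rather than a scalar Hoeffding bound one needs either a dimension-free vector Bernstein inequality or, equivalently, a further $\tfrac12$-net of the unit sphere in the $d_0$-dimensional ambient subspace to reduce to scalar sub-Gaussian tails, which contributes another $O(d_0)$ to the log-covering number — harmless since it is already $\tilde O(Kd)$. Once these are in place, tuning the universal constant $C$ and the constant inside $\tilde C = 36K^2 R_{\max}(\sqrt d + 2R_{\max})^2$ is bookkeeping.
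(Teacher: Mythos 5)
Your proposal is correct and follows essentially the same route as the paper: a sub-Gaussian concentration bound for fixed $\mu$ based on the norm estimate \eqref{eq:w_i(x-mu^*)sub-Gaussnorm} of Lemma \ref{Lemm:w_i(x-mu)sub-Gaussnorm} (with a $\tfrac12$-net of the sphere to handle the vector-valued sum, as in Lemma \ref{norm_of_subGaussian}), combined with an $\varepsilon$-net of $\mathcal{U}_{\lambda}$ of log-cardinality $O(Kd\log(R_{\max}/\varepsilon))$ and a union bound; the paper simply packages these steps into the generic uniform-convergence Lemma \ref{uniform_convergence_lemma} and verifies its two hypotheses. The one place you diverge is the discretization step (your step (iv)): the paper does not condition on a high-probability event bounding $\sup_\ell\|X_\ell\|$, but instead verifies the Lipschitz condition \emph{in expectation} (Lemma \ref{l}, via the mean value theorem and $\mathbb{E}_X[\sup_{\mu}\|X-\mu_j\|\,\|X-v_i\|]\le(\sqrt d+2R_{\max})^2$, giving $L=K(\sqrt d+2R_{\max})^2$) and controls the net-approximation error by Markov's inequality with $\varepsilon=\delta/(6Ln)$. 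Both treatments work; yours costs an extra Gaussian maximal inequality over the $n$ samples and some $\log(n/\delta)$ bookkeeping that must be absorbed into the universal constant $C$ inside $\log(\tilde C n/\delta)$, while the paper's in-expectation version avoids this and yields the stated $\tilde C$ directly.
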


With all the pieces in place, we are now ready to prove Theorem \ref{sample_em_main_theorem}.

\begin{proof}[Proof of Theorem \ref{sample_em_main_theorem}]
	Consider the error of a single the update of the from \eqref{eq:sample_em_update} of the sample EM algorithm,
	\begin{equation*}
	\|\mu_i^+-\mu_i^*\| = \frac{\|\frac1n\sum_{\ell=1}^nw_i(X_{\ell},\mu)(X_{\ell}-\mu_i^*)\|}{\frac1n\sum_{\ell=1}^nw_i(X_{\ell},\mu)} \nonumber. 
	\end{equation*}
	Note that the requirement \eqref{minimal_separation} on $R_{\min}$ is more restrictive than  \eqref{R_mindenombig}. Also, the  sample size requirement \eqref{minimal_sample} is more restrictive than  \eqref{eq:sample_requirement_denom}. Thus, we may invoke Lemma \ref{sample_denom_lemma} and get that with probability at least $1-\frac{\delta}{2K}$, that event $D_i$ \eqref{sample_denom_lemma_event} occurs. 
	Hence, 
	\begin{align}
	\|\mu_i^+-\mu_i^*\| 
	& \le  \frac4{3\pi_i} \left\|S_i-\mathbb{E}_X[w_i(X,\mu)(X-\mu_i^*)]\right\|  + \frac4{3\pi_i} \left\|\mathbb{E}_X[w_i(X,\mu)(X-\mu_i^*)]\right\| 
	\nonumber
	\end{align}
	It follows from Theorem \ref{main_EM_population_theorem} that for $R_{\min}$ satisfying \eqref{minimal_separation}, the second term above is upper bounded by $ \frac12 \min( E(\mu),\lambda R_i)$. We thus continue by bounding the first term above.
	Note that our requirements on the minimal separation \eqref{minimal_separation} is more restrictive than the requirement in \eqref{eq:minimal_sep_gauss_norm}. Thus, we may invoke  Lemma \ref{lem:concentration_numerator} and obtain with probability at least $1-\frac{\delta}{2K}$ , that the event $N_i$ \eqref{sample_num_lemma_event} occurs.  Therefore, 
	\begin{equation}
	\|\mu_i^+-\mu_i^*\| \le \frac12 \min( E(\mu), \lambda R_i) + 	   \frac{C}{(1-2\lambda)\pi_i}\sqrt{\frac{Kd\log\frac{\tilde{C}n}{\delta}}{n}} \label{eq:finsampleerr}
	\end{equation}
	where  $C$ is a universal constant and $\tilde{C}= 36K^2R_{\max}(\sqrt d+2R_{\max})^2$.
	For $n$ sufficiently large  so that \eqref{minimal_sample} is satisfied, it holds that 
	$C\frac{1}{(1-2\lambda)\pi_i}\sqrt{\frac{Kd\log\frac{\tilde{C}n}{\delta}}{n}} \le \frac12\lambda R_i$
	and therefore $\|\mu_i^+-\mu_i^*\| \le \lambda R_i$. 
	By a union bound over all $i\in[K]$, with
	probability at least $1-\delta$, $\mu^+\in \U$. 
     This allows us to iteratively apply  \eqref{eq:finsampleerr} and obtain Eq. \eqref{convergence_up_to_statistical_errormain}. 
\end{proof}

\section{Simulations}
\label{sec:simulation}

\begin{figure}[h!]
    \centering
    \begin{subfigure}[b]{0.5\columnwidth}
      \includegraphics[width = \columnwidth ,keepaspectratio]{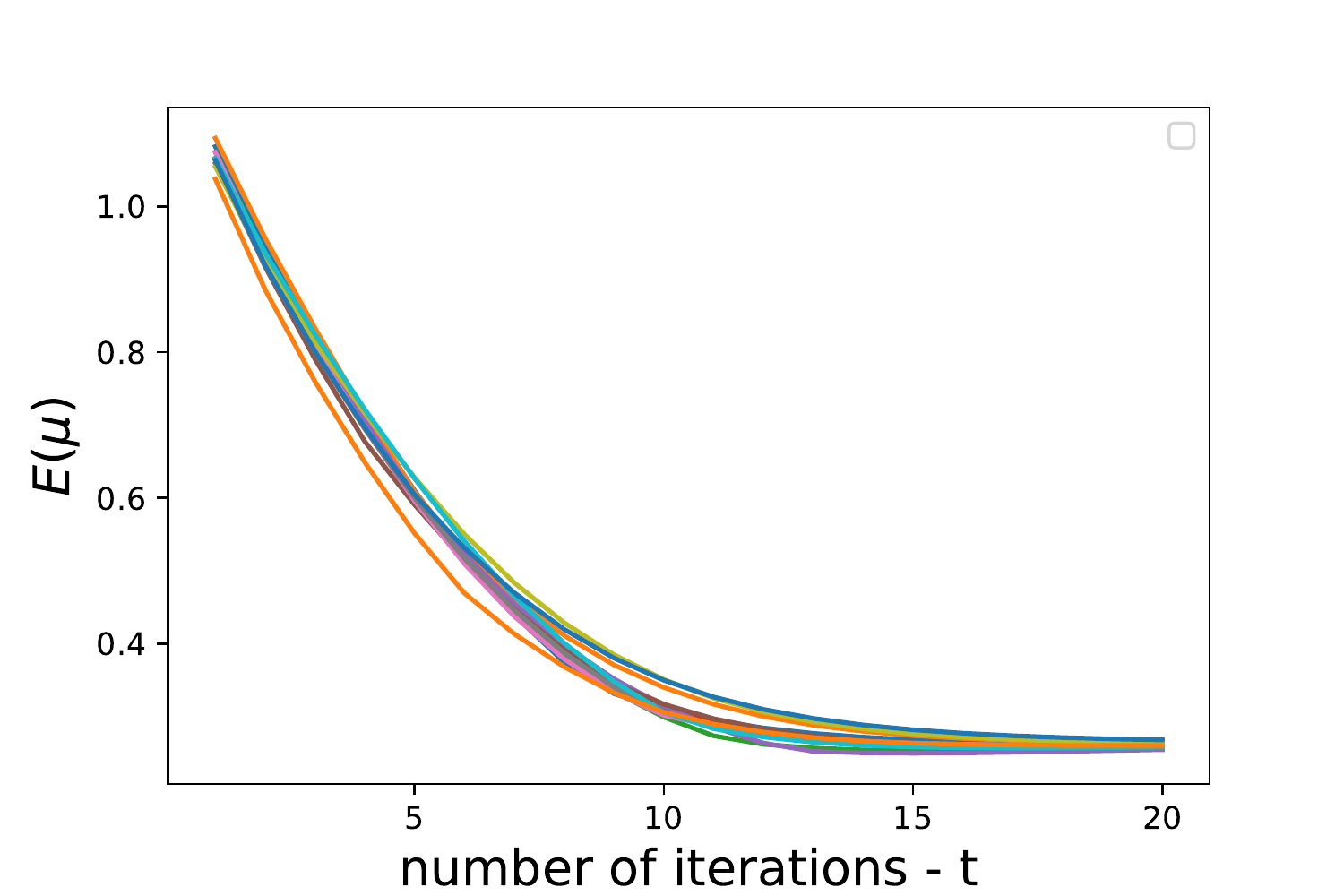}
        \caption{}
        \label{figa}
    \end{subfigure}%
       \begin{subfigure}[b]{0.5\columnwidth}
      \includegraphics[width = \columnwidth ,keepaspectratio]{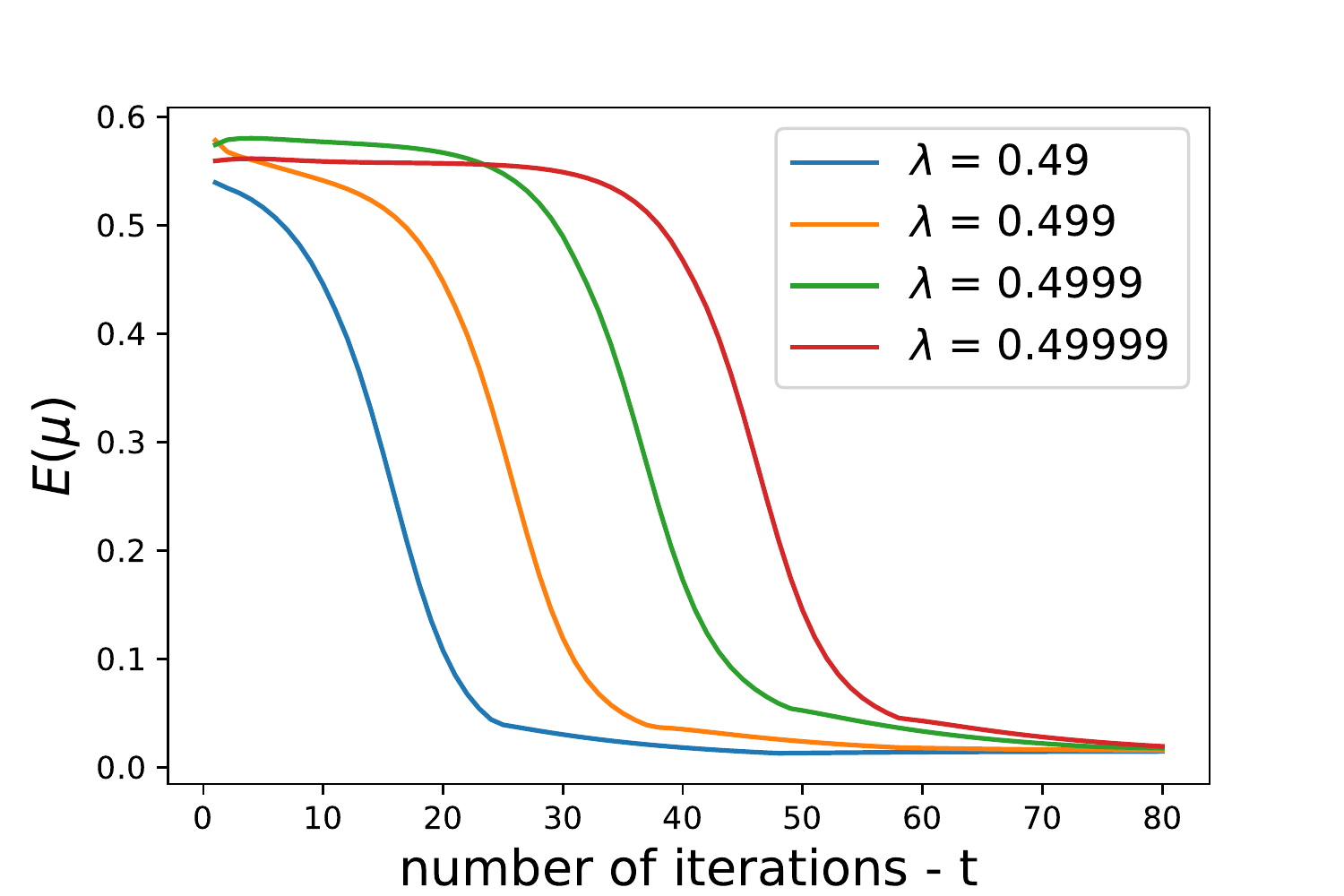}
        \caption{}
        \label{figb}
    \end{subfigure}%
    \hfill
    \begin{subfigure}[b]{0.5\columnwidth}
      \includegraphics[width = \columnwidth ,keepaspectratio]{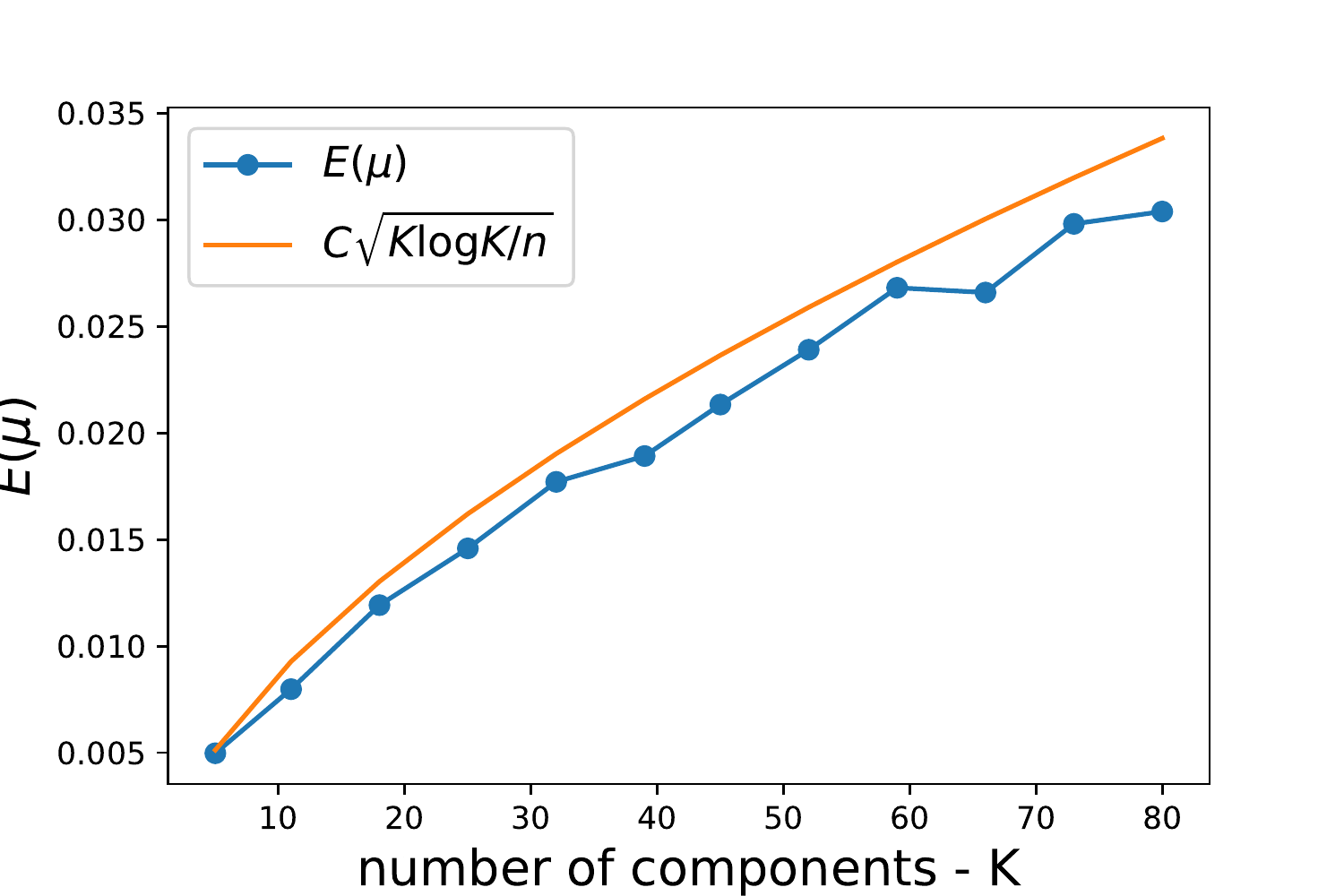}
        \caption{}
        \label{figc}
    \end{subfigure}%
        \begin{subfigure}[b]{0.5\columnwidth}
      \includegraphics[width = \columnwidth ,keepaspectratio]{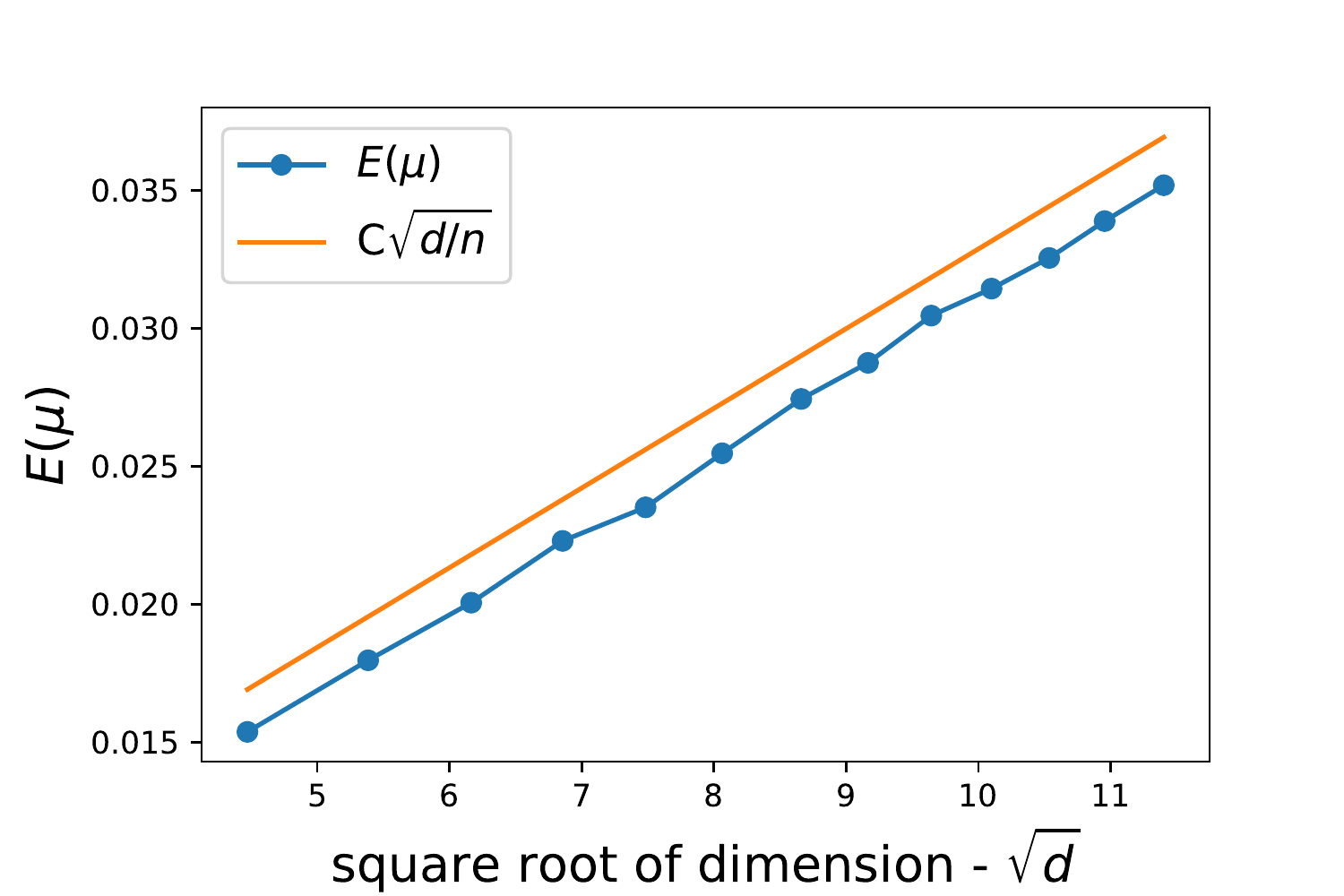}
        \caption{}
        \label{figd}
    \end{subfigure}%
    \caption{Top left: Convergence of  EM  for a GMM  with $64$ components in $\mathbb R^{64}$.  Each line is the error starting from a different random initialization.  Top right: Convergence of EM with initial estimates $\|\mu_i-\mu_i^*\|= \lambda R_i$ and $\lambda$ slightly smaller and than $\frac12$ for a $5$ component $10$ dimensional GMM.
	Bottom left: The error as a functions of the number of components for $1$ dimensional GMMs averaged over 25 runs. The error behaves like a constant $C$ times $\sqrt{\frac{K\log K}{n}}$.
	Bottom right: The error as a functions of the dimension for $5$ component GMMs with   averaged over 25 runs. The error behaves like a constant $C$ times $\sqrt{\frac{d}{n}}$. } 
	\label{fig:sim}
\end{figure}

We present numerical  simulations  with the EM algorithm
and compare them to our theoretical results.
The ability of the EM and gradient EM algorithms to learn Gaussian mixture models has been extensively demonstrated in simulations by various authors, see \cite{zhao2020statistical,yan2017convergence} and references therein.  Our simulations focus on several  quantities that appear in Theorem \ref{main_EM_population_theorem}.
First, we demonstrate that even in a 
setting with a moderately 
high dimension and with a large number of components, a relatively low separation suffices for the EM algorithm to yield accurate estimates. Unlike \cite{zhao2020statistical}, which presented numerical results for a $5$ component GMM in $\mathbb R^{10}$, we  consider a 64 component GMM with centers on the unit simplex in $\mathbb R^{64}$. We generate $5\cdot10^5$ samples from this GMM and 
consider several initializations where each initial estimate $\mu_i$ is sampled uniformly from a sphere of radius $0.45R_{i}$ around $\mu_i^*$. 
In Figure \ref{figa}  we plot the error $E(\mu)$ as a function of the number of iterations for 12 random initializations. We see that the EM algorithm yields accurate estimates  in this setting,  even though the separation between the different Gaussians is small relative to the dimension and to number of components.

Next, we  explore the effect of the constant $\lambda$  such that the initial estimates satisfy $\|\mu_i-\mu_i^*\|\le \lambda R_i$ for values of $\lambda$ slightly smaller than $\frac{1}{2}$. We consider a $5$ components GMM with centers on the unit simplex in $\mathbb{R}^{10}$. We generate $5\cdot10^5$ samples  and run the EM algorithm. The initial values $\mu_1$ and $\mu_2$ are chosen on the line connecting $\mu_1^*$ and $\mu_2^*$. The other  $3$  initial value $\mu_3,\mu_4,\mu_5$ are sampled uniformly from a unit sphere of radius $\lambda R_i$ and center $\mu_i^*$. As can be seen in Figure \ref{figb}, the EM algorithm yields accurate estimates for all considered values of $\lambda$ smaller than $\frac12$, even when $\lambda=\frac{1}{2}-10^{-5}$.

Finally, we consider the accuracy of EM for GMMs as we increase either
the number of components or the dimension. Specifically, we considered a $1$ dimensional GMM with $K$ equally spaced components with $R_{\min}=10$ and varying values of $K$. We generate $5\cdot10^5$ samples from each GMM and run the EM algorithm for $20$ iterations. Fig. \ref{figc} shows the error, averaged over $25$ runs as a function of $K$. As seen in the plot the error behaves like $\sqrt{K\log(K)/n}$, which is also the expected parametric error if all samples were labeled, which means that on average we had  $n/K$ samples from each component.
These results suggest that the upper bound of Eq. \eqref{convergence_up_to_statistical_errormain}  which depends on $\sqrt{K^3}$ may be improved. 
Next, we considered a sequence of GMMs
in increasing dimension $\mathbb{R}^d$ where $d\in [20,130]$. 
Each GMM had $5$ components with centers $Re_i$ for $1\le i\le 5$, where $R=10$ and $e_i$ is the standard Euclidean basis vector in $\mathbb{R}^d$. We generate $5\cdot10^5$ samples from each GMM and run the EM algorithm for $20$ iterations. We plot the error averaged over $25$ runs as a function of the square root of the dimension $d$. 
In accordance to Eq.  \eqref{convergence_up_to_statistical_errormain},  the empirical error seems 
to increase like $\sqrt{d}$.

\appendix

\section{PROOFS FOR SECTION \ref{section:population}} \label{appendix:population}


\subsection{Proof of Proposition \ref{Lem:w_exponentialy_small}}

Before proving Proposition \ref{Lem:w_exponentialy_small} we state several auxiliary lemmas.
\begin{lemma} \label{Lemm:monotonic}
	Let $g(A,B)$ be the following function of two variables, 
	\begin{equation}
	g(A,B) = \int \frac1{\sqrt{2\pi}} \frac1{1+\alpha e^{At+B}} e^{-t^2/2}dt
	\nonumber	\end{equation}
	where $\alpha>0 $ is a fixed constant. Then: (i) For any fixed $A$, $g(A,B)$ is monotonic decreasing in $B$; and (ii)
If in addition $\alpha>e^{-B} $ and $A>0$, then for any fixed $B$, $g(A,B)$ is monotonic increasing in $A$.
\end{lemma}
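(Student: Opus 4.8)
The plan is to prove both parts by differentiating $g$ under the integral sign. This is legitimate since the integrand is bounded by $1$ and, writing $\phi(u)=\frac{u}{(1+u)^2}$ (which satisfies $0\le\phi(u)\le\tfrac14$ on $(0,\infty)$), its $A$- and $B$-derivatives are dominated uniformly by the integrable function $\tfrac{|t|}{4}e^{-t^2/2}$; I would dispatch this justification in one line. The relevant computations are
\[
\frac{\partial}{\partial B}\frac{1}{1+\alpha e^{At+B}}=-\phi\!\big(\alpha e^{At+B}\big),\qquad
\frac{\partial}{\partial A}\frac{1}{1+\alpha e^{At+B}}=-t\,\phi\!\big(\alpha e^{At+B}\big).
\]
Part (i) is then immediate: for each fixed $t$ the map $B\mapsto\frac{1}{1+\alpha e^{At+B}}$ is strictly decreasing, so $\partial_B g(A,B)=-\int\frac{1}{\sqrt{2\pi}}\phi(\alpha e^{At+B})e^{-t^2/2}\,dt<0$, giving strict monotonicity in $B$.

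For part (ii), I would set $h(t)=\phi(\alpha e^{At+B})$, so $\partial_A g(A,B)=-\int\frac{1}{\sqrt{2\pi}}\,t\,h(t)e^{-t^2/2}\,dt$, and then use the evenness of the Gaussian weight to fold the integral onto the positive half-line:
\[
\partial_A g(A,B)=\int_0^\infty\frac{t}{\sqrt{2\pi}}\big(h(-t)-h(t)\big)e^{-t^2/2}\,dt.
\]
It then suffices to prove $h(-t)>h(t)$ for all $t>0$. The two facts I would invoke about $\phi$ are the inversion symmetry $\phi(u)=\phi(1/u)$ and the shape of $\phi$: strictly increasing on $(0,1]$ and strictly decreasing on $[1,\infty)$ (since $\phi'(u)=\frac{1-u}{(1+u)^3}$). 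Writing $a=\alpha e^{At+B}$ and $b=\alpha e^{-At+B}$ for $t>0$, one has $a>b>0$ and $ab=\alpha^2e^{2B}>1$, the last inequality being exactly where the hypothesis $\alpha>e^{-B}$ enters. A short case split finishes it: if $b\ge1$ then $a>b\ge1$ and the decreasing branch gives $\phi(a)<\phi(b)$; if $b<1$ then $ab>1$ forces $1/a<b<1$, so $\phi(a)=\phi(1/a)<\phi(b)$ by the increasing branch. Either way $h(t)<h(-t)$, the displayed integrand is strictly positive for $t>0$, and $\partial_A g>0$.

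I expect the only real obstacle to be the pairing step in part (ii): the integrand of $\partial_A g$ is not sign-definite, because increasing $A$ lowers $\frac{1}{1+\alpha e^{At+B}}$ at positive $t$ but raises it at negative $t$, so no pointwise comparison is available. It is precisely the combination of the Gaussian's symmetry, the identity $\phi(u)=\phi(1/u)$, and the threshold assumption $\alpha>e^{-B}$ (which makes $ab>1$) that forces the $+t$ and $-t$ contributions to combine with the correct sign. Everything else is routine bookkeeping.
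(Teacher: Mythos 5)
Your proof is correct and follows essentially the same route as the paper: differentiate under the integral, observe the $A$-derivative's integrand is not sign-definite, and pair the $t$ and $-t$ contributions, with the hypothesis $\alpha>e^{-B}$ (i.e.\ $\alpha^2e^{2B}>1$) forcing the correct sign. The only difference is cosmetic: you verify the pointwise comparison via the symmetry $\phi(u)=\phi(1/u)$ and unimodality of $\phi(u)=u/(1+u)^2$, whereas the paper does the same comparison by cross-multiplying and reducing to $(e^{At}-e^{-At})(\alpha^2e^{2B}-1)>0$.
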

\begin{proof}
	Since the function inside the integral is monotonically decreasing in $B$, 
	part (i) directly follows. To prove part (ii), we take the derivative with respect to $A$,
	\begin{equation}
	\frac{\partial }{\partial A}g(A,B) = \int  \frac{-\alpha t e^{At+B} }{(1+\alpha e^{At+B})^2}   \frac{e^{-t^2/2}}{\sqrt{2\pi}} dt.\nonumber
	\end{equation}
	Denote the function inside the integral by $f(t)$. Note that  $f(t)>0$ when $t<0$ and $f(t)<0$ when $t>0$. To show that the integral is positive it suffices to show that for all $t>0$ it holds that $-f(t)<f(-t)$. This condition reads as  \begin{equation}
	\frac{e^{-At}}{\left(1+\alpha e^{-At+B}\right)^{2}}>\frac{e^{At}}{\left(1+\alpha e^{At+B}\right)^{2}}. \nonumber
	\end{equation} 
	Some algebraic manipulations give that this condition is equivalent to    \begin{equation}
	\left(e^{At}-e^{-At}\right)\left(\alpha^{2}e^{2B}-1\right)>0\nonumber
	\end{equation} 
	which is indeed satisfied for $A,t>0$ and $\alpha>e^{-B}$.
\end{proof}

\begin{lemma} \label{Lemm:ABbounds}
	Fix any two distinct vectors $\mu_i^*, \mu_j^*\in \mathbb{R}^d$ and $\lambda\in(0,1/2)$. Denote the ball of radius $r$ about the origin in $\mathbb{R}^d$ by $B_d(0,r)$ and define \begin{equation}
	\Omega= B_d(0,\lambda\|\mu_i^*-\mu_j^*\|)\times B_d(0,\lambda\|\mu_i^*-\mu_j^*\|) \subset \mathbb{R}^d\times \mathbb{R}^d. \nonumber
	\end{equation}   Consider the two functions  $A,B: \Omega \to \mathbb{R}$ \begin{align}
	A(\xi_i, \xi_j)= \|\mu_i^*-\xi_i-\mu_j^*+\xi_j\| \label{DefA}\\
	B(\xi_i,\xi_j)= \frac{1}{2} \|\mu_{i}^{*}-\mu_{j}^{*}+\xi_{j}\|^{2}-\frac{1}{2}\|\xi_{i}\|^{2}.\label{DefB}
	\end{align}
	Then 
	for any $(\xi_i, \xi_j)\in \Omega$, \begin{align}
	A(\xi_i,\xi_j) \le (1+2\lambda)\|\mu_i^*-\mu_j^*\|= A^* 
	\label{A}
	\\B(\xi_i,\xi_j)\ge \frac{1-2\lambda}{2}\|\mu_i^*-\mu_j^*\|^2= B^* \label{B}. 
	\end{align}
\end{lemma}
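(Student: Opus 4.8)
The plan is to reduce both inequalities to the triangle inequality and its reverse form. Write $R=\|\mu_i^*-\mu_j^*\|$, so that the membership $(\xi_i,\xi_j)\in\Omega$ is precisely the statement $\|\xi_i\|\le\lambda R$ and $\|\xi_j\|\le\lambda R$.

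For \eqref{A}, I would regroup the argument of the norm as $\mu_i^*-\xi_i-\mu_j^*+\xi_j=(\mu_i^*-\mu_j^*)+(\xi_j-\xi_i)$ and apply the triangle inequality twice, obtaining $A(\xi_i,\xi_j)\le R+\|\xi_j\|+\|\xi_i\|\le R+2\lambda R=(1+2\lambda)R=A^*$.

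For \eqref{B}, I would treat the two terms separately. The first term is handled by the reverse triangle inequality: $\|\mu_i^*-\mu_j^*+\xi_j\|\ge R-\|\xi_j\|\ge(1-\lambda)R$. Since $\lambda<\tfrac12$, the right-hand side is nonnegative, so squaring preserves the inequality and $\tfrac12\|\mu_i^*-\mu_j^*+\xi_j\|^2\ge\tfrac12(1-\lambda)^2R^2$. The subtracted term is bounded directly by $\tfrac12\|\xi_i\|^2\le\tfrac12\lambda^2R^2$. Combining the two and using the elementary identity $(1-\lambda)^2-\lambda^2=1-2\lambda$ gives $B(\xi_i,\xi_j)\ge\tfrac12R^2\big((1-\lambda)^2-\lambda^2\big)=\tfrac{1-2\lambda}{2}R^2=B^*$.

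There is no real obstacle here: the lemma is a bookkeeping estimate, and the only point deserving a moment's attention is verifying that $1-\lambda\ge0$ before squaring the reverse-triangle bound, which is immediate from $\lambda\in(0,\tfrac12)$. These two bounds on $A$ and $B$ will then be fed into Lemma \ref{Lemm:monotonic}, controlling the exponent $e^{At+B}$ uniformly over $\Omega$, in the proof of Proposition \ref{Lem:w_exponentialy_small}.
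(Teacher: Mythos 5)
Your proof is correct and follows essentially the same elementary route as the paper: the triangle inequality for $A$, and for $B$ the extremal bounds $\|\mu_i^*-\mu_j^*+\xi_j\|\ge(1-\lambda)\|\mu_i^*-\mu_j^*\|$ and $\|\xi_i\|\le\lambda\|\mu_i^*-\mu_j^*\|$ combined via $(1-\lambda)^2-\lambda^2=1-2\lambda$. Your explicit check that $(1-\lambda)\|\mu_i^*-\mu_j^*\|\ge 0$ before squaring is a small point of care the paper leaves implicit, but the arguments are the same.
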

\begin{proof}
	We first prove the upper bound on $A$. By the triangle inequality
	\begin{equation}
	A(\xi_i,\xi_j)
	\le \|\xi_i\|+\|\xi_j\| + \|\mu_i^*-\mu_j^*\| \le (1+2\lambda)\|\mu_i^*-\mu_j^*\| \nonumber
	\end{equation}
	As for the lower bound on $B$, clearly it is obtained when $\|\xi_i\|$ is maximal, i.e. $\|\xi_i\| =\lambda\|\mu_i^*-\mu_j^*\|$. Finally, the vector $\xi_j=\lambda(\mu_i^*-\mu_j^*)$ minimizes \eqref{DefB} regardless of the value of $\xi_i$. This yields the lower bound of \eqref{B} for $B$. 
\end{proof}

\begin{proof}[Proof of Proposition \ref{Lem:w_exponentialy_small}]
	
	Recall the definition of the weight $w_j(X,\mu)$ in Eq \eqref{Eq:w}. 
	Since all the terms in the denominator are positive, we may upper bound $w_j$ by taking into account only the two terms with indices $k=i$ and $k=j$. 
	Hence,
	\begin{equation}
	w_j(X,\mu) \leq \frac{\pi_j e^{-\frac{\|X-\mu_j\|^2}{2}}}
	{\pi_j e^{-\frac{ \|X-\mu_j\|^2}{2}} + \pi_i  e^{ - \frac{\|X-\mu_i\|^2}{2}}}
	= 
	\frac{1}{1+\frac{\pi_i}{\pi_j}e^{\frac{\|X-\mu_j\|^2}{2} -\frac{\|X-\mu_i\|^2}{2}}}.  \label{wi1d}
	\end{equation}
	Next, since $X\sim \mathcal{N}(\mu_i^*, I_d)$ we may write $X=\mu_i^*+\eta=\mu_i+\eta+\xi_i$ where $\eta\sim \mathcal{N}(0,I_d)$ and $\xi_i= \mu_i^*- \mu_i$. Therefore, \begin{align}
	\|X-\mu_{j}\|^{2}-\|X-\mu_{i}\|^{2}	&= 2\eta^{\top}\left(\mu_{i}-\mu_{j}\right)+\|\mu_{i}^{*}-\mu_{j}^{*}+\xi_{j}\|^{2}-\|\xi_{i}\|^{2}. \label{wi1d2}
	\end{align}
	Note that by definition
	$\eta^{\top} (\mu_i-\mu_j)$ is a univariate Gaussian random variable with mean zero
	and variance $\|\mu_i-\mu_j\|^2$. Hence, we may write $\eta^{\top} (\mu_i-\mu_j) = \|\mu_i-\mu_j\|\nu$  where $\nu\sim \mathcal{N}(0,1)$. Defining $\tilde{w}(A,B,\nu)=1/(1+\frac{\pi_i}{\pi_j}e^{A\nu+B})$, we therefore have 
\begin{align}
	\mathbb{E}_{i}[w_j(X,\mu)] 
	&\le
	\mathbb{E}_{\nu} \left[\tilde{w}(A,B,\nu)\right]=\frac1{\sqrt{2\pi}}\intop\tilde{w}(A,B,t) e^{-\frac{t^2}{2}} dt = g(A,B)
	\label{1dimexpectation}
\end{align}
	with $A=A(\xi_i, \xi_j)$ and $B=B(\xi_i, \xi_j)$ as defined  in \eqref{DefA}  and \eqref{DefB}, respectively. 
	Since $\|\xi_i\|,\|\xi_j\|\le \lambda\|\mu_i^*-\mu_j^*\|$, then 
	$A\ge (1-2\lambda)\|\mu_i^*-\mu_j^*\|$. 
	Therefore, $A>0$ for $\lambda<\frac{1}{2}$. 
	By Lemma \ref{Lemm:ABbounds}, $B\geq B^*$
    with $B^*$ given 
in \eqref{B}. 
    The condition (\ref{Rminproposition}) implies that $\frac{\pi_i}{\pi_j} > e^{-B^*}\geq e^{-B}$. Hence, the conditions of Lemma \ref{Lemm:monotonic} are satisfied and we can upper bound 
    $g(A,B)$ in \eqref{1dimexpectation}, by $g(A^*,B^*)$ with $A^*$ and $B^*$ respectively, as given in Equations \eqref{A} and \eqref{B} of Lemma \ref{Lemm:ABbounds}. Therefore, 
	\begin{equation}
	\mathbb{E}_{i}[w_j(X,\mu)] \le \frac1{\sqrt{2\pi}}\intop \tilde{w} (A^*, B^*,t)e^{-\frac{t^2}{2}} dt=I. 
	\nonumber
	\end{equation}
	To upper bound the integral $I$ we split it into two parts based on the sign of $A^*t+B^*$.
	\begin{equation}
	I = \frac1{\sqrt{2\pi}}\intop_{-\infty}^{-B^*/A^*} \tilde{w}(A^*,B^*,t) e^{-\frac{t^2}{2}}dt 
	+
	\frac1{\sqrt{2\pi}}\intop_{-B^*/A^*}^{\infty} \tilde{w}(A^*,B^*,t)e^{-\frac{t^2}{2}}dt  =I_1+I_2. \nonumber
	\end{equation}
	For $I_1$, where $A^*t+B^*<0$, we upper bound $\tilde w(A^*,B^*,t)\leq 1$. Since both $A^*$ and $B^*$ are positive, we have that $-\frac{B^*}{A^*}<0$. We can therefore use Chernoff's bound to get \begin{equation}
	I_1\le  \intop_{-\infty}^{-B^*/A^*} \frac{1}{\sqrt{2\pi}} e^{-\frac{t^2}{2}}dt \le e^{-\frac{1}{2}\left( \frac{B^*}{A^*}\right)^2}. \label{eq:bound_I1}
	\end{equation}
	For $I_2$, where $A^*t+B^*>0$ we upper bound the integral by ignoring the constant $1$ in the denominator. Completing the square and changing variables by $z=t+A^*$ we get \begin{align}
    	I_2 &\le \intop_{-B^*/A^*}^{\infty} \frac{1}{\sqrt{2\pi}} \frac{\pi_j}{\pi_i}e^{-\frac{t^2}{2}-A^*t-B^*}dt \nonumber \\&= e^{ \frac{A^{*2}}{2}-B^*}\intop_{-B^*/A^*}^{\infty} \frac{1}{\sqrt{2\pi}} \frac{\pi_j}{\pi_i}e^{-\frac{(t+A^*)^2}{2}}dt= e^{ \frac{A^{*2}}{2}-B^*}\intop_{A^*-B^*/A^*}^{\infty} \frac{1}{\sqrt{2\pi}} \frac{\pi_j}{\pi_i}e^{-\frac{z^2}{2}}dz. \nonumber
	\end{align}
	Using the definitions of $A^*$ and $B^*$ in \eqref{A} and \eqref{B} we note that for $\lambda>0$, 
	\begin{equation}
	A^*-\frac{B^*}{A^*}= \frac{2\left(1+2\lambda\right)^{2}-\left(1-2\lambda\right)}{2(1+2\lambda)} \|\mu_{i}^{*}-\mu_{j}^{*}\| >0.
	\nonumber
	\end{equation}
	We can therefore apply Chernoff's bound on the above and obtain \begin{equation}
	I_2 \le \frac{\pi_j}{\pi_i}e^{ \frac{A^{*2}}{2}-B^*- \frac{1}{2}\left(A^*-\frac{B^*}{A^*}\right)^2}=  \frac{\pi_j}{\pi_i}e^{ -\frac12 \left(\frac{B^*}{A^*} \right)^2}.\label{eq:bound_I2}
	\end{equation}
	Combining the two bounds (\ref{eq:bound_I1}) and (\ref{eq:bound_I2}) yields Eq \eqref{eq:Eiwi}.
	
\end{proof}

\begin{proof} [Proof of Corollary \ref{Lem:w_exponentialy_close_to_1}]
	By definition, the sum of all weights is one. Thus, 
\begin{equation}
	w_i(X,\mu)=1-\sum_{j\ne i}w_j(X,\mu) . \nonumber
	\end{equation}  
	By Proposition \ref{Lem:w_exponentialy_small}  and the linearity of expectation \begin{equation}
	\mathbb{E}_i[w_i(X,\mu)]\ge 1-
	\sum_{j\ne i}\left(1 + \frac{\pi_j}{\pi_i}\right)e^{-c(\lambda)R_{ij}^2} \ge 1 -(K-1)(1+\theta)e^{-c(\lambda)R_i^2}.  \nonumber
	\end{equation}
\end{proof}

\subsection{Proof of Lemma \ref{Lem:denom_big}}
\begin{proof}
	Since $X$ is distributed as a GMM with $K$ components and $w_i(X,\mu)>0$, the expected value is greater than if we  consider only the $i$-th component of the GMM.
	\begin{align*} 
	\mathbb{E}[w_i(X,\mu)]&= \sum_{j=1}^{K}\pi_j\mathbb{E}_{j}[w_i(X,\mu)]
	\ge  \pi_i\mathbb{E}_{i}[w_i(X,\mu)].
	\end{align*}
	Since the requirement (\ref{R_mindenombig}) on $R_{\min}$  implies \eqref{Rminproposition}, it follows from Corollary \ref{Lem:w_exponentialy_close_to_1} that \begin{equation} 
	\mathbb{E}_X[w_i(X,\mu)]\ge \pi_i \left(1- (K-1)(1+\theta)  )e^{-c(\lambda)R_i^2}\right).
	\nonumber
	\end{equation}
	Furthermore, Eq. (\ref{R_mindenombig}) implies that $
	(K-1)(1+\theta )e^{-c(\lambda)R_i^2} \le \frac{1}{4} $. 
\end{proof}

\subsection{Proof of Lemma \ref{Lem:bound_expectation_grad}} 

The proof consists of several steps. First, in Lemma  \ref{reduce_dim_lemma} we reduce the dimension to $d_0=\min(d,2K)$. Next, in Lemma \ref{Lemma:normbounds} we bound $\|\mathbb{E}_{k}[(X-v_i)(X-\mu_j)^\top]\|_{op}$ in terms of  $d_0,R_i,R_j$ and $R_ij$. We then present the proof of the Lemma.

We first  introduce notations. For  $\mu,v\in \mathbb{R}^{Kd}$ and  $i,j,k\in [K]$ with $i\ne j$ we define,
  \begin{align}
     V_{ij}^k(\mu,v)=\|\mathbb{E}_k[w_i(X,\mu)w_j(X,\mu) (X-v_i)(x-\mu_j)^\top ]\|_{op}
     \label{Vijk}
     \\
     V_{ii}^k(\mu,v)=\|\mathbb{E}_k[w_i(X,\mu)(1-w_i(X,\mu))(X-v_i)(x-\mu_i)^\top]\|_{op}. \label{Viik}
\end{align}

Suppose that $X\sim \mathcal{N}(\mu_k^*,I_d)$.  Let $\Gamma$ be a rotation matrix such that $(\Gamma\mu_{i})^\top=(\overline{\mu_{i}}^{\top},0_{\left[d-d_{0}\right]_{+}}^{\top})$ and $\left(\Gamma v_i\right)^\top=(\overline{v_{i}}^{\top},0_{\left[d-d_{0}\right]_{+}}^{\top})$, 
	for all $i\in [K]$.  Write $\overline{X}^{d_{0}}$ for the first  $d_{0}$ coordinates of $\Gamma X$ and $\overline{X}^{d-d_{0}}$ for the remaining coordinates. We define 
\begin{align}
     \overline{V}_{ij}^k(\mu,v)=
\|\mathbb{E}_{k}[w_{i}(\overline{X}^{d_{0}},\overline{\mu})(w_{j}(\overline{X}^{d_{0}},\overline{\mu}))(\overline{X}^{d_{0}}-\overline{v_{i}})(\overline{X}^{d_{0}}-\overline{\mu_{j}})^{\top}]\|_{op} 
     \label{Vijkd_0}
     \\
\overline{V}_{ii}^{k}(\mu,v)=\|\mathbb{E}_{k}[w_{i}(\overline{X}^{d_{0}},\overline{\mu})(1-w_{i}(\overline{X}^{d_{0}},\overline{\mu}))(\overline{X}^{d_{0}}-\overline{v_{i}})(\overline{X}^{d_{0}}-\overline{\mu_{i}})^{\top}]\|_{op}. \label{Viikd_0}
\end{align}
\begin{lemma} \label{reduce_dim_lemma}
	For any $i,j,k\in[K]$ with $i\neq j$,
	\begin{align*}
	 V_{ij}^k	\le
	\max\left( \overline{V}_{ij}^k,\mathbb{E}_k[w_i(X,\mu)w_j(X,\mu)]\right) 
	,\\
    V_{i,j}^k 
	\le
	\max\left(    \overline{V}_{i,j}^k,\mathbb{E}_k[w_i(X,\mu)(1-w_i(X,\mu))]\right)
	\end{align*} 
\end{lemma}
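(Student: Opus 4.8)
The plan is to reduce the $d$-dimensional expectations defining $V_{ij}^k$ and $V_{ii}^k$ to the $d_0$-dimensional expectations $\overline{V}_{ij}^k$ and $\overline{V}_{ii}^k$ by exploiting the rotational symmetry of the standard Gaussian together with the fact that the weights $w_i$, after the rotation $\Gamma$, depend only on the first $d_0$ coordinates $\overline{X}^{d_0}$. Since $\Gamma$ is orthogonal, $\Gamma X \sim \mathcal{N}(\Gamma\mu_k^*, I_d)$, and the first $d_0$ and last $d-d_0$ coordinates are independent. After rotating, $w_i(X,\mu) = w_i(\Gamma X, \Gamma\mu)$ and, because $\Gamma\mu_i$ has zero entries past coordinate $d_0$ for every $i$, the quadratic forms $\|\Gamma X - \Gamma\mu_i\|^2$ split as $\|\overline{X}^{d_0} - \overline{\mu_i}\|^2 + \|\overline{X}^{d-d_0}\|^2$, and the common term $\|\overline{X}^{d-d_0}\|^2$ cancels in the ratio defining $w_i$. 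Hence $w_i(X,\mu)$ is a function of $\overline{X}^{d_0}$ alone; the same holds for $\mu^\tau$-based weights.

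The next step is to write the matrix $\mathbb{E}_k[w_i w_j (X-v_i)(X-\mu_j)^\top]$ in the rotated coordinates as $\Gamma^\top \mathbb{E}_k[w_i w_j (\Gamma X - \Gamma v_i)(\Gamma X - \Gamma\mu_j)^\top]\Gamma$, so its operator norm equals that of the un-rotated middle factor. Now block-decompose $\Gamma X - \Gamma v_i = (\overline{X}^{d_0} - \overline{v_i}, \overline{X}^{d-d_0})$ and similarly for $\mu_j$. The resulting $2\times 2$ block matrix has: a top-left block $\mathbb{E}_k[w_iw_j(\overline{X}^{d_0}-\overline{v_i})(\overline{X}^{d_0}-\overline{\mu_j})^\top]$, whose operator norm is exactly $\overline{V}_{ij}^k$; off-diagonal blocks of the form $\mathbb{E}_k[w_iw_j(\overline{X}^{d_0}-\overline{v_i})(\overline{X}^{d-d_0})^\top]$, which vanish because $\overline{X}^{d-d_0}$ is a mean-zero Gaussian independent of everything else in the expectation; and a bottom-right block $\mathbb{E}_k[w_iw_j \overline{X}^{d-d_0}(\overline{X}^{d-d_0})^\top] = \mathbb{E}_k[w_iw_j]\cdot I_{d-d_0}$, again by independence and $\mathbb{E}[\overline{X}^{d-d_0}(\overline{X}^{d-d_0})^\top]=I_{d-d_0}$. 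The operator norm of a block-diagonal matrix is the maximum of the operator norms of its blocks, which yields $V_{ij}^k \le \max(\overline{V}_{ij}^k, \mathbb{E}_k[w_iw_j])$. The identical argument with $w_i(1-w_i)$ replacing $w_iw_j$ gives the second inequality.

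The main obstacle, such as it is, is bookkeeping: one must carefully verify that the off-diagonal blocks really vanish (this needs that $w_i, w_j$ depend only on $\overline{X}^{d_0}$, so that pulling out the conditional expectation over $\overline{X}^{d-d_0}$ leaves the mean-zero factor $\mathbb{E}[\overline{X}^{d-d_0}]=0$), and that the bottom-right block is a scalar multiple of the identity rather than something with larger operator norm. One also must handle the degenerate case $d \le 2K$, where $d_0 = d$, $[d-d_0]_+ = 0$, and the statement is trivial since $V_{ij}^k = \overline{V}_{ij}^k$. Apart from these routine checks, the argument is a direct consequence of orthogonal invariance of $\mathcal{N}(0,I)$ and independence of Gaussian coordinates.
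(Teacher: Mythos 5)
Your proposal is correct and follows essentially the same route as the paper's proof: rotate by $\Gamma$ so that the weights depend only on $\overline{X}^{d_0}$, block-decompose $(X-v_i)(X-\mu_j)^\top$, use independence of the Gaussian coordinate blocks to kill the off-diagonal blocks and reduce the bottom-right block to $\mathbb{E}_k[w_iw_j]\,I$, and bound the operator norm of the block-diagonal matrix by the maximum of its blocks' norms. Your explicit verification that the off-diagonal blocks vanish (via the mean-zero tail coordinates, which implicitly requires $\Gamma$ to also send $\mu_k^*$ into the first $d_0$ coordinates, as the span of all $\mu_i,\mu_i^*$ has dimension at most $d_0$) is actually spelled out more carefully than in the paper.
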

The proof is similar to the one  in \cite{yan2017convergence}.
We include it  for our paper to be self contained. 

\begin{proof}
	We prove only the first inequality. The proof of the second inequality is similar. Note that $\left(X-v_i\right)\left(X-\mu_{j}\right)^{\top}$ is equal to
	\begin{equation}
	\Gamma^{\top}\left[\begin{array}{cc}
	\left(\overline{X}^{d_{0}}-\overline{v_i}\right)\left(\overline{X}^{d_{0}}-\overline{\mu_{j}}\right)^{\top} & \left(\overline{X}^{d_{0}}-\overline{v_i}\right)\left(\overline{X}^{\left[d-d_{0}\right]_{+}}\right)^\top
	\\
	\overline{X}^{\left[d-d_{0}\right]_{+}}\left(\overline{X}^{d_{0}}-\overline{\mu_{j}}\right)^{\top} & \overline{X}^{\left[d-d_{0}\right]_{+}}\left(\overline{X}^{\left[d-d_{0}\right]_{+}}\right)^\top
	\end{array}\right]\Gamma.  
	\nonumber
	\end{equation}
	Now, since $\Gamma$ is a rotation matrix and  the last $[d-d_0]_+$ coordinates of $\Gamma\mu_i$ are $0$ 
	we get that $w_i(X,\mu)=w_i(\overline{X}^{d_0},\overline{\mu})$.
	Therefore   $w_{i}(X,\mu),\left(\overline{X}^{d_{0}}-\overline{v_i}\right),\left(\overline{X}^{d_{0}}-\overline{\mu_{j}}\right)$ are independent of $\overline{X}^{\left[d-d_{0}\right]_{+}}$. Thus, 
	\begin{equation*}
	V_{ij}^k 
	 \le
	 \left\Vert \begin{array}{cc}
	\overline{V}^{ij}_k & 0\\
	0 & C^{ij}_k
	\end{array}\right\Vert _{op}\le\max\left(\overline{V}_{ij}^k(\mu,v),\|C^{ij}_k\|_{op}\right)
	\end{equation*}
	where $V^{ij}_k$ and $\overline{V}^{ij}_k$ are defined in  \eqref{Vijk}  and \eqref{Vijkd_0}, respectively and
	\begin{align*}
	C^{ij}_k&=\mathbb{E}_{k}\left[w_{i}\left(\overline{X}^{d_{0}},\overline{\mu}\right)w_{j}\left(\overline{X}^{d_{0}},\overline{\mu}\right)\overline{X}^{\left[d-d_{0}\right]_{+}}\left(\overline{X}^{\left[d-d_{0}\right]_{+}}{}\right)^{\top}\right]	\\
	&=
	\mathbb{E}_{\overline{X}^{d_{0}}}\left[w_{i}\left(\overline{X}^{d_{0}},\overline{\mu}\right)w_{j}\left(\overline{X}^{d_{0}},\overline{\mu}\right)\right]\mathbb{E}_{\overline{X}^{\left[d-d_{0}\right]_{+}}}\left[\overline{X}^{\left[d-d_{0}\right]_{+}}
	\left(\overline{X}^{\left[d-d_{0}\right]_{+}}\right)^\top\right]
	\\&=
	\mathbb{E}_{\overline{X}^{d_{0}}}\left[w_{i}\left(\overline{X}^{d_{0}},\overline{\mu}\right)w_{j}\left(\overline{X}^{d_{0}},\overline{\mu}\right)\right]I_{\left[d-d_{0}\right]_{+}}
	\end{align*}
	Since $w_i(X,\mu)=w_i(\overline{X}^{d_0},\overline{\mu})$, we may return to the original variables and write  
	\begin{equation*}
	\|C^{ij}_k\|_{op}=\mathbb{E}_{k}\left[w_{i}\left(X,\mu\right)w_{j}\left(X,\mu\right)\right].
	\end{equation*}
	Hence, the lemma follows.
\end{proof}

\begin{lemma} \label{Lemma:normbounds}
	fix $\lambda\in (0,\frac12)$. Let $(\mu_1^*, \ldots, \mu_K^*)$ be the centers of a $K$ component GMM. Let $X\sim \mathcal{N}(\mu_k^*, I_d)$ for some $k\in [K]$. Then, for any $i,j\in [K]$ and any $\mu, v \in \mathcal{U}_{\lambda}$,  
	\begin{equation} \label{Eq:norms_bound_final}
	\mathbb{E}_{k}[\|(X-v_i)(X -\mu_j)^\top\|_{op}^2] \le C_0\begin{cases}
	\max\left(d^{2},R_{i}^{4}\right) & i=j=k\\
	\max\left(d^{2},R_{ik}^{2}R_{jk}^{2}\right) & k\ne i,k\ne j\\
	\max\left(d^{2},R_{ij}^{4}\right) & k=i,k\ne j
	\end{cases} 
	\end{equation}
	where $C_0$ is a universal constant, for example we can take $C_0=14$.
\end{lemma}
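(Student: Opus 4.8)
The plan is to bound the second moment $\mathbb{E}_k[\|(X-v_i)(X-\mu_j)^\top\|_{op}^2]$ by first observing that for rank-one matrices the operator norm is just the product of the norms of the two vectors, so $\|(X-v_i)(X-\mu_j)^\top\|_{op}^2 = \|X-v_i\|^2\,\|X-\mu_j\|^2$. Thus I need to control $\mathbb{E}_k\big[\|X-v_i\|^2\|X-\mu_j\|^2\big]$ when $X\sim\mathcal N(\mu_k^*,I_d)$. The natural step is to write $X = \mu_k^* + \eta$ with $\eta\sim\mathcal N(0,I_d)$, so that $X - v_i = \eta + (\mu_k^*-v_i)$ and $X-\mu_j = \eta + (\mu_k^*-\mu_j)$. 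Writing $a = \mu_k^*-v_i$ and $b = \mu_k^*-\mu_j$, I must estimate $\mathbb{E}[\|\eta+a\|^2\|\eta+b\|^2]$. Expanding the two quadratic forms and using standard Gaussian moment identities (odd moments vanish, $\mathbb{E}\|\eta\|^2 = d$, $\mathbb{E}\|\eta\|^4 = d^2+2d$, $\mathbb{E}[(\eta^\top a)^2] = \|a\|^2$, $\mathbb{E}[\|\eta\|^2(\eta^\top a)(\eta^\top b)] $-type terms, etc.) gives an explicit polynomial in $d$, $\|a\|^2$, $\|b\|^2$ and $\langle a,b\rangle$. By Cauchy--Schwarz $|\langle a,b\rangle| \le \|a\|\|b\|$, so the whole expression is bounded by a universal constant times $\max(d^2, \|a\|^4, \|b\|^4, \|a\|^2\|b\|^2)$, and since $\max(\|a\|^4,\|b\|^4)\le$ (something comparable to) $\|a\|^2\|b\|^2$ up to constants isn't quite free, I will more carefully keep it as $C_0\max(d^2,\|a\|^2\|b\|^2)$ after noting cross terms like $d\|a\|^2$ are dominated by $\frac12(d^2+\|a\|^4)$ and similar AM--GM bounds; a small constant like $C_0=14$ should comfortably absorb all the numerical coefficients.

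The remaining work is to translate $\|a\| = \|\mu_k^*-v_i\|$ and $\|b\| = \|\mu_k^*-\mu_j\|$ into the quantities $R_i, R_{ik}, R_{jk}$ appearing in the three cases. Here I use that $\mu,v\in\mathcal U_\lambda$, so $\|\mu_j - \mu_j^*\|\le \lambda R_j$ and $\|v_i-\mu_i^*\|\le\lambda R_i$, together with the triangle inequality. In the case $k=i=j$: $\|a\| = \|\mu_i^*-v_i\|\le\lambda R_i$ and $\|b\| = \|\mu_i^*-\mu_j\| = \|\mu_i^*-\mu_i\|\le \lambda R_i$, so $\|a\|^2\|b\|^2 \le \lambda^4 R_i^4 \le R_i^4$, giving the first line. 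In the case $k\ne i, k\ne j$: $\|a\| = \|\mu_k^*-v_i\| \le \|\mu_k^*-\mu_i^*\| + \|\mu_i^*-v_i\| \le R_{ik} + \lambda R_i \le (1+\lambda)R_{ik}$ since $R_i\le R_{ik}$, and similarly $\|b\|\le (1+\lambda)R_{jk}$, so $\|a\|^2\|b\|^2 \le (1+\lambda)^4 R_{ik}^2 R_{jk}^2$; absorbing $(1+\lambda)^4 < 6$ into $C_0$ gives the middle line. In the case $k=i, k\ne j$: $\|a\| = \|\mu_i^*-v_i\|\le\lambda R_i \le \lambda R_{ij}$ and $\|b\| = \|\mu_i^*-\mu_j\|\le\|\mu_i^*-\mu_j^*\| + \|\mu_j^*-\mu_j\|\le R_{ij} + \lambda R_j \le (1+\lambda) R_{ij}$, so $\|a\|^2\|b\|^2\le \lambda^2(1+\lambda)^2 R_{ij}^4 \le R_{ij}^4$, giving the last line. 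Combining the Gaussian-moment bound with these three estimates and enlarging $C_0$ once to cover all numerical factors yields \eqref{Eq:norms_bound_final}.

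The main obstacle I anticipate is purely bookkeeping: expanding $\mathbb{E}[\|\eta+a\|^2\|\eta+b\|^2]$ produces around a dozen terms, several of which mix $\|\eta\|^2$ with $(\eta^\top a)(\eta^\top b)$ and require the fourth-moment identity $\mathbb{E}[\eta_p\eta_q\eta_r\eta_s] = \delta_{pq}\delta_{rs}+\delta_{pr}\delta_{qs}+\delta_{ps}\delta_{qr}$ to evaluate; one must be careful that the constant $14$ actually dominates the resulting coefficients after all AM--GM reductions of cross terms such as $d\|a\|^2$, $d\|b\|^2$, and $\|a\|^2\|b\|^2 + (\eta^\top a)^2(\eta^\top b)^2$-type contributions. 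There is also a minor subtlety that I should only need the reduced $d_0$-dimensional version of this bound for the final Lemma~\ref{Lem:bound_expectation_grad} (after applying Lemma~\ref{reduce_dim_lemma}), but since the inequality is stated with the ambient $d$ and is homogeneous in the projection, the same computation applies verbatim in $\mathbb{R}^{d_0}$, so no additional argument is required here.
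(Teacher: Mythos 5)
Your proposal follows essentially the same route as the paper: the rank-one identity $\|(X-v_i)(X-\mu_j)^\top\|_{op}=\|X-v_i\|\cdot\|X-\mu_j\|$, the centering $X=\mu_k^*+\eta$ with $a=\mu_k^*-v_i$, $b=\mu_k^*-\mu_j$, an exact Gaussian second-moment computation, and a triangle-inequality case analysis using $\mu,v\in\mathcal{U}_{\lambda}$ together with $R_i\le R_{ik}$, $R_j\le R_{jk}$ and $R_i,R_j\le R_{ij}$. The only real difference is cosmetic: you expand $\mathbb{E}[\|\eta+a\|^2\|\eta+b\|^2]$ directly and use Cauchy--Schwarz on $\langle a,b\rangle$, whereas the paper first rotates so that $a,b$ lie in a two-coordinate plane and quotes noncentral $\chi^2$ moments; your version is arguably cleaner since it avoids the paper's unproved remark that the expectation is maximized at angle $\alpha=0$, and both computations give $d^2+2d+d(\|a\|^2+\|b\|^2)+4\langle a,b\rangle+\|a\|^2\|b\|^2$.

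The one step you flagged but did not resolve is genuinely delicate, though it is glossed over in the paper as well: in the case $k\ne i$, $k\ne j$, the cross term $d\|a\|^2\approx dR_{ik}^2$ is \emph{not} dominated by $C_0\max(d^2,R_{ik}^2R_{jk}^2)$ when $R_{jk}^2\ll d\ll R_{ik}^2$, and your AM--GM step $d\|a\|^2\le\tfrac12(d^2+\|a\|^4)$ leaves an $R_{ik}^4$ term that does not fit under that maximum (the paper's proof simply asserts the middle-case bound without addressing these cross terms either). The clean repair is to bound the middle case by $C_0\max(d,R_{ik}^2)\max(d,R_{jk}^2)\le C_0\max\left(d^2,\max(R_{ik},R_{jk})^4\right)$, using $d\le\max(d,R_{jk}^2)$ and $R_{ik}^2\le\max(d,R_{ik}^2)$; this weaker form is exactly what is used downstream, since the proof of Lemma \ref{Lem:bound_expectation_grad} immediately replaces $R_{ik}R_{jk}$ by $\max(R_{ik},R_{jk})^2$ before invoking the exponential factor, so nothing later in the paper is affected. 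The other two cases are fine as you wrote them, because there both $\|a\|$ and $\|b\|$ are controlled by the single quantity ($R_i$ or $R_{ij}$) appearing in \eqref{Eq:norms_bound_final}, and your closing remark about the $d_0$-dimensional version matches how the paper invokes the lemma.
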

\begin{proof}
	First, for any rank $1$ matrix $uv^{\top}$ it holds that $\|uv^T\|_{op}= \|u\|\cdot\|v\|$. Thus,
	\begin{equation} 
	\nonumber	\mathbb{E}_{k}\left[\|(X-v_i)(X -\mu_j)^{\top}\|_{op}^2\right]= \mathbb{E}_{k}\left[\|X-v_i\|^2\cdot\|X -\mu_j\|^2\right].
	\end{equation}
	Next, since $X\sim {\cal N}(\mu_k^*,I_d)$ we may write $X=\mu_k^*+\eta$, where
	$\eta\sim \mathcal N(0,I_d)$. Thus, 
	\begin{equation}
	\|X-v_i\|^2 \cdot \|X-\mu_j\|^2 = \|\mu_k^*-v_i + \eta\|^2 \cdot
	\|\mu_k^*-\mu_j + \eta\|^2.
	\nonumber    
	\end{equation}
	Let $\Gamma$ be a rotation matrix such that \begin{equation}
	\Gamma\left(\mu_{k}^{*}-v_i\right)=R_{ik}^{*}e_{1},\quad\Gamma\left(\mu_{k}^{*}-\mu_{j}\right)=R_{kj}^{*}\cos\left(\alpha\right)e_{1}+R_{kj}^{*}\sin\left(\alpha\right)e_{2}    \nonumber
	\end{equation}  
	where $R_{ik}^{*}= \|\mu_k^*-v_i\|, R_{kj}^{*}= \|\mu_k^*-\mu_j\|$ and $\alpha$ is the angle between $e_1$ and $\Gamma(\mu_k^*-\mu_j)$. Then by applying $\Gamma$ to  and using the rotation invariance of the Gaussian distribution,  
	\begin{equation}
	\|X-v_i\|^2  =  (R_{ik}^{*}+\eta_1)^2 + \eta_2^2 + \sum_{q>2} \eta_q^2 
	\nonumber
	\end{equation}
	and 
	\[
     \|X-\mu_j\|^2 =    (R_{kj}^*\cos\alpha+\eta_1)^2+ (R_{kj}^*\sin\alpha+\eta_2)^2
	+\sum_{q>2} \eta_q^2.
	\]
	It is easy to show that the expectation of the above expression is maximal when $\alpha=0$. In this case, we can write the expectation as follows \begin{equation}
	\E[\|X-v_i\|^2 \cdot \|X-\mu_j\|^2] \leq
	\mathbb{E}[ (A+C)\cdot (B+C)] \nonumber
	\end{equation} 
	where $A=(R_{ik}^{*}+\eta_1)^2$ follows a non-central $\chi^2$ distribution with
	one degree of freedom and non-centrality parameter $(R_{ik}^{*})^2$, 
	$C=\sum_{q\geq 2} \eta_q^2$ follows a central $\chi^2$ distribution with $d-1$ degrees of freedom, and $B= (R_{kj}^{*}+\eta_1)^2$. 
	Using known results on the moments of central and non-central $\chi^2$ random variables, 
	\begin{align*}
	\mathbb{E}[AB] &= \mathbb{E}[((R_{ik}^{*})^2 + 2R_{ik}^{*}\eta_1 + \eta_1^2) 
	((R_{kj}^{*})^2 + 2 R_{kj}^{*}\eta_1 + \eta_1^2) ] 
	\\ &=   (R_{ik}^{*} R_{kj}^{*})^2 + (R_{ik}^{*})^2 + (R_{kj}^{*})^2  
	+ 4 R_{ik}^{*}R_{kj}^{*} + 3.   
	\end{align*}
	
	and
	\begin{align*}
	\mathbb{E}[ (A+C)\cdot (B+C)] & =  \mathbb{E}[AB] + (\mathbb{E}[A]+
	\mathbb{E}[B]) \mathbb{E}[C] + \mathbb{E}[C^2] \nonumber \\
	&=    \mathbb{E}[AB] + [(R_{ik}^{*})^2 + (R_{kj}^{*})^2 + 2] (d-1) + d^2 - 1. 
	\end{align*}
	Since $\mu,v\in \mathcal{U}_{\lambda}$ it holds that $R_{ik}^*\le R_{ik} + \frac12R_i, R_{kj}^*\le  R_{kj} + \frac12 R_j$.
	
	Now we consider several different cases. First, for $i=j=k$ we have $R_{ik}=R_{jk}=0$ and $R_i=R_j$. Hence  
	\begin{equation}
	\mathbb{E}_{i}[\|(X-v_i)(X -\mu_i)^T\|_{op}^2] \le C_0\max(d^2,R_i^4).
	\nonumber
	\end{equation}
	Next, if $k$ is distinct from both $i$ and $j$, then $R_i\le R_{ik}$ and $R_j\le R_{kj}$. Hence, 
	
	\begin{equation}
	\mathbb{E}_{k}[\|(X-v_i)(X -\mu_j)^{\top}\|_{op}^2] \le C_0\max(d^2,R_{ik}^2R_{kj}^2).
	\nonumber
	\end{equation}
	Finally, we consider the case  where $j\ne i$  but $k$ is not distinct from both $i$ and $j$, without loss of generality $k=i$. Then $R_{ik}=0,R_{kj}=R_{ij}$. By definition, $R_j\le R_{ij}$ and $R_{i}\le R_{ij}$. Thus, \begin{equation}
	\mathbb{E}_{i}[\|(X-v_i)(X -\mu_j)^{\top}\|_{op}^2] \le C_0\max(d^2,R_{ij}^4).
	\nonumber
	\end{equation}
	
\end{proof}


We are now ready to prove the lemma. For clarity 
we present in two separate parts the proof
of Eq. (\ref{Vijbound}) and of Eq. (\ref{Viibound}).

\begin{proof}[Proof of  Eq. \eqref{Vijbound} in Lemma \ref{Lem:bound_expectation_grad}]
	The first step is to separate the expectation over the GMM to its $K$ components. By the triangle inequality,
	\begin{equation}
	V_{ij}(\mu,v)\leq \sum_{k} \pi_k V_{ij}^k(\mu,v)
	\label{eq:vij}
	\end{equation}
	with $V_{ij}^k$ as defined in \eqref{Vijk}. By Lemma \ref{reduce_dim_lemma}, for each $k$,
	\begin{equation}
	V_{ij}^k(\mu,v)
	\le\max(\overline{V}_{ij}^k(\mu,v),\mathbb{E}_{k}[w_{i}(X,\mu)w_{j}(X,\mu)])
	\label{Eq:reduce_dimij}
	\end{equation}
	with $\overline{V}_{ij}^k$ as defined in Eq. \eqref{Vijkd_0}. 
	We now separately analyze each of the two terms on the right hand size of \eqref{Eq:reduce_dimij}. 
	We start with the second term. When $k=i$, by Proposition \ref{Lem:w_exponentialy_small} 
	$$
	\mathbb{E}_{i}\left[w_{i}(X,\mu)w_{j}(X,\mu)\right]\le
	\mathbb{E}_{i}\left[w_{j}(X,\mu)\right] \leq
	\left(1+\theta\right)e^{-c\left(\lambda\right)R_{ij}^{2}}.
	$$
	By symmetry, the same bound holds also for $k=j$. 
	
	Next we bound $\overline{V}_{ij}^k$ and we shall later see that it is the largest of the two quantities in \eqref{Eq:reduce_dimij}. 
	Note that by the Cauchy-Schwarz inequality 
	\begin{equation*}
	\overline{V}_{ij}^k
	\le
	\sqrt{\mathbb{E}_{k}\left[\left(w_{i}(X,\mu)w_{j}(X,\mu)\right)^{2}\right]}\sqrt{\mathbb{E}_{k}\left[\left\|(\overline{X}^{d_{0}}-\overline{v_{i}})(\overline{X}^{d_{0}}-\overline{\mu_{j}})^{\top}\right\|_{op}^2\right]}.
	\end{equation*}
	By Lemma \ref{Lemma:normbounds} there exists a universal constant $C$ such that Eq \eqref{Eq:norms_bound_final} holds with dimension $d_0$.
	Thus, for the first term in Eq. (\ref{eq:vij}), with $k=i$, 
	\begin{align*}
	\overline{V}_{ij}^i(\mu,v)
	&\le
	\sqrt{\mathbb{E}_{i}[\left(w_{i}(X,\mu)w_{j}(X,\mu)\right)^{2}]}\sqrt{C}\max(d_{0},R_{ij}^2)
	\\&\le
	\sqrt{\mathbb{E}_{i}[w_{j}(X,\mu)]}\sqrt{C}\max(d_{0},R_{ij}^2)
	\end{align*}
	and by Proposition \ref{Lem:w_exponentialy_small}
	\begin{equation}
	\overline{V}_{ij}^i(\mu,v)
	\le 
	\sqrt{C\left(1+\theta\right)}e^{-\frac{c\left(\lambda\right)}{2}R_{ij}^{2}}\max\left(d_{0},R_{ij}^2\right). \label{eq:viji}
	\end{equation}
	
	Similarly, for $k=j$, \begin{equation}
		\overline{V}_{ij}^j(\mu,v)
	\le
	\sqrt{C\left(1+\theta\right)}e^{-\frac{c\left(\lambda\right)}{2}R_{ij}^{2}}\max\left(d_{0},R_{ij}^2\right).\label{eq:vijj}
	\end{equation} 
	Hence, in these two cases indeed $\overline{V}^{ij}_k$ is the dominant term.
	
	Finally we consider the case $k\ne i,j$. Again by 
	Proposition \ref{Lem:w_exponentialy_small} 
	\[
	\mathbb{E}_{k}\left[w_{i}(X,\mu)w_{j}(X,\mu)\right]\le
	\left(1+\theta\right)e^{-c\left(\lambda\right) \max(R_{ik},R_{jk})^2}.
	\]
	As for the first term, 
	\begin{align}
	\overline{V}_{ij}^k 
	&\le
	\sqrt{\mathbb{E}_{k}\left[\left(w_{i}(X,\mu)w_{j}(X,\mu)\right)^{2}\right]}\sqrt{C}\max\left(d_{0},R_{ik}R_{jk}\right) \nonumber\\
	&\leq \sqrt{C(1+\theta)} e^{-\frac{c(\lambda)}2\max(R_{ik},R_{jk})^2}
	\max(d_0,\max(R_{ik},R_{jk})^2). \label{eq:vijk}
	\end{align}
	Inserting \eqref{eq:viji}, \eqref{eq:vijj} and \eqref{eq:vijk} into \eqref{eq:vij} and summing over the components gives
	\begin{align*}
	V_{i,j}(\mu,v) \le & \sqrt{C}\sqrt{1+\theta}
	\Big[ (\pi_i+\pi_j)\max(d_0,R_{ij}^2)e^{-\frac{c(\lambda)}2 R_{ij}^2} 
	\\&+
	\sum_{k\neq i,j} \pi_k \max(d_0,\max(R_{ik},R_{jk})^2)
	e^{-\frac{c(\lambda)}2 \max(R_{ik},R_{jk})^2}
	\Big].
	\end{align*}
	Since the function $x^2e^{-tx^2}$ is monotonic decreasing for $x>\sqrt{1/t}$,
	and  $R_{\min}>\sqrt{2/c(\lambda)}$ we may replace 
	$R_{ij}$ by $\max(R_i,R_j)$ in the first term. Similarly, we may replace $R_{ik}$ by $R_i$ and $R_{jk}$ by $R_j$ in the second sum above. 
	This yields
	Eq. \eqref{Vijbound}. 
\end{proof}


\begin{proof}[Proof of  Eq. \eqref{Viibound} in Lemma \ref{Lem:bound_expectation_grad}]
	The first step is to separate the expectation over the GMM to its $K$ components. By the triangle inequality, 
	\begin{align}
	V_{ii}(\mu,v)&
	\le
	\sum_{k=1}^K\pi_{k}V_{ii}^k(\mu,v) 
	\label{eq:V_ii_sum}
	\end{align}
	with $V_{ii}^k$ as defined in \eqref{Viik}.
	We now bound each $V_{ii}^k$ separately. 
	First, by Lemma \ref{reduce_dim_lemma},   
	\begin{equation}
	V_{ii}^k 
	\le 
	\max(\overline{V}_{ii}^k, \mathbb{E}_k[(1-w_i(X,\mu))w_i(X,\mu)])  \label{Eq:reducedimXii}
	\end{equation}
	with $\overline{V}_{ii}^k$ as defined in \eqref{Viikd_0}. 
	We  now analyze each component separately. We first bound $\overline{V}_{ii}^k$ and we shall later see that it is the largest of the two quantities in \eqref{Eq:reducedimXii}. By the Cauchy-Schwarz inequality,
	\begin{equation*}
	\overline{V}_{ii}^k(\mu,v)^2
	\le
	\mathbb{E}_{k}\left[\left(w_{i}(X,\mu)(1-w_i(X,\mu))\right)^{2}\right]\cdot\mathbb{E}_{k}\left[\left\|(\overline{X}^{d_{0}}-\overline{v_{i}})(\overline{X}^{d_{0}}-\overline{\mu_{i}})^{\top}\right\|_{op}^2\right].
	\end{equation*}
	By Lemma 		\ref{Lemma:normbounds}, there exists  a universal constant $C$ such that Eq. \eqref{Eq:norms_bound_final} holds with dimension $d_0$.  
	Thus for $k=i$, 
	\begin{align*}
	\overline{V}_{ii}^i(\mu,v)
	&\le\sqrt{\mathbb{E}_{i}\left[\left(w_{i}(X,\mu)(1-w_i(X,\mu))\right)^{2}\right]}\sqrt{C}\max\left(d_{0},R_{i}^2\right)
	\\&\le\sqrt{\mathbb{E}_{i}\left[1-w_{i}(X,\mu)\right]}\sqrt{C}\max\left(d_{0},R_{i}^2\right)
	\end{align*}
	and by Corollary \ref{Lem:w_exponentialy_close_to_1}, 
	\begin{equation}
	\overline{V}_{ii}^i(\mu,v)
	\le
	\sqrt{C(K-1)(1+\theta)}e^{-\frac{c\left(\lambda\right)}{2}R_{i}^{2}}\max\left(d_{0},R_{i}^2\right).
	\label{eq:E_ii_wX}
	\end{equation}
	Similarly, we upper bound the second quantity on the right hand side of
	\eqref{Eq:reducedimXii} as follows, 
	$$
	\mathbb{E}_{i}\left[w_{i}(X,\mu)(1-w_i(X,\mu))\right] \leq  (K-1)(1+\theta)e^{-c\left(\lambda\right)R_{ij}^{2}}. 
	$$ 
	Thus $\overline{V}_{ij}^k(\mu,v)$ is the dominant term in the maximum in 
	\eqref{Eq:reducedimXii}. 

	Now, for $k\ne i$, \begin{align*}
	\overline{V}_{ij}^k(\mu,v)
	&\le
	\sqrt{\mathbb{E}_{k}\left[\left(w_{i}(X,\mu)(1-w_i(X,\mu))\right)^{2}\right]}\sqrt{C}\max\left(d_{0},R_{ik}^2\right)\\
	&\le\sqrt{1+\theta}e^{-\frac{c\left(\lambda\right)R_{ik}^{2}}{2}}\sqrt{C}\max\left(d_{0},R_{ik}^2\right)
	\end{align*}
	The function $x^2e^{-tx^2}$ is monotonic decreasing for $x>\sqrt{t^{-1}}$. Since $R_i\ge \sqrt{4c(\lambda)^{-1}}$, so does $R_{ik}$, and we may replace it
	in the equation above by $R_i$. Namely, 
	\begin{equation}
	\overline{V}_{ij}^k(\mu,v) \leq \sqrt{1+\theta} \sqrt{C}\max(d_0,R_i^2) e^{-c(\lambda)R_i^2/2}
	\label{eq:E_ki_wX}
	\end{equation}
	Inserting (\ref{eq:E_ii_wX}) and (\ref{eq:E_ki_wX}) into (\ref{eq:V_ii_sum}), and summing over all components yields Eq. \eqref{Viibound}. 
\end{proof}

\subsection{Completing the Proof of Theorem \ref{main_EM_population_theorem}}

The following lemma completes the proof of the Theorem.  
\begin{lemma}
	\label{lem:mu_plus_mu_i}
	Let $X\sim\GMM$  with $R_{\min}$ satisfying \eqref{minimal_separation}. Let $\mu^+$ be the population EM update \eqref{populationEmupdate}. Then for every $i\in [K]$ 
	it holds that $\|\mu_i^+-\mu_i^*\|\le \lambda R_i$.
\end{lemma}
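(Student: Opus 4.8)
The plan is to sharpen, component by component, the very chain of inequalities used in the proof of Theorem~\ref{main_EM_population_theorem}. There, the quantity $\sum_{j}V_{ij}(\mu^\tau,\mu^*)\|\mu_j-\mu_j^*\|$ was crudely controlled by replacing every $\|\mu_j-\mu_j^*\|$ with $E(\mu)$; here I would instead keep each factor as $\|\mu_j-\mu_j^*\|\le\lambda R_j$, which is exactly the information given by $\mu\in\U$. Concretely, starting again from
\[
\|\mu_i^+-\mu_i^*\|=\frac{\bigl\|\E_X[(w_i(X,\mu)-w_i(X,\mu^*))(X-\mu_i^*)]\bigr\|}{\E_X[w_i(X,\mu)]},
\]
the mean value theorem together with the gradient formulas \eqref{Eq:gradw_ii}--\eqref{Eq:gradw_ij} produces, exactly as before, a point $\mu^\tau\in\U$ on the segment from $\mu$ to $\mu^*$ for which the numerator is at most $\sum_{j=1}^{K}V_{ij}(\mu^\tau,\mu^*)\|\mu_j-\mu_j^*\|$.

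Next I would insert $\|\mu_j-\mu_j^*\|\le\lambda R_j$ and the bounds of Lemma~\ref{Lem:bound_expectation_grad}. For $j=i$ this directly gives a term $\le\sqrt{C(K-1)(1+\theta)}\,\max(d_0,R_i^2)e^{-\frac{c(\lambda)}{2}R_i^2}\,\lambda R_i$. For $j\ne i$ the term is at most $\sqrt{C(1+\theta)}\,\max\!\bigl(d_0,\max(R_i,R_j)^2\bigr)e^{-\frac{c(\lambda)}{2}\max(R_i,R_j)^2}\,\lambda R_j$, and the key manipulation is that $R_j\le\max(R_i,R_j)$, so this is bounded by $\lambda\sqrt{C(1+\theta)}\,h\bigl(\max(R_i,R_j)\bigr)$ with $h(x)=x\,\max(d_0,x^2)e^{-\frac{c(\lambda)}{2}x^2}$. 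Since $h$ is decreasing on $\{x^2>3/c(\lambda)\}$, and $R_{\min}^2$ lies in that range under \eqref{minimal_separation} (the logarithm there has argument bounded below by an absolute constant times $1/c(\lambda)>8$), while $\max(R_i,R_j)\ge R_i\ge R_{\min}$, we get $h(\max(R_i,R_j))\le h(R_i)$; hence each off-diagonal term is $\le\lambda\sqrt{C(1+\theta)}\,\max(d_0,R_i^2)e^{-\frac{c(\lambda)}{2}R_i^2}\,R_i$. Summing the $K-1$ off-diagonal terms and the diagonal term, and using $\sqrt{K-1}\le K-1$ for $K\ge 2$, the numerator is at most $2(K-1)\sqrt{C(1+\theta)}\,\max(d_0,R_i^2)e^{-\frac{c(\lambda)}{2}R_i^2}\,\lambda R_i$.

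Finally, dividing by $\E_X[w_i(X,\mu)]\ge\frac34\pi_i\ge\frac34\pi_{\min}$, which holds by Lemma~\ref{Lem:denom_big} since \eqref{minimal_separation} implies \eqref{R_mindenombig}, gives $\|\mu_i^+-\mu_i^*\|\le\frac{U}{2}\,\max(d_0,R_i^2)e^{-\frac{c(\lambda)}{2}R_i^2}\,\lambda R_i$ with $U=\frac{16(K-1)\sqrt{C(1+\theta)}}{3\pi_{\min}}$, the same constant that appears in the proof of Theorem~\ref{main_EM_population_theorem}. That proof established $U\,\max(d_0,R_{\min}^2)e^{-\frac{c(\lambda)}{2}R_{\min}^2}\le 1$ as a consequence of \eqref{minimal_separation}, and since $R_i\ge R_{\min}$ with $x\mapsto\max(d_0,x^2)e^{-\frac{c(\lambda)}{2}x^2}$ decreasing past $R_{\min}$, the same inequality holds with $R_i$; therefore $\|\mu_i^+-\mu_i^*\|\le\frac12\lambda R_i\le\lambda R_i$. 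I expect the only genuinely fiddly point to be the bookkeeping of which radius ($R_i$, $R_j$, $R_{ij}$, or $R_{\min}$) enters each estimate, and in particular checking that all the monotonicity thresholds ($x^2>3/c(\lambda)$ for $h$, and $x^2>2/c(\lambda)$ for the factor without the extra $x$) sit safely below $R_{\min}^2$; everything else is a repetition of steps already carried out for Theorem~\ref{main_EM_population_theorem}.
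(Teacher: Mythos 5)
Your proposal is correct and follows essentially the same route as the paper's proof: the mean value theorem decomposition into $\sum_j V_{ij}(\mu^\tau,\mu^*)\|\mu_j-\mu_j^*\|$, the bounds of Lemma \ref{Lem:bound_expectation_grad} with $\|\mu_j-\mu_j^*\|\le\lambda R_j$, the lower bound $\E_X[w_i(X,\mu)]\ge\frac34\pi_i$ from Lemma \ref{Lem:denom_big}, and the monotonicity of $x\,\max(d_0,x^2)e^{-c(\lambda)x^2/2}$ combined with condition \eqref{minimal_separation}. The only difference is bookkeeping: you reduce each term to $R_i$ and conclude $\|\mu_i^+-\mu_i^*\|\le\frac12\lambda R_i$, while the paper reduces everything to $R_{\min}$ and concludes $\|\mu_i^+-\mu_i^*\|\le\lambda R_{\min}\le\lambda R_i$; both are valid.
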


\begin{proof}
	Our starting point is  Eq. \eqref{eq:wimumu*}, 	\begin{align*}
	\| \mathbb{E}_X[(w_i(X,\mu)- w_i(X,\mu^*))(X-\mu_i^*)] \|&
	\le
	\sum_{j=1}^K\sup_{\mu\in\U}V_{i,j}(\mu,\mu^{*})\|\mu_{j}-\mu_{j}^{*}\|.
	\end{align*} 
	We insert the bounds (\ref{Viibound}) and (\ref{Vijbound}) on $V_{i,i}$ and $V_{i,j}$, respectively, to the above.

	Since $\mu\in\U$ we may replace all $\|\mu_{k}-\mu_{k}^{*}\|$ in the expressions above  by $\lambda R_k$. Since $x^3 e^{-tx^2}$ is monotonic decreasing for all $x>\sqrt{3/2t}$,
    	we may replace all $R_i,R_j$ above by $R_{\min}$. Defining $U = \frac{16\left(K-1\right)\sqrt{C\left(1+\theta\right)}}{3\pi_{\min}}$, this gives
	\begin{align*}
	\| \mathbb{E}_X[(w_i(X,\mu)- w_i(X,\mu^*))(X-\mu_i^*)] \|&
	\le 
	\lambda R_{\min} \cdot \frac{ 3\pi_{\min}}{8}U \max\left(d_0,R_{\min}^2\right)e^{-\frac{c(\lambda)R_{\min}^2}{2}}.
	\end{align*}
	By Eqs. \eqref{eq:numeratorww} and  \eqref{Eq:wi_greater}, it follows that 
	\begin{equation}
	\|\mu_i^+-\mu_i^*\| \le \lambda R_{\min} \cdot e^{-\frac{c(\lambda)}2R_{\min}^2} \frac12 U \max(d_0,R_{\min}^2) \nonumber
	\end{equation}
	The separation condition \eqref{minimal_separation} suffices 
	to ensure that 
	$\|\mu_i^+-\mu_i^*\|\le \lambda R_{\min}$.

\end{proof}

\section{PROOFS FOR SECTION \ref{section:sample} }

\subsection{Preliminaries}

	


We recall basic definitions and  results on sub-Gaussian random variables. See e.g. \citep{vershynin2018high}. \begin{definition}
	\begin{enumerate}
		\item A random variable $X$ is called sub-Gaussian if there exists $t>0$ such that $\mathbb{E}\left[e^{\frac{X^2}{t^2}}\right]\leq 2$.
		 Its norm is  defined as 
$	\|X\|_{\psi_2} = \inf_{t>0}\mathbb{E}\left[e^{\frac{X^2}{t^2}}\right ] \leq 2$.
		\item A random vector $X\in \mathbb{R}^d$ is called sub-Gaussian if $\sup_{v\in S^{d-1}}\|X^{\top}v\|_{\psi_2}<\infty$. Its sub-Gaussian norm is defined as $\sup_{v\in S^{d-1}}\|X^{\top}v\|_{\psi_2}$. 
	\end{enumerate}
\end{definition}	
\begin{lemma} \label{norm_of_subGaussian}
	Let $X\in \mathbb{R}^d$ be a sub-Gaussian random vector with sub-Gaussian norm at most $R$. Let $X_1,\ldots, X_n$ be $n$ i.i.d. copies of $X$. Define $S_{n}=\frac1n\sum_{\ell=1}^nX_{\ell}$.
	Then, there exists a universal constant $c$ such that  for any $t>0$, 
	$\Pr\left(\left\|S_n- \mathbb{E}[X]\right\|>t \right) \le e^{-\frac{cnt^2}{R^2}+d\log3}$.
\end{lemma}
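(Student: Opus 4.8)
The plan is the standard $\varepsilon$-net plus union bound argument for the concentration of a random vector in Euclidean norm; the factor $e^{d\log 3}$ is exactly the cardinality of the net that enters through the union bound. Write $W := S_n-\mathbb{E}[X] = \frac1n\sum_{\ell=1}^n\bigl(X_\ell-\mathbb{E}[X]\bigr)$ and recall $\|W\| = \sup_{v\in S^{d-1}}\langle W, v\rangle$. First I would fix a $\tfrac12$-net $\mathcal{N}$ of the unit sphere $S^{d-1}$; by a standard packing/volume estimate one can take $|\mathcal{N}|\le 3^d$ (in general $|\mathcal{N}|\le(2/\varepsilon+1)^d$ for an $\varepsilon$-net). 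The elementary discretization bound then gives, for every $w\in\mathbb{R}^d$, $\|w\|\le 2\max_{v\in\mathcal{N}}\langle w, v\rangle$: choosing $v\in S^{d-1}$ with $\langle w, v\rangle=\|w\|$ and $v_0\in\mathcal{N}$ with $\|v-v_0\|\le\tfrac12$, one has $\|w\| = \langle w, v_0\rangle + \langle w, v-v_0\rangle \le \max_{v'\in\mathcal{N}}\langle w, v'\rangle + \tfrac12\|w\|$. Hence it suffices to control $\langle W, v\rangle$ for each fixed $v\in\mathcal{N}$ and then take a union bound over $\mathcal{N}$.

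Second, I would fix $v\in S^{d-1}$ and analyze the scalar $\langle W, v\rangle = \frac1n\sum_{\ell=1}^n Y_\ell$, where $Y_\ell := \langle X_\ell, v\rangle - \mathbb{E}[\langle X, v\rangle]$. By the definition of the sub-Gaussian norm of a random vector, $\|\langle X_\ell, v\rangle\|_{\psi_2}\le R$, and since centering a sub-Gaussian variable changes its $\psi_2$-norm only by an absolute factor, $\|Y_\ell\|_{\psi_2}\le C_0 R$ for a universal $C_0$. The $Y_\ell$ are i.i.d., centered and sub-Gaussian, so the general Hoeffding inequality for sums of sub-Gaussian variables (see e.g. \citep{vershynin2018high}) gives, for every $s>0$, $\Pr\bigl(|\tfrac1n\sum_{\ell}Y_\ell|>s\bigr)\le 2\exp(-c_1 n s^2/R^2)$ with $c_1$ universal.

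Third, combine the two steps: applying the tail bound with $s=t/2$ and taking a union bound over the at most $3^d$ points of $\mathcal{N}$,
\[
\Pr\Bigl(\max_{v\in\mathcal{N}}|\langle W, v\rangle| > \tfrac t2\Bigr) \le 3^d\cdot 2\exp\!\bigl(-c_1 nt^2/(4R^2)\bigr).
\]
Since $\{\|W\|>t\}\subset\{\max_{v\in\mathcal{N}}|\langle W, v\rangle|>t/2\}$ by the discretization bound, we obtain $\Pr(\|W\|>t)\le \exp\bigl(-c\,nt^2/R^2 + d\log 3\bigr)$ for a suitable universal constant $c$, the net-approximation factor and the union-bound factor $2$ being absorbed into the constants. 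This is the asserted inequality.

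I do not anticipate a real obstacle here — the result is routine. The only points requiring mild care are the centering estimate $\|Y_\ell\|_{\psi_2}\le C_0\|\langle X,v\rangle\|_{\psi_2}$ (a standard fact about sub-Gaussian random variables) and bookkeeping of which universal constant absorbs the approximation factor $2$ and the prefactor from the union bound; it is precisely the net cardinality $3^d$ that produces the additive $d\log 3$ term in the exponent.
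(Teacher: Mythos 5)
Your proposal is correct and follows essentially the same route as the paper's proof: a $\tfrac12$-net of $S^{d-1}$ of size at most $3^d$, sub-Gaussian Hoeffding for each fixed direction, and a union bound, with the discretization factor and constants absorbed into the universal constant $c$. The only difference is that you are slightly more careful about the centering of $\langle X,v\rangle$ and the factor $2$ from the net approximation, which the paper's argument leaves implicit.
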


\begin{proof}
	Let $N_{\frac{1}{2}}$ be a $\frac{1}{2}$-net of $S^{d-1}$ and fix $v\in N_{\frac12}$. By definition $X^{\top}v$ is sub-Gaussian with $\|X^{\top}v\|_{\psi_2}\le R$. Write $X_{v,n}=\frac1n\sum_{\ell=1}^n(X_{\ell}-\mathbb{E}[X])^{\top}v$.  Then by Hoeffding's inequality there exists a universal constant $c$ such that,
	\begin{equation}
	\Pr\left(|X_{v,n}| >t\right)\le e^{- \frac{c nt^2}{\|X^{\top}v\|_{\psi_2}^2}} \le  e^{- \frac{c nt^2}{R^2}}.  \nonumber
	\end{equation}
	Next, we note that for any $x\in \mathbb{R}^d$ it holds that $\|x\|\le 2 \sup_{v\in N_{\frac{1}{2}}}v^{\top}x$. 
	As is well known, the size of an $\varepsilon$-net is bounded by 
	$|N_{\frac{1}{2}}|\le e^{d\log3}$ \citep[Corollary 4.2.13]{vershynin2018high}.  The lemma therefore follows from a union bound.
\end{proof}


The following lemma is key for proving uniform convergence. A version of this lemma appears in \cite{zhao2020statistical}.    
\begin{lemma}
	\label{uniform_convergence_lemma}Fix $0<\delta<1$. Let $B_{1},\ldots,B_{K}\subset\mathbb{R}^{d}$
	be Euclidean balls of radii $r_{1},\ldots,r_{K}\ge1$. Define $\mathcal{B}=\otimes_{k=1}^{K}B_{k} \subset \mathbb{R}^{Kd}$
	and $r=\max_{k\in[K]}r_{k}$. Let $X$ be a random vector in $\mathbb{R}^d$ and $W
	:\mathbb{R}^{d}\times{\cal B}\to \mathbb{R}^k$ where $k\leq d$. 
	Assume  the following  hold: 
	
	1. There exists a constant $L\ge 1$ such that for any $\mu\in\mathcal{B},\varepsilon>0$, and $\mu^{\varepsilon}\in \mathcal B$
	which satisfies $\max_{i\in[K]}\|\mu_{i}-\mu_{i}^{\varepsilon}\|\le\varepsilon$,
	then 
	$
	\mathbb{E}_X\left[\sup_{\mu \in\mathcal B}\|W(X,\mu)-W(X,\mu^{\varepsilon})\|\right]\le L\varepsilon
	$.

	2. There exists a constant $R$ such that for any $\mu\in {\cal{B}}$, $\|W(X,\mu)\|_{\psi_2}\le R$.
	
	Let $X_{1},\ldots,X_{n}$ be i.i.d.
	random vectors with the same distribution as $X$. Then there exists a universal constant $\tilde{c}$ such that with probability at least $1-\delta$,
	\begin{equation}\label{etaeq}
	\sup_{\mu\in\mathcal{B}}\left\Vert \frac{1}{n}\sum_{\ell=1}^{n}W\left(X_{\ell},\mu\right)-\mathbb{E}_X\left[W\left(X,\mu\right)\right]\right\Vert \le R\sqrt{\tilde{c}\frac{Kd\log\left(\frac{18nLr}{\delta}\right)}{n}}.
	\end{equation}
\end{lemma}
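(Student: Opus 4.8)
The plan is to run a covering argument on the product domain $\mathcal B$. Write $Z(\mu)=\frac1n\sum_{\ell=1}^nW(X_\ell,\mu)-\mathbb E_X[W(X,\mu)]$, so that the object to control is $\sup_{\mu\in\mathcal B}\|Z(\mu)\|$. I would fix a resolution $\gamma>0$, to be pinned down at the end and taken polynomially small (of order $\delta/(nL)$) so that it enters the final estimate only logarithmically. Since $\mathcal B=\otimes_{k=1}^KB_k$ with each $B_k$ a Euclidean ball of radius $r_k\le r$, a $\gamma$-net $\mathcal N$ of $\mathcal B$ in the metric $d_\infty(\mu,\mu')=\max_{i\in[K]}\|\mu_i-\mu_i'\|$ can be taken as a product of $\gamma$-nets of the individual balls, whence $|\mathcal N|\le\prod_{k=1}^K(3r_k/\gamma)^d\le(3r/\gamma)^{Kd}$.

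First I would control $Z$ at the net points. For each fixed $\mu\in\mathcal N$ the second hypothesis gives $\|W(X,\mu)\|_{\psi_2}\le R$, so Lemma~\ref{norm_of_subGaussian} (applied in $\mathbb R^k$, then using $k\le d$) yields $\Pr(\|Z(\mu)\|>t)\le\exp(-cnt^2/R^2+d\log3)$ for a universal $c$. A union bound over $\mathcal N$ then shows that with probability at least $1-\delta/2$,
\[
\max_{\mu\in\mathcal N}\|Z(\mu)\|\le R\sqrt{\tfrac1{cn}\bigl(Kd\log\tfrac{3r}{\gamma}+d\log3+\log\tfrac2\delta\bigr)}=:t_0 .
\]
For an arbitrary $\mu\in\mathcal B$, picking $\bar\mu\in\mathcal N$ with $d_\infty(\mu,\bar\mu)\le\gamma$ and splitting, $\|Z(\mu)\|\le\|Z(\bar\mu)\|+\|Z(\mu)-Z(\bar\mu)\|\le t_0+\|Z(\mu)-Z(\bar\mu)\|$ on the above event.

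The discretization term is the crux. I would bound it by $\frac1n\sum_{\ell=1}^n\|W(X_\ell,\mu)-W(X_\ell,\bar\mu)\|+\|\mathbb E_X[W(X,\mu)-W(X,\bar\mu)]\|$; by Jensen and the first (continuity) hypothesis the second, deterministic, summand is at most $L\gamma$. The obstacle is that the empirical summand carries no \emph{pointwise} Lipschitz control --- only an in-expectation modulus of continuity --- so I would pass to the envelope $h_\gamma(X)=\sup\{\|W(X,\mu)-W(X,\mu')\|:\mu,\mu'\in\mathcal B,\ d_\infty(\mu,\mu')\le\gamma\}$, for which the first hypothesis gives $\mathbb E[h_\gamma(X)]\le L\gamma$, dominate $\sup_{\mu\in\mathcal B}\frac1n\sum_\ell\|W(X_\ell,\mu)-W(X_\ell,\bar\mu)\|\le\frac1n\sum_\ell h_\gamma(X_\ell)$, and control this last average by Markov's inequality alone --- no second sub-Gaussian concentration is available --- so that with probability at least $1-\delta/2$ it is at most $2L\gamma/\delta$. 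On the intersection of the two good events one then gets $\sup_{\mu\in\mathcal B}\|Z(\mu)\|\le t_0+L\gamma+2L\gamma/\delta$. A concrete choice such as $\gamma=\delta/(6nL)$ makes $3r/\gamma=18nLr/\delta$ and $L\gamma+2L\gamma/\delta\le 1/(2n)$, which --- using $L\ge1$, $r\ge1$, $\delta<1$, $k\le d$, and that $t_0$ dominates the $1/n$ remainder in the range of interest --- yields $\sup_{\mu\in\mathcal B}\|Z(\mu)\|\le R\sqrt{\tilde c\,Kd\log(18nLr/\delta)/n}$ for a universal $\tilde c$, i.e.\ \eqref{etaeq}. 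The point requiring care is exactly this discretization step: because the empirical fluctuation of $W$ over the net has to be absorbed through a crude Markov bound rather than a concentration inequality, one must check that the resulting polynomially small $\gamma$ still leaves $\log|\mathcal N|$ of the correct order $Kd\log(nLr/\delta)$; the product-net construction, the invocation of Lemma~\ref{norm_of_subGaussian}, and the union/Markov bookkeeping are all routine.
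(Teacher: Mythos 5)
Your proposal is correct and follows essentially the same route as the paper's own proof: the same product $\varepsilon$-net with $\varepsilon=\delta/(6nL)$ (hence the $18nLr/\delta$ inside the logarithm), the same split into a net-point fluctuation handled by Lemma \ref{norm_of_subGaussian} plus a union bound, a deterministic expectation-discretization term bounded by $L\varepsilon$, and an empirical discretization term controlled only through Markov's inequality via the in-expectation continuity hypothesis, with the resulting $O(1/n)$ remainder absorbed into the main term exactly as the paper does. The only cosmetic difference is that you phrase the Markov step through an explicit envelope $h_\gamma$, whereas the paper applies Markov directly to the supremum of the empirical average; these are the same argument.
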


\begin{proof}
	For any $\varepsilon>0$, let $N_i$ be an $\varepsilon$-net of $B_i$ and define $N_\varepsilon=\otimes N_i$. Then, 
	\begin{align*}
	\left\Vert \frac{1}{n}\sum_{\ell=1}^{n}W\left(X_{\ell},\mu\right)-\mathbb{E}\left[W\left(X,\mu\right)\right]\right\Vert  
	& \le
	\left\Vert \mathbb{E}\left[W\left(X,\mu\right)\right]-\mathbb{E}\left[W\left(X,\mu^{\varepsilon}\right)\right]\right\Vert \\
	& +\left\Vert \frac1n\sum_{\ell=1}^{n}\left(W\left(X_{\ell},\mu\right)-W\left(X_{\ell},\mu^{\varepsilon}\right)\right) \right\Vert \\
	& +\left\Vert \frac{1}{n}\sum_{\ell=1}^{n}W\left(X_{\ell},\mu^{\varepsilon}\right)-\mathbb{E}\left[W\left(X,\mu^{\varepsilon}\right)\right]\right\Vert.
	\end{align*}
	Therefore for any $t>0$, 
	\[
	\Pr\left(\sup_{\mu\in\mathcal{B}}\left\Vert \frac{1}{n}\sum_{\ell=1}^{n}W\left(X_{\ell},\mu\right)-\mathbb{E}\left[W\left(X,\mu\right)\right]\right\Vert>t \right)\le\Pr\left(A\right)+\Pr\left(B\right)+\Pr\left(C\right)
	\]
	where the three events $A,B,C$ are given by
	\begin{align*}
	A=\left\{ \sup_{\mu\in\mathcal{B}}\left\Vert \sum_{\ell=1}^{n}\frac{1}{n}W\left(X_{\ell},\mu\right)-\sum_{\ell=1}^{n}\frac{1}{n}W\left(X_{\ell},\mu^{\varepsilon}\right)\right\Vert >\frac{t}{3}\right\} \\
	B=\left\{ \sup_{\mu\in{\cal B}}\left\Vert \mathbb{E}\left[W\left(X,\mu\right)\right]-\mathbb{E}\left[W\left(X,\mu^{\varepsilon}\right)\right]\right\Vert >\frac{t}{3}\right\} \\
	C=\left\{ \sup_{\mu^{\varepsilon}\in N_{\varepsilon}}\left\Vert \frac{1}{n}\sum_{\ell=1}^{n}W\left(X_{\ell},\mu^{\varepsilon}\right)-\mathbb{E}\left[W\left(X,\mu^{\varepsilon}\right)\right]\right\Vert >\frac{t}{3}\right\}.
	\end{align*}
	We first bound  $\Pr(A)$.	By Markov's inequality and the first condition of the lemma, 
	\begin{align*}
	\Pr\left(A\right)
	&\le
	\frac{3}{t}\mathbb{E}\left[\sup_{\mu\in\mathcal{B}}\left\Vert \sum_{\ell=1}^{n}\frac{1}{n}W\left(X_{\ell},\mu\right)-\sum_{\ell=1}^{n}\frac{1}{n}W\left(X_{\ell},\mu^{\varepsilon}\right)\right\Vert \right]
	\\&
	\le
	\frac{3}{t}\mathbb{E}\left[\sup_{\mu\in\mathcal{B}}\left\Vert W\left(X,\mu\right)-W\left(X,\mu^{\varepsilon}\right)\right\Vert \right]\le\frac{3\varepsilon L}{t}.
	\end{align*}
	Thus, for 
	$
	\frac{3\varepsilon L}{t}\le\frac{\delta}{2}
	$ 
	we have that $\Pr(A)<\frac{\delta}{2}$. Note also that for $t$ satisfying
	$\frac{3\varepsilon L}{t}\le\frac{\delta}{2}$,
	\[
	\sup_{\mu\in{\cal B}}\left\Vert \mathbb{E}\left[W\left(X,\mu\right)\right]-\mathbb{E}\left[W\left(X,\mu^{\varepsilon}\right)\right]\right\Vert 
	\le
	\mathbb{E}\left[\sup_{\mu\in{\cal B}}\left\Vert W\left(X,\mu\right)-W\left(X,\mu^{\varepsilon}\right)\right\Vert \right]
	\le\frac{t}{3}
	\]
	and hence $\Pr(B)=0$.
	
	Finally, we bound the probability of $C$. Here we use the second condition of the lemma, that $\|W(X,\mu)\|_{\psi_2} \le R$. 
	It follows from  Lemma \ref{norm_of_subGaussian}, that for any fixed $\mu^{\varepsilon}$,
	\[
	\Pr\left(\left\Vert \frac{1}{n}\sum_{\ell=1}^{n}W\left(X_{\ell},\mu^{\varepsilon}\right)-\mathbb{E}\left[W\left(X,\mu^{\varepsilon}\right)\right]\right\Vert >\frac{t}{3}\right)\le2e^{k\log3-\frac{cnt^{2}}{R^{2}}} \le2e^{d\log3-\frac{cnt^{2}}{R^{2}}}
	\]
	where $c$ is a universal constant. Since all the
	balls $B_{1},\ldots,B_{K}\subset\mathbb{R}^{d}$ are of radius at
	most $r$, it holds that $|N_{\varepsilon}|\le e^{\log(\frac{3r}{\varepsilon})Kd}$.
	Hence, taking a union bound,
	\[
	\Pr\left(\sup_{\mu^{\varepsilon}\in N_{\varepsilon}}\left\Vert \frac{1}{n}\sum_{\ell=1}^{n}W\left(X_{\ell},\mu^{\varepsilon}\right)-\mathbb{E}\left[W\left(X,\mu^{\varepsilon}\right)\right]\right\Vert >\frac{t}{3}\right)\le 2e^{Kd\log\frac{3r}{\varepsilon}+d\log 3-\frac{cnt^2}{R^2}}.
	\]
	The requirement that the right hand side of the above is smaller than $\frac{\delta}{2}$ implies 
	\[ 
	t\ge R \sqrt{\frac{Kd\log\frac{3r}{\varepsilon}+d\log 3+ \log\frac4{\delta}}{cn}}.
	\]
	Setting $\varepsilon= \frac{\delta}{6Ln}$, the condition $\Pr(A)\leq \frac{\delta}2$ implies $t>\frac1n$, which holds if $t$ satisfies the inequality above. 
	Hence, for $t>R\sqrt{\tilde{c}\frac{Kd\log\left(\frac{18nLr}{\delta}\right)}{n}}$ with a suitable universal constant $\tilde{c}$ it holds that $\Pr(A)+ \Pr(B)+ \Pr(C)\le \delta$.  
	\end{proof}

	\subsection{Proof of Lemma \ref{w_i_conc}}
	
	\begin{proof}
	The lemma will follow from Lemma \ref{uniform_convergence_lemma} by setting $X\sim{\cal}{N}(\mu_i^*, I_d)$, $\cal{B}= \U$ and $W=w_i(X,\mu)$. To this end we  show that the  conditions of Lemma \ref{uniform_convergence_lemma} hold: (i) There exists  $L>1$ such that for any $\varepsilon>0$,  $\mathbb{E}_i\left[\sup_{\mu}|w_i(X,\mu)- w_i(X,\mu^{\varepsilon})|\right]\le L\varepsilon $
	for all $\mu^{\varepsilon}\in \U$ with $\max_{i\in [K]}\|\mu_i-\mu_i^{\varepsilon}\|\le \varepsilon$.  
	(ii) The sub-Gaussian norm of  $w_i(X,\mu)$ for $X\sim \mathcal{N}(\mu_i^*,I_d)$ is bounded by a constant. The latter is clear as $w_i$ is bounded. For the former we use the mean value theorem. There exists a point $\tilde \mu$ such that
	\begin{equation}
	|w_i(X,\mu)-w_i(X,\mu^{\varepsilon})|
	=| \nabla w_i(X,\tilde{\mu})^\top (\mu - \mu^{\varepsilon} )|  \nonumber 
	\end{equation}
	Using the expressions \eqref{Eq:gradw_ii} and \eqref{Eq:gradw_ij} for the gradient of $w_i$ with respect to $\mu$,     
	\begin{align}
	|w_i(X,\mu)-w_i(X,\mu^{\varepsilon})|
	&\le  
	\sup_{ \tilde\mu\in \U} \|w_i(X,\tilde\mu)(1-w_i(X,\tilde\mu))(X-\tilde{\mu}_i) \| \varepsilon \nonumber
	\\&+ 
	\sum_{j\neq i}
	\sup_{\tilde\mu \in \mathcal{U}_{\lambda}}\|
	w_i(X,\tilde\mu)w_j(X,\tilde\mu)
	(X-\tilde{\mu}_j) \| \varepsilon
	\nonumber
	\end{align}
	Since $0\le w_i(X,\mu)\le 1$ we get \begin{equation}
	\mathbb{E}_i\left[\sup_{\mu \in \mathcal{U}_{\lambda}} |w_i(X,\mu)-w_i(X,\mu^\varepsilon)|\right] \le \varepsilon \sum_{j=1}^K 
	\mathbb{E}_i\left[\sup_{\mu \in \mathcal{U}_{\lambda}}\|X-\mu_j \|\right] . 
	\nonumber
	\end{equation} 
	Since $X\sim {\cal{N}}(\mu_i^*,I_d)$,  we may write $X= \eta+\mu_i^*$ where $X\sim {\cal{N}}(0,I_d)$. Therefore, \begin{equation*}
	    \mathbb{E}_i\left[\sup_{\mu \in \mathcal{U}_{\lambda}}\|X-\mu_j \|\right] \le \mathbb{E}\|\eta\|+ \sup_{\mu\in \mathcal{U}_{\lambda}}\|\mu_i^*-\mu_j\|  \le \sqrt{d} + 2R_{\max} .
	\end{equation*}
	It follows that
	\begin{equation}
	\mathbb{E}_i\left[\sup_{\mu \in \mathcal{U}_{\lambda}} |w_i(X,\mu)-w_i(X,\mu^\varepsilon)|\right]\le K(\sqrt{d}+2R_{\max})\varepsilon.  \nonumber
	\end{equation} 		
	The lemma follows by plugging $L=K(\sqrt{d}+2R_{\max})$ and $r= R_{\max}$ into \eqref{etaeq}.
	
\end{proof}

\subsection{Proof of Lemma \ref{sample_denom_lemma}}
\begin{proof}
	We denote  the set of all samples $X_{\ell}$ generated from the $i$-th component by $I_i$,  $n_i=|I_i|$ and $\tilde{\pi}_i = n_i/n$. 
	Since $w_i(X,\mu)\ge 0$ for any $X,\mu$ we can lower bound the sum in the event $D_i$ by considering only  the terms $w_i(X_{\ell},\mu)$ with $X_{\ell}\in I_i$,   
	\begin{equation*}
	\frac{1}{n}\sum_{\ell=1}^nw_i(X_{\ell},\mu) \ge \frac1n\sum_{X_{\ell}\in I_i}w_i(X_{\ell},\mu)= \tilde{\pi}_i\frac1{n_i}\sum_{X_{\ell}\in I_i}w_i(X_{\ell},\mu)  .
	\end{equation*}
	With a suitably large constant $C$, the 
	sample size requirement \eqref{eq:sample_requirement_denom} implies that $ n>300 \frac{\log\frac{8K}{\delta}}{\pi_{\min}} $. 
	By the multiplicative form of the Chernoff bound for Bernoulli random variables, see e.g.  \citep[Exercise 2.3.5]{vershynin2018high}, we have for any $\delta\in (0,1)$ that $|\tilde{\pi}_i-\pi_i|\le \frac1{10}\pi_i$. Therefore, $\tilde{\pi}_i\ge \frac{9}{10}\pi_i$ and thus \begin{equation*}
	\frac{1}{n}\sum_{\ell=1}^nw_i(X_{\ell},\mu) \ge \frac{9\pi_i}{10}\frac1{n_i}\sum_{X_{\ell}\in I_i}w_i(X_{\ell},\mu) .
	\end{equation*} 
	Now, defining
	$ 
	d_i= \sup_{\mu \in \mathcal{U}_{\lambda}}\left(\frac1{n_i}\sum_{X_{\ell}\in I_i}w_i(X_{\ell},\mu)- \mathbb{E}_i\left[w_i(X,\mu)\right]\right)
	$ 
	we have
	\begin{equation*}
	\inf_{\mu\in\mathcal{U}_{\lambda}}\frac{1}{n_i}
	\sum_{X_\ell\in I_i} w_i(X_{\ell},\mu) 
	\ge
	\inf_{\mu\in\mathcal{U}_{\lambda}}\mathbb{E}_i\left[w_i(X,\mu)\right] - d_i.
	\end{equation*}
	Note that by Lemma \ref{w_i_conc}, with probability at least $1-\frac{\delta}{4K}$,  Eq. \eqref{eq:w_i_conc} holds.
	Since $n_i\ge \frac9{10}n\pi_i$, we may replace $n_i$ in Eq. \eqref{eq:w_i_conc} by $n\pi_i$, and increase the relevant constants to $\tilde{c}_1=\frac{10}9\tilde{c}$
	and $\tilde{C_1}= \frac{10}9 C$. 
	It thus follows that 
	\begin{align*}
	\inf_{\mu\in \mathcal{U}_{\lambda}} \frac{1}{n}\sum_{\ell=1}^nw_i(X,\mu)  &\ge\frac{9\pi_{i}}{10}\left(\inf_{\mu\in\mathcal{U}_{\lambda}}\mathbb{E}_{i}[w_{i}(X,\mu)]-\sqrt{\tilde{c}_1\frac{Kd\log(\frac{   \tilde{C_1}n}{\delta})}{n\pi_i}} \right).
	\end{align*}
	The condition on the sample size \eqref{eq:sample_requirement_denom} implies that $\sqrt{\tilde{c}_1\frac{Kd\log(\frac{   \tilde{C_1}n}{\delta})}{n\pi_i}} \le \frac1{10}$ and therefore, 
	\begin{equation*}
	\inf_{\mu\in \mathcal{U}_{\lambda}}  \frac{1}{n}\sum_{\ell=1}^nw_i(X,\mu) \ge\frac{9\pi_{i}}{10}\left(\inf_{\mu\in \mathcal{U}_{\lambda}} \mathbb{E}_{i}[w_{i}(X,\mu)]-\frac{1}{10}\right).
	\end{equation*}By Corollary \ref{Lem:w_exponentialy_close_to_1},
	\begin{equation*}
	\inf_{\mu\in \mathcal{U}_{\lambda}}\frac{1}{n}\sum_{\ell=1}^nw_i(X,\mu)\ge \frac{9\pi_i}{10}\left(\frac{9}{10}-(K-1)(1+\theta)e^{-c(\lambda)R_{\min}^2} \right). 
	\end{equation*}
	Under the separation requirement 
	\eqref{R_mindenombig}, it holds that $(K-1)(1+\theta)e^{-c(\lambda)R_{\min}^2} \le \frac1{15}$.   Hence, 
    the event $D_i$ \eqref{sample_denom_lemma_event} occurs with probability at least $1-\frac{\delta}{2K}$. 
\end{proof}

\subsection{Proof of Lemma \ref{Lemm:w_i(x-mu)sub-Gaussnorm}}

\begin{proof}
	Write $X=\eta+Z$ where $\eta\sim{\cal N}\left(0,I_{d}\right)$
	and $Z\in\{\mu_1^*,\ldots,\mu_K^*\}$ has a distribution  $\Pr\left(Z=\mu_{j}^{*}\right)=\pi_{j}$. 
	First we prove \eqref{eq:w_i(x-mu^*)sub-Gaussnorm}. 
	By the triangle inequality
	\[
	\|w_{i}\left(X,\mu\right)\left(X-\mu_{i}^{*}\right)\|_{\psi_{2}}\le\|w_{i}\left(X,\mu\right)\eta\|_{\psi_{2}}+\|w_{i}\left(X,\mu\right)\left(Z-\mu_{i}^{*}\right)\|_{\psi_{2}}.
	\]
	Since $w_{i}\le1$ and $\eta\sim \mathcal{N}(0,I_d)$, it follows that $\|w_{i}\left(X,\mu\right)\eta\|_{\psi_{2}}\le\|\eta\|_{\psi_{2}}$  \citep[Lemma B.1 part 5]{zhao2020statistical}. 
	Using the explicit formula for the moment generating function of a chi-squared distribution with $1$ degree of freedom, $\mathbb{E}[
	\exp(1/t^2(\eta^{\top}s)^2)]=(1-2/t^2)^{-1/2}$. It follows that $\|\eta\|_{\psi_2} \le 8$ and hence $\|w_i(X,\mu)\eta\|_{\psi_2} \le 8$.
	Next, we analyze the sub-Gaussian norm of the second term.
	We show that $\|w_i(X,\mu)(Z-\mu_i^*)\|_{\psi_2}\le 8\frac{1+2\lambda}{1-2\lambda}$. To this end we show that for  $t=8\frac{1+2\lambda}{1-2\lambda}$ and any $s\in S^{d-1}$, 
	\begin{equation}
	\sum_{j=1}^{K}\pi_{j}\mathbb{E}_{j}\left[\exp\left(\frac1{t^2}\left(w_{i}\left(X,\mu\right)\left(\mu_{j}^{*}-\mu_{i}^{*}\right)^{\top}s\right)^{2}\right)\right]\le 2.
	\label{eq:sub_gauss_componsnent}
	\end{equation}
	First, for $j=i$, $Z-\mu_i^*=0$ and thus the expectation is $1$.
	Now, consider any $j\ne i$. It holds that $(\mu_j^*-\mu_i^*)^{\top}s\le R_{ij}$. Therefore, 
	\[
	\mathbb{E}_{j}\left[\exp\left(\frac1{t^2}\left(w_{i}\left(X,\mu\right)\left(\mu_{j}^{*}-\mu_{i}^{*}\right)^{\top}s\right)^{2}\right)\right]
	\le
	\mathbb{E}_{j}\left[\exp\left(\frac1{t^2}\left(R_{ij}w_{i}\left(X,\mu\right)\right)^{2}\right)\right].
	\]
	By Equations \eqref{wi1d} and \eqref{wi1d2},   $w_{i}\left(\eta+\mu_{j}^{*},\mu\right)\le\frac{1}{1+\frac{\pi_{j}}{\pi_{i}}e^{A\nu+B}}=\tilde{w}_i(A,B,\nu)$
	where $\nu\sim{\cal N}\left(0,1\right)$, $A=\|\mu_{i}-\mu_{j}\|$
	and $B=\frac{1}{2}\|\mu_{j}^{*}-\mu_{i}\|^{2}-\frac{1}{2}\|\mu_{j}-\mu_{j}^{*}\|^{2}.$ This allows bounding the expectation over the $d$-dimensional random vector $\eta$ by an expectation over a univariate random variable $\nu$. 
	\begin{equation}
	\mathbb{E}_{j}\left[\exp\left(\frac1{t^2}\left(w_{i}\left(X,\mu\right)\left(\mu_{j}^{*}-\mu_{i}^{*}\right)^{\top}s\right)^{2}\right)\right]
	\le
	\mathbb{E}_{\nu}\left[\exp\left(\frac1{t^2}R_{ij}^{2}\tilde{w}_i(A,B,\nu)^2\right)\right] = E.
	\nonumber
	\end{equation}
	Next, we split the expectation over $\nu$ to two cases as follows,
	\begin{align}
	E  &= 
	\mathbb{E}\left[\exp\left(\frac1{t^2}R_{ij}^{2}\tilde{w}_i(A,B,\nu)^{2}\right)\Big|\nu<\frac{-B}{2A}\right]\Pr\left(\nu<\frac{-B}{2A}\right)
	\nonumber
	\\&
	+
	\mathbb{E}\left[\exp\left(\frac1{t^2}R_{ij}^{2}\tilde{w}_i(A,B,\nu)^{2}\right)\Big|\nu>\frac{-B}{2A}\right]\Pr\left(\nu>\frac{-B}{2A}\right)
	= E_1 + E_2.
	\label{eq:E_bound}
	\end{align}
	We now show  that $E_1\le \frac12$ and $E_2\le \frac32$, from which it follows that $E\le2$.
	
	First, consider the term $E_1$ in (\ref{eq:E_bound}). Note that $\Pr\left(\nu<\frac{-B}{2A}\right)\le e^{-\frac{B^{2}}{8A^{2}}}.$
	By Lemma \ref{Lemm:ABbounds},  $A\le (1+2\lambda)R_{ij}$ and  $B \ge \frac12 (1-2\lambda)R_{ij}^2$.
	It follows that 
	$
	\Pr\left(\nu<\frac{-B}{2A}\right)\le e^{-\frac{1}{32}\left(\frac{1-2\lambda}{1+2\lambda}\right)^{2}R_{ij}^{2}}
	$.
	Since $\tilde{w}_i(A,B,\nu) \leq 1$, we thus obtain by inserting $t=8\frac{1+2\lambda}{1-2\lambda}$, 
	\begin{equation}
	E_1 \leq 
	e^{\frac{R_{ij}^{2}}{t^{2}}} e^{-\frac{1}{32}\left(\frac{1-2\lambda}{1+2\lambda}\right)^{2}R_{ij}^{2}} =
	e^{-\frac{1}{64}\left(\frac{1-2\lambda}{1+2\lambda}\right)^{2}R_{ij}^{2}}.\nonumber
	\end{equation}
	Therefore,  for $R_{\min}$ satisfying \eqref{eq:minimal_sep_gauss_norm}, 
	$
	E_1\le \frac12.
	$

	Second, consider the term $E_2$ in (\ref{eq:E_bound}). Since $\nu>\frac{-B}{2A}$, then $A\nu+B>\frac{B}{2}$. 
	Thus,  $1/\tilde{w}_i(A,B,\nu)^2=(1+\frac{\pi_j}{\pi_i}e^{A\nu+B})^2>\frac{\pi_j^2}{\pi_i^2}e^B$. 
	That is, $\tilde{w}_i(A,B,\nu)^2\le \frac{\pi_i^2}{\pi_j^2}e^{-B}$.  By 	 Lemma \ref{Lemm:ABbounds},  $B\ge\frac{\left(1-2\lambda\right)}{2}R_{ij}^{2}$. 	
	Hence, $\tilde{w}_i(A,B,\nu)^2\le \frac{\pi_{i}^{2}}{\pi_{j}^{2}}e^{-\frac{\left(1-2\lambda\right)}{2}R_{ij}^{2}} $.
	Note that $\Pr\left(\nu>\frac{-B}{2A}\right)\le1$, hence it follows by plugging in $t=8\frac{1+2\lambda}{1-2\lambda}$,
	\begin{equation}
	E_2 
	\le
	\exp\left(\frac1{64(1+2\lambda)^2}(1-2\lambda)^2R_{ij}^{2}\frac{\pi_{i}^{2}}{\pi_{j}^{2}}e^{-\frac{\left(1-2\lambda\right)}{2}R_{ij}^{2}}\right).
	\nonumber
	\end{equation}
	The condition that the right hand side of the above is smaller than $\frac32$ 
	can be written as $we^{-w}\le a$, where $w= \frac{1-2\lambda}{2}R_{ij}^{2}$ and $a= \frac{\pi_{j}^{2}}{\pi_{i}^{2}}32\log\frac{3}{2}\frac{\left(1+2\lambda\right)}{1-2\lambda}^{2}$. Since for $w\ge 2\log\frac1a$, it holds that $we^{-w}<a$, we get    for our case that for $R_{\min}$ satisfying \eqref{eq:minimal_sep_gauss_norm},
	$
	E_2\le \frac32
	$.

	Since $E\le 2$ 
	, Eq. \eqref{eq:sub_gauss_componsnent}  holds. 
	Therefore $\|w_i(X,\mu)(Z-\mu_i^*)\|_{\psi_2}\le 8\frac{1+2\lambda}{1-2\lambda}$. Since $\|w_i(X,\mu)\eta\|_{\psi_2}\le 8$,  we get Eq. \eqref{eq:w_i(x-mu^*)sub-Gaussnorm}.
	
	The proof of Eq. \eqref{eq:w_i(x-mu)sub-Gaussnorm} is similar.
	We  analyze the sub-Gaussian norm of  $\|w_{i}\left(X,\mu\right)(Z-\mu_i)\|_{\psi_{2}}$.  
	Similarly to Eq. \eqref{eq:sub_gauss_componsnent}, we decompose the expectation to components. First consider the $i$'th component.
	Since $\|\mu_{i}^{*}-\mu_{i}\|\le\lambda R_{i}$, we have
	for all $s \in S^{d-1}$ that $(\mu_i^*-\mu_i)^{\top}s\le \lambda R_i$. Thus, 
	\[
	\mathbb{E}_{i}\left[\exp\left(\frac1{t^2}\left(w_{i}\left(X,\mu\right)\left(\mu_{i}^{*}-\mu_{i}\right)^{\top}s\right)^{2}\right)\right]
	\le
	\mathbb{E}_{i}\left[\exp\left(\frac1{t^2}\left(w_{i}\left(X,\mu\right)\lambda R_{i}\right)^{2}\right)\right].
	\]
	Hence for $t\ge\frac{\lambda R_{i}}{\sqrt{\log2}}$,  the last expression is smaller than $2$.
	Next, for any component $j$ with $j\ne i$ and any $s\in S^{d-1}$, we have $(\mu_j^*-\mu_i)^{\top}s\le R_{ij}+\|\mu_i-\mu_i^*\| \le \frac{3}{2}R_{ij}$. Hence,
	\begin{align*}
	\mathbb{E}_{j}\left[\exp\left(\frac1{t^2}\left(w_{i}\left(X,\mu\right)\left(\mu_{j}^{*}-\mu_{i}\right)^{\top}s\right)^{2}\right)\right]
	& \le
	\mathbb{E}_{j}\left[\exp\left(\frac1{t^2}\left(w_{i}\left(X,\mu\right)\frac{3}{2}R_{ij}\right)^{2}\right)\right].
	\end{align*}
	Since for $t\ge 8\frac{1+2\lambda}{1-2\lambda}$, $\mathbb{E}_{\eta\sim{\cal N}\left(0,I_{d}\right)}\left[\exp\left(\frac1{t^2}\left(w_{i}\left(\eta+\mu_{j}^{*},\mu\right)R_{ij}\right)^{2}\right)\right]\le2$, it follows that for $t\ge 12\frac{1+2\lambda}{1-2\lambda}$, the above is smaller than $2$.
	Thus, for any $s\in S^{d-1},$
	$t\ge\max\left(12\frac{1+2\lambda}{1-2\lambda},\frac{\lambda R_{i}}{\sqrt{\log2}}\right)=t_0$, 
	\[
	\mathbb{E}_{X}\left[\exp\left(\frac1{t^2}\left(w_{i}\left(X,\mu\right)\left(X-\mu_{i}\right)^{\top}s\right)^{2}\right)\right]\le2.
	\]
	Thus, $\|w_i(X,\mu)(Z-\mu_i)\|_{\psi_2}\le t_0$. Since $\|w_i(X,\mu)\eta\|_{\psi_2}\le 8$, Eq. \eqref{eq:w_i(x-mu)sub-Gaussnorm} follows.

\end{proof}

\subsection{Proof of Lemma \ref{lem:concentration_numerator}}

We first present the following auxiliary lemma. 

\begin{lemma}\label{l}
	Let $X\sim \GMM$ with $R_{\min}$ satisfying \eqref{eq:minimal_sep_gauss_norm}. Fix $\lambda\in (0,\frac12)$. For each $\mu\in {\cal{U}}_{\lambda}$ let $\mu^{\varepsilon}\in\U$ be such that $\max_{i\in[K]}\|\mu_i-\mu_i^{\varepsilon}\|<\varepsilon$. 
	Then, 	for $v\in\{\mu,\mu^*\}$, 
	\begin{equation}
	\label{eq:b4equation}
	\mathbb{E}_X\left[\sup_{\mu\in {\cal{U}}_{\lambda}}\|(w_i(X,\mu)-w_i(X,\mu^{\varepsilon}))(X-v_i)\|\right] \le K(\sqrt{d}+2R_{\max})^2\varepsilon.
	\end{equation}
\end{lemma}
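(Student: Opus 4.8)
The plan is to prove a Lipschitz-in-$\mu$ bound for the weight $w_i(X,\cdot)$ via the mean value theorem, just as in the proof of Lemma~\ref{w_i_conc}, and then absorb the extra factor $\|X-v_i\|$ into an expectation over a Gaussian. Since $\U$ is convex, for each fixed $X$ the mean value theorem gives a point $\tilde\mu$ on the segment joining $\mu$ and $\mu^\varepsilon$, so in particular $\tilde\mu\in\U$, with $w_i(X,\mu)-w_i(X,\mu^\varepsilon)=\nabla_\mu w_i(X,\tilde\mu)^\top(\mu-\mu^\varepsilon)$. Inserting the explicit gradient expressions \eqref{Eq:gradw_ii} and \eqref{Eq:gradw_ij}, bounding $w_i(1-w_i)\le 1$ and $w_iw_j\le 1$ since $0\le w_i\le 1$, and using $\|\mu_j-\mu_j^\varepsilon\|<\varepsilon$ for every $j$, I would obtain
\[
|w_i(X,\mu)-w_i(X,\mu^\varepsilon)|\le \varepsilon\sum_{j=1}^K\sup_{\nu\in\U}\|X-\nu_j\|.
\]

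Multiplying by $\|X-v_i\|$ and taking the supremum over $\mu\in\U$, and noting that whether $v=\mu$ or $v=\mu^*$ one has $v\in\U$ (because $\mu^*\in\U$), so that $\|X-v_i\|\le\sup_{\nu\in\U}\|X-\nu_i\|$ in both cases, gives
\[
\sup_{\mu\in\U}\big\|(w_i(X,\mu)-w_i(X,\mu^\varepsilon))(X-v_i)\big\|
\le \varepsilon\Big(\sup_{\nu\in\U}\|X-\nu_i\|\Big)\Big(\sum_{j=1}^K\sup_{\nu\in\U}\|X-\nu_j\|\Big).
\]
To pass to expectations, I would write $X=\eta+Z$ with $\eta\sim\mathcal N(0,I_d)$ and $Z\in\{\mu_1^*,\dots,\mu_K^*\}$; then for each $j$, $\sup_{\nu\in\U}\|X-\nu_j\|\le\|\eta\|+\|Z-\mu_j^*\|+\sup_{\nu\in\U}\|\mu_j^*-\nu_j\|\le\|\eta\|+R_{\max}+\lambda R_j\le\|\eta\|+2R_{\max}$. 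Hence the supremum above is at most $\varepsilon K(\|\eta\|+2R_{\max})^2$.

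Taking expectations and using $\mathbb{E}\|\eta\|^2=d$ together with $\mathbb{E}\|\eta\|\le\sqrt d$ yields $\mathbb{E}[(\|\eta\|+2R_{\max})^2]\le d+4R_{\max}\sqrt d+4R_{\max}^2=(\sqrt d+2R_{\max})^2$, which gives the claimed inequality $\mathbb{E}_X[\sup_{\mu\in\U}\|(w_i(X,\mu)-w_i(X,\mu^\varepsilon))(X-v_i)\|]\le K(\sqrt d+2R_{\max})^2\varepsilon$. There is no real obstacle in this argument; the only points needing a little care are the convexity of $\U$ (so the mean-value point stays in the region and the gradient bound is uniform), the remark that both $v=\mu$ and $v=\mu^*$ lie in $\U$ so a single crude bound handles them at once, and keeping the suprema over $\mu$ inside the expectation rather than after. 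I note in passing that the separation hypothesis \eqref{eq:minimal_sep_gauss_norm} is not actually used for this estimate; it is stated only because the lemma is invoked in a setting where it holds.
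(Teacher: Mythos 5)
Your proof is correct and follows essentially the same route as the paper: mean value theorem with the gradient formulas \eqref{Eq:gradw_ii}--\eqref{Eq:gradw_ij}, the crude bound $0\le w_i\le 1$, and the decomposition $X=\eta+Z$ to bound each $\sup_{\nu\in\U}\|X-\nu_j\|$ by $\|\eta\|+2R_{\max}$. If anything, your handling of the product term via $\mathbb{E}\left[(\|\eta\|+2R_{\max})^2\right]\le(\sqrt d+2R_{\max})^2$ is slightly more careful than the paper's step, which bounds the expectation of $\sup_{\mu\in\U}\|X-\mu_j\|\,\|X-v_i\|$ by the product of the two individual expectations, and your remark that the separation hypothesis \eqref{eq:minimal_sep_gauss_norm} is not actually used in this estimate is accurate.
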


\begin{proof}
	By the mean value theorem and  the expression for $\nabla_\mu w_i(X,\mu)$,   \eqref{Eq:gradw_ii} and \eqref{Eq:gradw_ij}, 
	\begin{align}
	\mathbb{E}_X\left[\sup_{\mu\in \mathcal{U}_{\lambda} } \|(w_i(X,\mu)-w_i(X,\mu^{\varepsilon}))(X-v_i)\|\right] 
	& \le   \sum_{j=1}^K\sup_{\mu\in\U}V_{ij}(\mu,v)  \varepsilon.
	\nonumber
	\end{align} 
	Since $0\le w_i(X,\mu)\le 1$, we get, 
	\begin{equation}
	\mathbb{E}_X\left[\sup_{\mu\in \mathcal{U}_{\lambda} } \|(w_i(X,\mu)-w_i(X,\mu^{\varepsilon}))(X-v_i)\|\right] 
	\le
	\sum_{j=1}^K \mathbb{E}_X\left[\sup_{\mu \in \mathcal{U}_{\lambda}}\|X-\mu_j \|\|X-v_i\|\right] \varepsilon. 
	\nonumber
	\end{equation} 
	Now, 
	$$
	\mathbb{E}_{X}\left[\sup_{\mu\in\mathcal{U}_{\lambda}}\|X-\mu_{j}\|\right]=\sum_{k=1}^{K}\pi_{k}\mathbb{E}_{k}\left[\sup_{\mu\in\mathcal{U}_{\lambda}}\|X-\mu_{j}\|\right]\le\sqrt{d}+2R_{\max}.
	$$
	Similarly, 
	$\mathbb{E}_X\left[\sup_{\mu \in \mathcal{U}_{\lambda}}\|X-v_i \| \right] \le  \sqrt{d} + 2R_{\max}$.  Eq. \eqref{eq:b4equation} now follows.
\end{proof}

\begin{proof}[Proof of Lemma \ref{lem:concentration_numerator}]
	The lemma will follow from Lemma \ref{uniform_convergence_lemma} with  $X\sim \GMM$, $\cal{B}= \U$, 
	$W=w_i(X,\mu)(X-\mu_i^*)$ and probability $\delta_0= \frac{\delta}{2K}$. To this end we need to show that the two conditions of Lemma \ref{uniform_convergence_lemma} hold: (i)  For any $\varepsilon>0$,  Eq. \eqref{eq:b4equation} holds 
	for all $\mu^{\varepsilon}\in \U$ with $\max_{i\in [K]}\|\mu_i-\mu_i^{\varepsilon}\|\le \varepsilon$. (ii) The sub-Gaussian norm of  $w_i(X,\mu)(X-\mu_i^*)$ for $X\sim \GMM$ is bounded by  $\frac{16}{1-2\lambda}$. The former follows from Lemma \ref{l}.  The latter follows from Lemma    \ref{Lemm:w_i(x-mu)sub-Gaussnorm} for $R_{\min}$ satisfying \eqref{eq:minimal_sep_gauss_norm}.
\end{proof}

\section{PROOFS FOR THE GRADIENT EM ALGORITHM}\label{app:gradient}

\begin{proof}[Proof of Theorem \ref{grad_EM_population_theorem}]
	Consider the error of the estimate for the $i$-th center after a single gradient EM update \eqref{gradient_EM_update}. By the triangle inequality,
	\begin{align}
	\|\mu_{i}^{+}-\mu_{i}^{*}\|&\le\|\mu_{i}-\mu_{i}^{*}
	+s\mathbb{E}_X[w_i(X,\mu^*)(X-\mu_i)]\|\nonumber\\&+s\|\mathbb{E}_X[(w_i(X,\mu)-w_i(X,\mu^*))(X-\mu_i)]\|. 
	\nonumber
	\end{align}
	We now separately upper bound each of the two terms above. For the first term, recall that $\mathbb{E}_X[w_i(X,\mu^*)] = \pi_i$ and by Lemma \ref{lem:popEMmustarfixed},  $\mathbb{E}_X[w_i(X,\mu^*)(X-\mu_i)]= \pi_i(\mu_i^*-\mu_i) $. 
	Hence, for any step size $s<1/\pi_i$,  
	$$
	\|\mu_{i}-\mu_{i}^{*}+s\mathbb{E}_X[w_i(X,\mu^*)(X-\mu_i)]\| \le (1-s\pi_i)\|\mu_i-\mu_i^*\|.
	$$
	Next, to bound the second term we use the expressions  in Eqs. \eqref{Eq:gradw_ii} and \eqref{Eq:gradw_ij}, 
	\begin{align}
	\|\mathbb{E}_X[(w_i(X,\mu)-w_i(X,\mu^*))(X-\mu_i)]\| 
	& \le 
	\sum_{j=1}^K\sup_{\mu \in \U}V_{i,j}(\mu,\mu)\|\mu_j-\mu_j^*\| 
	\label{eq:grad_bound_w}
	\end{align} 
	with  $V_{i,j}$ and $V_{i,i}$ as defined in \eqref{V_ij} and \eqref{V_ii}, respectively. 
	The proof proceeds similarly to that of the original EM algorithm. First, using the bounds \eqref{Viibound} and \eqref{Vijbound} in Lemma \ref{Lem:bound_expectation_grad}, for $R_{\min}$ satisfying the separation condition \eqref{minimal_separation}, it holds that $$\|\mathbb{E}_X\left[\left(w_i(X,\mu)-w_i(X,\mu^*)\right)(X-\mu_i)\right]\|\le \frac38\pi_{\min}.$$
	Therefore, 
	\begin{equation}
	\|\mu_i^+-\mu_i^*\| \le (1-\frac58s\pi_i) E\left(\mu\right).
	\nonumber
	\end{equation}
	We finish by showing that $\mu^+\in \U$. Replacing $\|\mu_j-\mu_j^*\|$ by $\lambda R_j$ in \eqref{eq:grad_bound_w}, and replacing $R_k$ by $R_{\min}$ in the bounds in  \eqref{Viibound} and \eqref{Vijbound} we get
	\begin{align*}
	\| \mathbb{E}_X[(w_i(X,\mu)- w_i(X,\mu^*))(X-\mu_i)] \|&
	\le
		\lambda R_{\min} \cdot \frac{ 3\pi_{\min}}{8} U\max\left(d_0,R_{\min}^2\right)e^{-\frac{c(\lambda)R_{\min}^2}{2}}.
	\end{align*}
	with $U= \frac{16\left(K-1\right)\sqrt{C\left(1+\theta\right)}}{3\pi_{\min}}$.
	Under the separation condition \eqref{minimal_separation}, the right hand side of the above is upper bounded by $\frac38\pi_{\min}$. Therefore,
	\begin{align*}
	\|\mu_{i}^{+}-\mu_{i}^{*}\|&
	\le
	(1-s\pi_{i}+\frac38s\pi_i)\lambda R_{i} <\lambda R_{i}.
	\end{align*}
\end{proof}

The next lemma presents a concentration result for the sample EM update.

\begin{lemma} \label{sample_numerator_lemma_and_Grad}
	Fix $\delta\in (0,1),\lambda\in (0,\frac12)$. Let $X_1, \ldots, X_n\sim\GMM$ with 	 $R_{\min}$ satisfying \eqref{eq:minimal_sep_gauss_norm}. 
	For $i\in [K]$
	define  $S_i^g=\frac1n\sum_{\ell=1}^nw_i(X_{\ell},\mu)(X_{\ell}-\mu_i)$ and the event
	\begin{equation}
	N_i^g =\left\{\sup_{\mu\in \mathcal{U}_{\lambda}} \left\|S_i- \mathbb{E}[w_i(X,\mu)(X-\mu_i)] \right\| \le  C \max\left(\frac{1}{1-2\lambda}, \lambda R_i\right)\sqrt{\frac{Kd\log\frac{\tilde{C}n}{\delta}}{n}}\right\}  \label{grad_evet}
	\end{equation}
	where $C$ is a suitable universal constant and $\tilde{C}= 18K^2R_{\max}(\sqrt d+2R_{\max})^2$. Then $N_i$ occurs with probability at least $1-\frac{\delta}{K}$.
\end{lemma}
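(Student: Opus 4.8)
The plan is to derive this lemma from the uniform convergence result, Lemma~\ref{uniform_convergence_lemma}, in exactly the manner that Lemma~\ref{lem:concentration_numerator} was obtained, only now applied to the vector field $W(X,\mu)=w_i(X,\mu)(X-\mu_i)$, taking $X\sim\GMM$, $\mathcal B=\U$, and confidence parameter $\delta_0=\delta/K$. As there, it remains to verify the two hypotheses of Lemma~\ref{uniform_convergence_lemma}: an averaged Lipschitz bound on $\E_X\!\left[\sup_{\mu}\|W(X,\mu)-W(X,\mu^\varepsilon)\|\right]$, and a uniform bound on the sub-Gaussian norm $\|W(X,\mu)\|_{\psi_2}$ over $\mu\in\U$.

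The one feature that is genuinely new relative to the $\mu^*$-centered numerator of Lemma~\ref{lem:concentration_numerator} is that now \emph{both} factors of $W$ depend on $\mu$. I would therefore decompose, for $\mu^\varepsilon\in\U$ with $\max_k\|\mu_k-\mu_k^\varepsilon\|\le\varepsilon$,
\[
W(X,\mu)-W(X,\mu^\varepsilon)=\bigl(w_i(X,\mu)-w_i(X,\mu^\varepsilon)\bigr)(X-\mu_i)+w_i(X,\mu^\varepsilon)(\mu_i^\varepsilon-\mu_i).
\]
The expected supremum of the norm of the first term is at most $K(\sqrt d+2R_{\max})^2\varepsilon$ by Lemma~\ref{l} applied with $v=\mu$, while the second term is bounded pointwise by $\varepsilon$ because $0\le w_i\le1$ and $\|\mu_i^\varepsilon-\mu_i\|\le\varepsilon$; hence hypothesis~1 of Lemma~\ref{uniform_convergence_lemma} holds with $L=K(\sqrt d+2R_{\max})^2+1$. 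For hypothesis~2, the separation requirement \eqref{eq:minimal_sep_gauss_norm} assumed in the statement is precisely the condition under which Lemma~\ref{Lemm:w_i(x-mu)sub-Gaussnorm} applies, and its bound \eqref{eq:w_i(x-mu)sub-Gaussnorm} gives $\|W(X,\mu)\|_{\psi_2}\le R:=24\max\bigl(\tfrac1{1-2\lambda},\lambda R_i\bigr)$ uniformly over $\mu\in\U$.

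Invoking Lemma~\ref{uniform_convergence_lemma} with these $L$ and $R$, and with $r\le R_{\max}$ an upper bound on the radii $\lambda R_k$ of the balls composing $\U$, then yields, with probability at least $1-\delta/K$,
\[
\sup_{\mu\in\U}\bigl\|S_i^g-\E_X[w_i(X,\mu)(X-\mu_i)]\bigr\|\le R\sqrt{\tilde c\,\frac{Kd\log(18nLr/\delta_0)}{n}}.
\]
Substituting $\delta_0=\delta/K$ and the values of $L$ and $r$ turns the argument of the logarithm into $\tilde C n/\delta$ with $\tilde C$ the stated $18K^2R_{\max}(\sqrt d+2R_{\max})^2$, and absorbing $24\sqrt{\tilde c}$ into a universal constant $C$ produces exactly the event $N_i^g$ of \eqref{grad_evet}. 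I do not expect any step here to be a real obstacle: the heavy lifting was already done in Lemmas~\ref{l}, \ref{Lemm:w_i(x-mu)sub-Gaussnorm} and \ref{uniform_convergence_lemma}. The only care required is bookkeeping, in particular retaining the extra term $w_i(X,\mu^\varepsilon)(\mu_i^\varepsilon-\mu_i)$ in the Lipschitz estimate (it has no analogue in Lemma~\ref{lem:concentration_numerator}, where the center $\mu_i^*$ is fixed) and checking that the universal constants collapse to the stated $\tilde C$ once $\delta_0=\delta/K$ is plugged in.
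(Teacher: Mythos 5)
Your proposal is correct and follows essentially the same route as the paper: invoke Lemma \ref{uniform_convergence_lemma} with $W(X,\mu)=w_i(X,\mu)(X-\mu_i)$, $X\sim\GMM$, $\mathcal{B}=\U$, verifying the Lipschitz condition through Lemma \ref{l} and the sub-Gaussian bound through Eq. \eqref{eq:w_i(x-mu)sub-Gaussnorm} of Lemma \ref{Lemm:w_i(x-mu)sub-Gaussnorm}. You are in fact a bit more careful than the paper, which cites Lemma \ref{l} directly and silently absorbs the extra term $w_i(X,\mu^\varepsilon)(\mu_i^\varepsilon-\mu_i)$ arising because the centering $\mu_i$ also moves with $\mu$; your explicit treatment of it (giving $L=K(\sqrt d+2R_{\max})^2+1$) only perturbs the constants.
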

\begin{proof}
	The lemma will follow from Lemma \ref{uniform_convergence_lemma} by setting $X\sim \GMM$, $\cal{B}= \U$ and $W=w_i(X,\mu)(x-\mu_i)$. To this end we need to show that the two conditions of Lemma \ref{uniform_convergence_lemma} hold: (i)  For any $\varepsilon>0$, Eq \eqref{eq:b4equation} holds
	for all $\mu^{\varepsilon}$ with $\max_{i\in [K]}\|\mu_i-\mu_i^{\varepsilon}\|\le \varepsilon$. (ii) The sub-Gaussian norm of  $w_i(X,\mu)(X-\mu_i)$ for $X\sim \GMM$ is bounded by  $C\max(\frac{1}{1-2\lambda}, \lambda R_i)$. The former follows from Lemma \ref{l}.  The latter follows from Lemma    \ref{Lemm:w_i(x-mu)sub-Gaussnorm} for $R_{\min}$ satisfying \eqref{eq:minimal_sep_gauss_norm}.
\end{proof}

With the pieces in place we now prove Theorem \ref{sample_grad_em_main_theorem}.
\begin{proof}[Proof of Theorem \ref{sample_grad_em_main_theorem}]
	
	Consider  the error of the $i$-th cluster of the sample gradient EM update \eqref{eq:sample_grad_em_update},
	\begin{align}
	\|\mu_{i}^{*}-\mu_{i}^{+}\|
	&\le\left\Vert \mu_{i}-\mu_{i}^{*}-s\mathbb{E}[w_{i}(X,\mu)(X-\mu_{i})]\right\Vert
	\nonumber\\ &
	+s\left\Vert \mathbb{E}[w_{i}(X,\mu)(X-\mu_{i})]-\frac{1}{n}\sum_{\ell=1}^{n}w_{i}(X_{\ell},\mu)(X_{\ell}-\mu_{i})\right\Vert \nonumber 
	\end{align}
	Theorem \ref{grad_EM_population_theorem} implies that for $R_{\min}$ satisfying \eqref{minimal_separation}, it holds that
	$$
	\left\Vert \mu_{i}-\mu_{i}^{*}-s\mathbb{E}[w_{i}(X,\mu)(X-\mu_{i})]\right\Vert \le \gamma \min( E(\mu), \lambda R_i)
	$$
	with  $\gamma =1-\frac58s\pi_{\min}$.
	We therefore bound the second term above. Since the requirement on $R_{\min}$ \eqref{minimal_separation} is more restrictive than the requirement \eqref{eq:minimal_sep_gauss_norm} we may  invoke Lemma \ref{sample_numerator_lemma_and_Grad} and obtain that with probability at least $1- \frac{\delta}K$, the event $N_i^g$ \eqref{grad_evet}  occurs. Thus,
	\begin{align}
	\|\mu_{i}^{*}-\mu_{i}^{+}\|& \le  \gamma \min( E(\mu), \lambda R_i)+sC\max\left(\frac{1}{1-2\lambda},\lambda R_{i}\right)\sqrt{\frac{Kd\log\left(\frac{\tilde{C}n}{\delta}\right)}{n}}.
	\label{bound_gradientEM}
	\end{align}
	The sample size condition \eqref{minimal_sample_grad},    implies that $Cs\max\left(\frac{1}{1-2\lambda},\lambda R_{i}\right)\sqrt{\frac{Kd\log\left(\frac{\tilde{C}n}{\delta}\right)}{n}}\le  \frac38s\pi_{\min}\lambda R_{i} \le  \lambda(1-\gamma) R_i  $. Taking a union bound over the $K$ components, $\mu^+\in \U$ with probability at least $1-\delta$.  We can therefore iteratively apply \eqref{bound_gradientEM} and obtain  \begin{align*}
	\|\mu_i^t-\mu_i^*\| &\le \gamma^t E(\mu)+sC\frac{1}{1-\gamma} \max\left(\frac1{1-2\lambda}, \lambda R_i\right) \sqrt{\frac{Kd\log\left(\frac{\tilde{C}n}{\delta}\right)}{n}}.
	\end{align*}
	Since $\frac{s}{1-\gamma} = \frac8{5\pi_i}$, we get Eq. \eqref{convergence_up_to_statistical_errorgrad}.
\end{proof}
\section*{Acknowledgements}
We thank the associate editor and anonymous referee for several constructive suggestions.


\bibliographystyle{imsart-number} 
\bibliography{main.bib}       

\begin{thebibliography}{23}

\bibitem{achlioptas2005spectral}
\begin{binproceedings}[author]
\bauthor{\bsnm{Achlioptas},~\bfnm{Dimitris}\binits{D.}} \AND
  \bauthor{\bsnm{McSherry},~\bfnm{Frank}\binits{F.}}
(\byear{2005}).
\btitle{On spectral learning of mixtures of distributions}.
In \bbooktitle{International Conference on Computational Learning Theory}
\bpages{458--469}.
\bpublisher{Springer}.
\end{binproceedings}
\endbibitem

\bibitem{arora2005learning}
\begin{barticle}[author]
\bauthor{\bsnm{Arora},~\bfnm{Sanjeev}\binits{S.}},
  \bauthor{\bsnm{Kannan},~\bfnm{Ravi}\binits{R.}} \betal{et~al.}
(\byear{2005}).
\btitle{Learning mixtures of separated nonspherical Gaussians}.
\bjournal{The Annals of Applied Probability}
\bvolume{15}
\bpages{69--92}.
\end{barticle}
\endbibitem

\bibitem{balakrishnan2017statistical}
\begin{barticle}[author]
\bauthor{\bsnm{Balakrishnan},~\bfnm{Sivaraman}\binits{S.}},
  \bauthor{\bsnm{Wainwright},~\bfnm{Martin~J}\binits{M.~J.}},
  \bauthor{\bsnm{Yu},~\bfnm{Bin}\binits{B.}} \betal{et~al.}
(\byear{2017}).
\btitle{Statistical guarantees for the EM algorithm: From population to
  sample-based analysis}.
\bjournal{The Annals of Statistics}
\bvolume{45}
\bpages{77--120}.
\end{barticle}
\endbibitem

\bibitem{dasgupta1999learning}
\begin{binproceedings}[author]
\bauthor{\bsnm{Dasgupta},~\bfnm{Sanjoy}\binits{S.}}
(\byear{1999}).
\btitle{Learning mixtures of Gaussians}.
In \bbooktitle{40th Annual Symposium on Foundations of Computer Science (Cat.
  No. 99CB37039)}
\bpages{634--644}.
\bpublisher{IEEE}.
\end{binproceedings}
\endbibitem

\bibitem{JMLR:v8:dasgupta07a}
\begin{barticle}[author]
\bauthor{\bsnm{Dasgupta},~\bfnm{Sanjoy}\binits{S.}} \AND
  \bauthor{\bsnm{Schulman},~\bfnm{Leonard}\binits{L.}}
(\byear{2007}).
\btitle{A Probabilistic Analysis of EM for Mixtures of Separated, Spherical
  Gaussians}.
\bjournal{Journal of Machine Learning Research}
\bvolume{8}
\bpages{203-226}.
\end{barticle}
\endbibitem

\bibitem{daskalakis2017ten}
\begin{binproceedings}[author]
\bauthor{\bsnm{Daskalakis},~\bfnm{Constantinos}\binits{C.}},
  \bauthor{\bsnm{Tzamos},~\bfnm{Christos}\binits{C.}} \AND
  \bauthor{\bsnm{Zampetakis},~\bfnm{Manolis}\binits{M.}}
(\byear{2017}).
\btitle{Ten steps of EM suffice for mixtures of two Gaussians}.
In \bbooktitle{Conference on Learning Theory}
\bpages{704--710}.
\end{binproceedings}
\endbibitem

\bibitem{dempster1977maximum}
\begin{barticle}[author]
\bauthor{\bsnm{Dempster},~\bfnm{Arthur~P}\binits{A.~P.}},
  \bauthor{\bsnm{Laird},~\bfnm{Nan~M}\binits{N.~M.}} \AND
  \bauthor{\bsnm{Rubin},~\bfnm{Donald~B}\binits{D.~B.}}
(\byear{1977}).
\btitle{Maximum likelihood from incomplete data via the EM algorithm}.
\bjournal{Journal of the Royal Statistical Society: Series B (Methodological)}
\bvolume{39}
\bpages{1--22}.
\end{barticle}
\endbibitem

\bibitem{hardt2015tight}
\begin{binproceedings}[author]
\bauthor{\bsnm{Hardt},~\bfnm{Moritz}\binits{M.}} \AND
  \bauthor{\bsnm{Price},~\bfnm{Eric}\binits{E.}}
(\byear{2015}).
\btitle{Tight bounds for learning a mixture of two gaussians}.
In \bbooktitle{Proceedings of the forty-seventh annual ACM symposium on Theory
  of computing}
\bpages{753--760}.
\end{binproceedings}
\endbibitem

\bibitem{hsu2013learning}
\begin{binproceedings}[author]
\bauthor{\bsnm{Hsu},~\bfnm{Daniel}\binits{D.}} \AND
  \bauthor{\bsnm{Kakade},~\bfnm{Sham~M}\binits{S.~M.}}
(\byear{2013}).
\btitle{Learning mixtures of spherical gaussians: moment methods and spectral
  decompositions}.
In \bbooktitle{Proceedings of the 4th conference on Innovations in Theoretical
  Computer Science}
\bpages{11--20}.
\end{binproceedings}
\endbibitem

\bibitem{jin2016local}
\begin{binproceedings}[author]
\bauthor{\bsnm{Jin},~\bfnm{Chi}\binits{C.}},
  \bauthor{\bsnm{Zhang},~\bfnm{Yuchen}\binits{Y.}},
  \bauthor{\bsnm{Balakrishnan},~\bfnm{Sivaraman}\binits{S.}},
  \bauthor{\bsnm{Wainwright},~\bfnm{Martin~J}\binits{M.~J.}} \AND
  \bauthor{\bsnm{Jordan},~\bfnm{Michael~I}\binits{M.~I.}}
(\byear{2016}).
\btitle{Local maxima in the likelihood of gaussian mixture models: Structural
  results and algorithmic consequences}.
In \bbooktitle{Advances in neural information processing systems}
\bpages{4116--4124}.
\end{binproceedings}
\endbibitem

\bibitem{kalai2010efficiently}
\begin{binproceedings}[author]
\bauthor{\bsnm{Kalai},~\bfnm{Adam~Tauman}\binits{A.~T.}},
  \bauthor{\bsnm{Moitra},~\bfnm{Ankur}\binits{A.}} \AND
  \bauthor{\bsnm{Valiant},~\bfnm{Gregory}\binits{G.}}
(\byear{2010}).
\btitle{Efficiently learning mixtures of two Gaussians}.
In \bbooktitle{Proceedings of the forty-second ACM symposium on Theory of
  computing}
\bpages{553--562}.
\end{binproceedings}
\endbibitem

\bibitem{kannan2008spectral}
\begin{barticle}[author]
\bauthor{\bsnm{Kannan},~\bfnm{Ravindran}\binits{R.}},
  \bauthor{\bsnm{Salmasian},~\bfnm{Hadi}\binits{H.}} \AND
  \bauthor{\bsnm{Vempala},~\bfnm{Santosh}\binits{S.}}
(\byear{2008}).
\btitle{The spectral method for general mixture models}.
\bjournal{SIAM Journal on Computing}
\bvolume{38}
\bpages{1141--1156}.
\end{barticle}
\endbibitem

\bibitem{kwon2020algorithm}
\begin{binproceedings}[author]
\bauthor{\bsnm{Kwon},~\bfnm{Jeongyeol}\binits{J.}} \AND
  \bauthor{\bsnm{Caramanis},~\bfnm{Constantine}\binits{C.}}
(\byear{2020}).
\btitle{The EM Algorithm gives Sample-Optimality for Learning Mixtures of
  Well-Separated Gaussians}.
In \bbooktitle{Proceedings of Thirty Third Conference on Learning Theory}
(\beditor{\bfnm{Jacob}\binits{J.}~\bsnm{Abernethy}} \AND
  \beditor{\bfnm{Shivani}\binits{S.}~\bsnm{Agarwal}}, eds.).
\bseries{Proceedings of Machine Learning Research}
\bvolume{125}
\bpages{2425--2487}.
\bpublisher{PMLR}.
\end{binproceedings}
\endbibitem

\bibitem{moitra2010settling}
\begin{binproceedings}[author]
\bauthor{\bsnm{Moitra},~\bfnm{Ankur}\binits{A.}} \AND
  \bauthor{\bsnm{Valiant},~\bfnm{Gregory}\binits{G.}}
(\byear{2010}).
\btitle{Settling the polynomial learnability of mixtures of gaussians}.
In \bbooktitle{2010 IEEE 51st Annual Symposium on Foundations of Computer
  Science}
\bpages{93--102}.
\bpublisher{IEEE}.
\end{binproceedings}
\endbibitem

\bibitem{pearson1894contributions}
\begin{barticle}[author]
\bauthor{\bsnm{Pearson},~\bfnm{Karl}\binits{K.}}
(\byear{1894}).
\btitle{Contributions to the mathematical theory of evolution}.
\bjournal{Philosophical Transactions of the Royal Society of London. A}
\bvolume{185}
\bpages{71--110}.
\end{barticle}
\endbibitem

\bibitem{regev2017learning}
\begin{binproceedings}[author]
\bauthor{\bsnm{Regev},~\bfnm{Oded}\binits{O.}} \AND
  \bauthor{\bsnm{Vijayaraghavan},~\bfnm{Aravindan}\binits{A.}}
(\byear{2017}).
\btitle{On learning mixtures of well-separated gaussians}.
In \bbooktitle{2017 IEEE 58th Annual Symposium on Foundations of Computer
  Science (FOCS)}
\bpages{85--96}.
\bpublisher{IEEE}.
\end{binproceedings}
\endbibitem

\bibitem{vempala2004spectral}
\begin{barticle}[author]
\bauthor{\bsnm{Vempala},~\bfnm{Santosh}\binits{S.}} \AND
  \bauthor{\bsnm{Wang},~\bfnm{Grant}\binits{G.}}
(\byear{2004}).
\btitle{A spectral algorithm for learning mixture models}.
\bjournal{Journal of Computer and System Sciences}
\bvolume{68}
\bpages{841--860}.
\end{barticle}
\endbibitem

\bibitem{vershynin2018high}
\begin{bbook}[author]
\bauthor{\bsnm{Vershynin},~\bfnm{Roman}\binits{R.}}
(\byear{2018}).
\btitle{High-dimensional probability: An introduction with applications in data
  science}
\bvolume{47}.
\bpublisher{Cambridge university press}.
\end{bbook}
\endbibitem

\bibitem{wu1983convergence}
\begin{barticle}[author]
\bauthor{\bsnm{Wu},~\bfnm{CF~Jeff}\binits{C.~J.}}
(\byear{1983}).
\btitle{On the convergence properties of the EM algorithm}.
\bjournal{The Annals of statistics}
\bpages{95--103}.
\end{barticle}
\endbibitem

\bibitem{xu2016global}
\begin{binproceedings}[author]
\bauthor{\bsnm{Xu},~\bfnm{Ji}\binits{J.}},
  \bauthor{\bsnm{Hsu},~\bfnm{Daniel~J}\binits{D.~J.}} \AND
  \bauthor{\bsnm{Maleki},~\bfnm{Arian}\binits{A.}}
(\byear{2016}).
\btitle{Global analysis of expectation maximization for mixtures of two
  gaussians}.
In \bbooktitle{Advances in Neural Information Processing Systems}
\bpages{2676--2684}.
\end{binproceedings}
\endbibitem

\bibitem{xu1996convergence}
\begin{barticle}[author]
\bauthor{\bsnm{Xu},~\bfnm{Lei}\binits{L.}} \AND
  \bauthor{\bsnm{Jordan},~\bfnm{Michael~I}\binits{M.~I.}}
(\byear{1996}).
\btitle{On convergence properties of the EM algorithm for Gaussian mixtures}.
\bjournal{Neural computation}
\bvolume{8}
\bpages{129--151}.
\end{barticle}
\endbibitem

\bibitem{yan2017convergence}
\begin{barticle}[author]
\bauthor{\bsnm{Yan},~\bfnm{Bowei}\binits{B.}},
  \bauthor{\bsnm{Yin},~\bfnm{Mingzhang}\binits{M.}} \AND
  \bauthor{\bsnm{Sarkar},~\bfnm{Purnamrita}\binits{P.}}
(\byear{2017}).
\btitle{Convergence analysis of gradient EM for multi-component gaussian
  mixture}.
\bjournal{arXiv preprint arXiv:1705.08530}.
\end{barticle}
\endbibitem

\bibitem{zhao2020statistical}
\begin{barticle}[author]
\bauthor{\bsnm{Zhao},~\bfnm{Ruofei}\binits{R.}},
  \bauthor{\bsnm{Li},~\bfnm{Yuanzhi}\binits{Y.}},
  \bauthor{\bsnm{Sun},~\bfnm{Yuekai}\binits{Y.}} \betal{et~al.}
(\byear{2020}).
\btitle{Statistical convergence of the EM algorithm on Gaussian mixture
  models}.
\bjournal{Electronic Journal of Statistics}
\bvolume{14}
\bpages{632--660}.
\end{barticle}
\endbibitem

\end{thebibliography}


\end{document}